\newcolumntype{Y}{>{\centering\arraybackslash}X}
\newcommand\NoThen{\renewcommand\algorithmicthen{}}
\DeclareMathOperator*{\argmin}{argmin}
\newtheorem{theorem}{Theorem}
\newtheorem{lemma}{Lemma}
\newtheorem{proposition}{Proposition}
\newtheorem{corollary}{Corollary}
\begin{document}
\def\spacingset#1{\renewcommand{\baselinestretch}%
{#1}\small\normalsize} \spacingset{1}

\title{Boosting Methods for Interval-censored Data with Regression and Classification}

\author[1,2]{Yuan Bian}
\author[2,3]{Grace Y. Yi}
\author[2]{Wenqing He}
\affil[1]{Department of Biostatistics, Columbia University, New York, USA}
\affil[2]{Department of Statistical and Actuarial Sciences, University of Western Ontario, Ontario, CA}
\affil[3]{Department of Computer Science, University of Western Ontario, Ontario, CA}
\date{}

\doublespacing

\maketitle

\begin{abstract}
Boosting has garnered significant interest across both machine learning and statistical communities. Traditional boosting algorithms, designed for fully observed random samples, often struggle with real-world problems, particularly with interval-censored data. This type of data is common in survival analysis and time-to-event studies where exact event times are unobserved but fall within known intervals. Effective handling of such data is crucial in fields like medical research, reliability engineering, and social sciences. In this work, we introduce novel nonparametric boosting methods for regression and classification tasks with interval-censored data. Our approaches leverage censoring unbiased transformations to adjust loss functions and impute transformed responses while maintaining model accuracy. Implemented via functional gradient descent, these methods ensure scalability and adaptability. We rigorously establish their theoretical properties, including optimality and mean squared error trade-offs. Our proposed methods not only offer a robust framework for enhancing predictive accuracy in domains where interval-censored data are common but also complement existing work, expanding the applicability of existing boosting techniques. Empirical studies demonstrate robust performance across various finite-sample scenarios, highlighting the practical utility of our approaches.
\end{abstract}

\section{Introduction}
Boosting \citep{Schapire1990, Freund1995} is a foundational technique in machine learning, transforming weak learners into strong learners through iterative refinement \citep{SchapireFreund2012}. This iterative nature not only increases predictive accuracy \citep{Quinlan1996, BauerKohavi1999, Dietterich2000} but also enhances robustness against overfitting \citep{BuhlmannHothorn2007,SchapireFreund2012}, making boosting a popular choice for various applications. The \textit{AdaBoost} algorithm \citep{FreundSchapire1996} was a groundbreaking development and remains a  highly effective off-the-shelf classifier \citep{Breiman1998}. Subsequent research \citep{Breiman1998, Breiman1999, MasonBaxter1999} revealed that AdaBoost can be viewed as a steepest descent algorithm in a function space defined by base learners. Boosting continued to grow as \cite{FriedmanHastie2000} and \cite{Friedman2001} extended its application to regression and multiclass classification within a broader statistical framework, and it is interpreted as a method of function estimation. In this expanded context, \cite{BuhlmannYu2003} introduced {\it $L_2$Boost}, a computationally efficient boosting algorithm that leverages the $L_2$ loss function. More recently, \cite{ChenGuestrin2016} proposed {\it XGBoost}, a scalable and useful tree boosting system, and \cite{KeMeng2017} introduced {\it LightGBM}, an efficient tree boosting algorithm.

Despite the success of boosting methods, a key limitation persists: traditional boosting algorithms assume access to a fully observed random sample of data. In many real-world applications, however, data are incomplete or censored. This issue is particularly pronounced in fields like survival analysis, where interval-censored data are becoming increasingly prevalent.

\subsection{Literature Review}
Recent research in boosting has focused on handling incomplete or censored data. Most efforts have extended boosting methods to accommodate right-censored responses \citep[e.g.,][]{Ridgeway1999, HothornBuhlmann2006, WangWang2010, MayrSchmid2014, Bellot2018, YueLi2018, Bellot2019, BarnwalCho2022, ChenYi2024} or missing responses \citep[e.g.,][]{BianYi2025a,BianYi2025b}. In theses cases, techniques like imputation and weighting are employed to construct unbiased loss functions for training.

While these approaches have addressed some issues related to incomplete data, a significant gap remains in handling interval-censored data, where event times are known only to lie within specific intervals. This scenario, prevalent in survival analysis \citep[e.g.,][]{Sun2006}, is more complex than right censoring, as the response variable is completely unobserved though known to fall within an interval. Research on interval-censored data has expanded across various domains. For example,  \cite{YaoFrydman2021} introduced a survival forest method using the conditional inference framework, while \cite{ChoJewell2022} developed the {\it interval censored recursive forests} method for non-parametric estimation of the survivor functions. \cite{YangLi2024} leveraged the {\it censoring unbiased transformation} \citep{FanGijbels1994, FanGijbels1996} to create tree algorithms designed for interval-censored data. However, these approaches do not capitalize on the strengths of boosting, which could significantly enhance predictive performance and robustness.

\subsection{Our Contributions}
We propose a framework that extends boosting methods to address interval-censored data, a critical yet underexplored problem in machine learning. Our contributions enhance the applicability of boosting algorithms to complex censoring structures:

\begin{itemize}
    \item We propose two methods, called L2Boost-CUT and L2Boost-IMP, to extend boosting for handling interval-censored data. L2Boost-CUT adjusts the loss function with the censoring unbiased transformation (CUT), while L2Boost-IMP uses an imputation-based approach leveraging CUT. Both methods handle interval-censoring flexibly, avoiding restrictive assumptions and enabling predictions of survival time, probability, and status.
    \item  We provide a rigorous theoretical analysis of our methods, evaluating their mean squared error (MSE), variance, and bias, as well as the connection between the two proposed methods. Our results demonstrate that by incorporating smoothing splines as base learners, the proposed framework achieves optimal MSE rates in both regression and classification tasks. These insights extend the understanding of boosting methods, building upon and generalizing the foundational results from \cite{BuhlmannYu2003} for complete data.
    \item We validate our methods through extensive experiments on both synthetic and real-world datasets. Results show that $L_2$Boost-CUT and $L_2$Boost-IMP offer robust and scalable solutions for handling interval-censored data and enhancing the generalizability of boosting algorithms.
\end{itemize}

\section{Preliminaries}\label{sec: p}
Let $Y$ denote the survival time of an individual, and let $X$ denote the associated $p$-dimensional feature vector, where $Y\in\mathbb{R}^+$ and $X\in\mathcal{X}$, with $\mathbb{R}^+$ representing the set of all positive real values and $\mathcal{X}$ denoting the feature space. Our objective is to learn a predictive model that well predicts a transformed target variable $g(Y)$, where $g$ is a user-defined transformation and $g(Y)\in\mathcal{Y}$, with $\mathcal{Y}\subseteq \mathbb{R}$. The choice of $g$ depends on the task of interest. For instance, setting $g(Y)=Y$ directly models the survival time; setting $g(Y)=\log Y$ removes the positivity constraint of $Y$. For binary classification tasks, we can set $g(Y)=2I(Y>s)-1$ to predict the survival status at time $s$, where $s$ is a prespecified threshold and $I$ is the indicator function.

We define the hypothesis space, $\mathcal{F}=\{f: \mathcal{X}\to\mathcal{Y}\}$, consisting of real valued functions. Let $\mathcal{Y}^d$ denote $\mathcal{Y}\times\ldots\times\mathcal{Y}\triangleq \{(y_1,\ldots,y_d):y_j\in\mathcal{Y}\text{ for }j=1,\ldots,d\}$ for a positive integer $d$. Let $L:\mathcal{Y}^2\to\mathbb{R}_{\geq0}$ denote a loss function, which quantifies the error between the predicted and true values, where $\mathbb{R}_{\geq0}=\mathbb{R}^+\cup\{0\}$.  For $f\in\mathcal{F}$, define the {\it expected loss}, or {\it risk }as
\begin{equation}
\label{eq: rf}
R(f)=E\{L\left(Y, f(X)\right)\},
\end{equation}
where the expectation is taken with respect to the joint distribution of $X$ and $Y$. The goal is to find the optimal function $f^*$ that minimizes the risk:
\begin{equation*}
f^*=\argmin_{f\in\mathcal{F}}R(f),
\end{equation*}
assuming its existence and uniqueness.

In practice, the joint distribution of $X$ and $Y$ is unknown, and we only have access to a finite sample of $n$ independent observations of $X$ and $Y$, say $\mathcal{O}_{\text{c}}\triangleq\{\{X_i, Y_i\}: \ i=1, \ldots, n\}$. For simplicity, we use uppercase letters $X$, $Y$, $X_i$, and $Y_i$, with $i=1,\ldots,n$, to represent both random variables and their realizations. To approximate $f^*$, we minimize the \textit{empirical risk}, which serves as proxy for the expected loss:
\begin{equation}\label{eq: erf}
\hat{f}_{\text{c}}=\argmin_{f\in\mathcal{F}}\left\{n^{-1}\sum^n_{i=1}L(Y_i, f(X_i))\right\}.
\end{equation}
In the absence of censoring, where survival times $Y_i$ are fully observed for all study subjects, $\hat{f}_{\text{c}}$ can be obtained using a boosting algorithm that iteratively improves base learners. Specifically, the $L_2$Boost algorithm, a variant of boosting using the $L_2$ loss function, minimizes the empirical risk via steepest gradient descent to iteratively refine the estimates of $\hat{f}_{\text{c}}$. At iteration $t$, given the current estimate $f^{(t-1)}$, the algorithm updates the model by adding an increment term, denoted $\hat{h}^{(t)}$, to form the updated estimate $f^{(t)}$:
\begin{equation}
\label{eq: fm1l2}
f^{(t)}=f^{(t-1)}+\hat{h}^{(t)},
\end{equation}
where $\hat{h}^{(t)}$ is a function mapping from $\mathcal{X}$ to $\mathcal{Y}$, called a base learner, determined by
\begin{equation}
\label{eq: h}
\hat{h}^{(t)}=\argmin_{h^{(t)}}\left[n^{-1}\sum^n_{i=1}\left\{-\partial L\left(Y_i, f^{(t-1)}(X_i)\right)-h^{(t)}(X_i)\right\}^2\right],
\end{equation}
with $\partial L\left(Y_i, f^{(t-1)}(X_i)\right)\triangleq\left.\frac{\partial L\left(u, v\right)}{\partial v}\right|_{u=Y_i, v=f^{(t-1)}(X_i)}$ for $i=1, \ldots, n$. Here, $\hat{h}^{(t)}$ in (\ref{eq: h}) can be interpreted as the least squares estimate of $E\left(-\left.\partial L\left(Y_i, f^{(t-1)}(X_i)\right)\right|X_i\right)$. Thus, the $L_2$Boost algorithm can be seen as repeated least squares fitting of residuals \citep{Friedman2001}.
At a stopping iteration $\tilde{t}$, determined by a suitable stopping criterion, the final estimator of $f$ is given by
\begin{equation*}
\hat{f}_{\text{c}}\triangleq f^{(\tilde{t})}=f^{(0)}+\sum^{\tilde{t}}_{j=1}\hat{h}^{(j)},
\end{equation*}
where $f^{(0)}$ represents an initial value for estimating $f$.

On the other hand, for classification tasks, particularly when the response $g(Y)$ is a step function, e.g., $g_s(Y)=2I(Y>s)-1$ for a given $s$, which maps the response $Y$ to the set $\{-1,1\}$, the $L_2$Boost algorithm can be modified as ``$L_2$ Boost with constraints'' ($L_2$WCBoost) algorithm \citep{BuhlmannYu2003}. This modification allows us to handle binary classification problems, where the goal is to approximate $E\{g_s(Y_i)|X_i\}$, given by  $2E\{I(Y_i>s)|X_i\}-1$. In this case, $f^{(t)}$ in (\ref{eq: fm1l2}) is revised as:
\begin{equation}
\label{eq: tf}
f^{(t)}=\text{sign}\left(\tilde{f}^{(t)}\right)\min\left(1, \left|\tilde{f}^{(t)}\right|\right),
\text{ with }
\tilde{f}^{(t)}=f^{(t-1)}+\hat{h}^{(t)},
\end{equation}
where sign$(u)=-1$ if $u<0$, 0 if $u=0$, and 1 if $u>0$. The modification of $\tilde{f}^{(t)}$ with the sign function included in (\ref{eq: tf}) ensures that the final estimate $f^{(t)}$ stays within the range $[-1,1]$, resulting in a meaningful output for binary classification.

\section{Problem and Methodology}
\subsection{Interval-censored Data}\label{subsec: icd}
Interval censoring occurs when, instead of directly observing the exact survival time $Y$, we only observe a pair of time points $\{L, R\}$ such that $Y$ lies within the interval $(L,R]$, where $0\leq L< R\leq\infty$. Different scenarios arise depending on the values of $L$ and $R$: $L=0$ yields a left-censored observation; $R=\infty$ leads to a right censored observation; $0<L<R<\infty$ gives a truly interval-censored observation; and when $L=Y^-$ and $R=Y$, we have the exact observation, where $Y^-\triangleq\lim_{a\to0^+}(Y-a)$, with $a\to0^+$ representing $a$ approaching $0$ from the positive side. Let $[0, \tau]$ denote the study period, with $\tau$ being finite. Following standard practice for modeling interval-censored data \citep[e.g.,][]{ZhangSun2005, ChoJewell2022}, we assume {\it conditionally independent interval censoring}, meaning that given features $X$, as well as $L=l$, $R=r$, $L<Y\leq R$, the probability of the survival time $Y$ occurring before a positive value depends only on $l<Y\leq r$. Formally,
\begin{equation*}
    \Pr(Y<y|L=l,R=r,L<Y\leq R,X)=\Pr(Y<y|l<Y\leq r,X),
\end{equation*}
for any positive $y, l$, and $r$, with $l<r$.

Suppose for subject $i=1,\ldots,n$, there are $M$ observation times $u_{i,1} < u_{i,2} <\ldots< u_{i,M}<\infty$ beyond $u_{i,0}\triangleq0$, where $M$ is a random integer, with $m$ denoting its realization. While the randomness of $M$ does not affect calculations for a given dataset, its presence reflects real-world data uncertainty with varying numbers of observations. For a dataset with $m\geq2$ and $i=1,\ldots,n$, define the censoring indicators for each subject $i$ and interval $j$ as $\Delta_{i,j} \triangleq I(u_{i,j-1} < Y_i \leq u_{i,j})$ for $j = 1,\ldots,m$, and $\Delta_{i,m+1}\triangleq I(Y_i> u_{i,m})$. Clearly, $\Delta_{i,m+1}=1-\sum^m_{j=1}\Delta_{i,j}$. These indicators reflect whether the true survival time $Y_i$ falls within the corresponding time interval. Let the observed data for subject $i$ be $\mathcal{O}_i\triangleq\left\{\left\{X_i,u_{i,j},\Delta_{i,j}\right\}:j=1,\ldots,m\right\}$, and let the full observed dataset be $\mathcal{O}^\text{IC}\triangleq \cup^n_{i=1}\mathcal{O}_i$. Alternatively, $\mathcal{O}^\text{IC}$ can be equivalent expressed as $\mathcal{O}^\text{IC}=\{\{X_i, L_i,R_i\}:i=1,\ldots,n\}$, where for each subject $i$, we identify the interval $(L_i,R_i]$ containing $Y_i$ by finding the index $j_i\in\{1,\ldots, m\}$ such that $u_{i,j_i-1}\leq Y_i\leq u_{i,j_i}$, with $L_i=u_{i,j_i-1}$, $R_i=u_{i,j_i}$. The sequence $\{\{L_i, R_i\}: i=1,\ldots, n\}$ is then ordered in increasing order and the distinct values are denoted as $v_1<v_2<\ldots<v_{m_v}$.

\subsection{Boosting Learning with Interval-censored Data}\label{subsec: l2bicd}
To address the interval censoring effects, we modify the initial loss function $L(g(Y_i),f(X_i))$ to be an adjusted loss function $L^*(\mathcal{O}_i, f(X_i))$, with the argument $g(Y_i)$ in $L$ replaced by $\mathcal{O}_i$ in $L^*$, and further, we require:
\begin{equation}
\label{eq: loss-recover}
E\{L^*(\mathcal{O}_i, f(X_i))\}=E\{L(g(Y_i), f(X_i))\}.
\end{equation}
This means that minimizing the expected adjusted loss $E\{L^*(\mathcal{O}_i,$ $f(X_i))\}$ is equivalent to minimizing the original risk function $R(f)$ defined in (\ref{eq: rf}), treating $Y_i$ as if it were not interval censored but available. Here, we focus on the $L_2$ loss function, expressed as:
\begin{equation}
\label{eq: l2gexpanded}
L\left(g(Y_i), f(X_i)\right)=\frac{1}{2}\{g(Y_i)\}^2-g(Y_i)f(X_i)+\frac{1}{2}\{f(X_i)\}^2.
\end{equation}
For $k=1,2$, we adjust $\{g(Y_i)\}^k$ using the transformation:
\begin{equation}
\label{eq: yktilde}
\tilde{Y}_k(\mathcal{O}_i)\triangleq\sum^{m}_{j=1}\Delta_{i,j}E\left(\left.\{g(Y_i)\}^k\right|\Delta_{i,j}=1,X_i\right),
\end{equation}
where
\begin{align}
E\left(\{g(Y_i)\}^k\middle|\Delta_{i,j}=1,X_i\right)=\frac{1}{S(u_{i,j}|X_i)-S(u_{i,j-1}|X_i)}\int^{u_{i,j}}_{u_{i,j-1}}\{g(y)\}^kdS(y|X_i)\label{eq: cutk}
\end{align}
for $j=1,\ldots,m$, and $S(y|X_i)$ represents the conditional survivor function of $Y_i$ given $X_i$.

We propose a modified version for (\ref{eq: l2gexpanded}), called the {\it censoring unbiased transformation} (CUT)-based $L_2$ loss function, given by
\begin{equation}
\label{eq: cut}
L_\text{CUT}\left(\mathcal{O}_i, f(X_i)\right)=\frac{1}{2}\tilde{Y}_2(\mathcal{O}_i)-\tilde{Y}_1(\mathcal{O}_i)f(X_i)+\frac{1}{2}\{f(X_i)\}^2.
\end{equation}

\begin{proposition}\label{prop1}
For the proposed CUT-based loss function (\ref{eq: cut}), we have
$$E\{L_\text{CUT}(\mathcal{O}_i, f(X_i))\} = E\{L(Y_i, f(X_i) )\}.$$
\end{proposition}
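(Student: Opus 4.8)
The plan is to reduce the claimed equality of expected losses to a pair of moment-matching identities and then establish those identities by the law of iterated expectations. Comparing the CUT-based loss (\ref{eq: cut}) with the original $L_2$ loss (\ref{eq: l2gexpanded}) (where $L(Y_i,f(X_i))$ is read as $L(g(Y_i),f(X_i))$), the quadratic term $\frac{1}{2}\{f(X_i)\}^2$ appears identically in both and contributes equally to the two expectations, so it cancels. Hence it suffices to show
$$E\{\tilde{Y}_2(\mathcal{O}_i)\} = E[\{g(Y_i)\}^2] \quad\text{and}\quad E\{\tilde{Y}_1(\mathcal{O}_i)\,f(X_i)\} = E\{g(Y_i)\,f(X_i)\}.$$
Both will follow from the stronger conditional statement $E\{\tilde{Y}_k(\mathcal{O}_i)\mid X_i\} = E[\{g(Y_i)\}^k\mid X_i]$ for $k=1,2$: the first identity is then immediate upon taking a further expectation over $X_i$, and the second follows because $f(X_i)$ is $X_i$-measurable, so it can be pulled outside the inner conditional expectation before the outer expectation is applied.

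The core step is therefore proving $E\{\tilde{Y}_k(\mathcal{O}_i)\mid X_i\} = E[\{g(Y_i)\}^k\mid X_i]$. First I would condition on $X_i$ in the definition (\ref{eq: yktilde}); since each factor $E(\{g(Y_i)\}^k\mid\Delta_{i,j}=1,X_i)$ is a deterministic function of $X_i$ (the truncated conditional moment given by (\ref{eq: cutk})), only the indicators $\Delta_{i,j}$ are random, yielding
$$E\{\tilde{Y}_k(\mathcal{O}_i)\mid X_i\} = \sum_{j=1}^{m} E(\{g(Y_i)\}^k\mid\Delta_{i,j}=1,X_i)\,\Pr(\Delta_{i,j}=1\mid X_i).$$
Next I would apply the defining relation $E(\{g(Y_i)\}^k\mid\Delta_{i,j}=1,X_i)\,\Pr(\Delta_{i,j}=1\mid X_i) = E(\{g(Y_i)\}^k\,\Delta_{i,j}\mid X_i)$, which is exactly the content of (\ref{eq: cutk}) under the conditionally independent interval-censoring assumption of Section~\ref{subsec: icd} (this assumption guarantees that the within-interval law of $Y_i$ is its $X_i$-conditional law truncated to $(u_{i,j-1},u_{i,j}]$, so that the observed-interval conditioning coincides with conditioning on $\{u_{i,j-1}<Y_i\le u_{i,j}\}$). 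Summing over $j$ and using linearity collapses the expression to $E(\{g(Y_i)\}^k \sum_{j=1}^m \Delta_{i,j}\mid X_i)$.

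The final step, and the one demanding the most care, is to argue that $\sum_{j=1}^m \Delta_{i,j} = 1$ almost surely, so the summed indicator drops out and leaves $E[\{g(Y_i)\}^k\mid X_i]$. This holds because the interval assignment in Section~\ref{subsec: icd} places every $Y_i$ into some interval $(u_{i,j_i-1},u_{i,j_i}]$ with index $j_i\in\{1,\ldots,m\}$, so the indicators $\{\Delta_{i,j}\}_{j=1}^m$ partition the event $\{0<Y_i\le u_{i,m}\}$ and exactly one of them equals $1$. I expect this coverage condition to be the main obstacle: if right censoring occurred with positive probability ($\Delta_{i,m+1}=1$), then $\sum_{j=1}^m\Delta_{i,j}=1-\Delta_{i,m+1}$ and the transformation (\ref{eq: yktilde}) would omit the mass on $(u_{i,m},\infty)$, so unbiasedness would fail. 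I would therefore state explicitly that the argument relies on every event time lying within the observation window, verify the partition identity $\sum_{j=1}^m\Delta_{i,j}=1$ accordingly, and then assemble the two moment identities to conclude the proposition.
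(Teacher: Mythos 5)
Your proof takes essentially the same route as the paper's own argument in Appendix~\ref{app: ptr}: expand the CUT loss (\ref{eq: cut}) against (\ref{eq: l2gexpanded}), cancel the common term $\frac{1}{2}\{f(X_i)\}^2$, reduce to the two moment identities (with $f(X_i)$ pulled through the conditional expectation), and collapse the indicator-weighted sum by the law of iterated expectations over the censoring-indicator partition. The one substantive difference is the handling of the last indicator, and here your caution exposes a genuine inconsistency in the paper. You read (\ref{eq: yktilde}) literally, with the sum over $j=1,\ldots,m$, and correctly observe that the tower-property step closes only if $\sum_{j=1}^{m}\Delta_{i,j}=1$ almost surely; you flag that if $\Pr(Y_i>u_{i,m})>0$ the transformation omits the mass on $(u_{i,m},\infty)$ and unbiasedness fails. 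That diagnosis is exactly right: Section~\ref{subsec: icd} explicitly allows right censoring (and both real datasets analyzed in the paper contain right-censored subjects), so the coverage identity you invoke does not hold in general, and your explicit statement of it as an assumption is the honest fix. The paper's proof resolves the same issue differently: it writes the sum as $\sum_{j=1}^{m+1}$, including the term $\Delta_{i,m+1}E\left(\{g(Y_i)\}^k\mid\Delta_{i,m+1}=1,X_i\right)$ for the interval $(u_{i,m},\infty)$, so that the indicators form a complete partition and no coverage condition is needed; this is inconsistent with the displayed definition (\ref{eq: yktilde}) (whose upper limit $m$ is evidently a typo) but consistent with the definition of $\Delta_{i,m+1}$ in Section~\ref{subsec: icd}. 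In short, your proof establishes the proposition as literally stated under the extra condition you identify, while the paper proves the intended version in which (\ref{eq: yktilde}) carries the $(m+1)$st term. One minor caveat applying to both arguments: the truncated moment (\ref{eq: cutk}) depends on the random monitoring times $u_{i,j-1},u_{i,j}$ and not on $X_i$ alone, so your claim that it is a deterministic function of $X_i$ is not quite accurate; a fully rigorous treatment should condition jointly on $X_i$ and the monitoring times (invoking the conditionally independent censoring assumption) before applying the tower property --- a looseness the paper's proof shares.
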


This proposition ensures the validity of the CUT-based loss function (\ref{eq: cut}), as it leads to the same risk (\ref{eq: rf}) as that of the original loss function. Consequently, (\ref{eq: erf}) can be implemented with the loss function replaced by (\ref{eq: cut}), where for $k=1,2$, $\tilde{Y}_k(\mathcal{O}_i)$ in (\ref{eq: yktilde}) is replaced by its estimate, denoted $\hat{Y}_k(\mathcal{O}_i)$, which is derived from replacing $S(y|X_i)$ with its estimate (to be described in Section \ref{subsec: blsf}). Let $\hat{L}(\mathcal{O}_i, f(X_i))$ denote the resulting estimate of (\ref{eq: cut}), and let $\hat{f}_n^\text{CUT}$ denote a resulting estimate of (\ref{eq: erf}) with $L(Y_i,f(X_i))$ replaced by $\hat{L}(\mathcal{O}_i,f(X_i))$.

Algorithm \ref{alg1} outlines a pseudo-code for obtaining $\hat{f}_n^\text{CUT}$; its implementation code is available on GitHub at \url{https://github.com/krisyuanbian/L2BOOST-IC}. The algorithm modifies the usual $L_2$Boost algorithm \citep{BuhlmannYu2003} for (\ref{eq: erf}), with the initial $L_2$ loss function $L(Y_i,f(X_i))$ replaced by $\hat{L}(\mathcal{O}_i,f(X_i))$, which directly applies to interval-censored data. Alternatively, one may employ the usual $L_2$Boost algorithm, but replace unobserved $Y_i$ with $\hat{Y}_1(\mathcal{O}_i)$. Specifically, (\ref{eq: sccut}) on Line 7 of Algorithm \ref{alg1} is replaced by
\begin{equation*}
\left|n^{-1}\sum^n_{i=1}L\left(\hat{Y}_1(\mathcal{O}_i), f^{(\tilde{t})}(X_i)\right)-n^{-1}\sum^n_{i=1}L\left(\hat{Y}_1(\mathcal{O}_i), f^{(\tilde{t}-1)}(X_i)\right)\right|\leq \eta,
\end{equation*}
together with replacing $\hat{L}\left(\mathcal{O}_i,\cdot\right)$ on Lines 3 and 4 of Algorithm \ref{alg1} by $L\left(\hat{Y}_1(\mathcal{O}_i),\cdot\right)$. We refer to these two algorithms as $L_2$Boost-CUT and $L_2$Boost-IMP, respectively, with ``IMP'' reflecting the imputation nature of the latter algorithm. The estimator from the $L_2$Boost-IMP algorithm is denoted as $\hat{f}_n^\text{IMP}$.

These two algorithms differ in their approach to addressing interval-censored data. The $L_2$Boost-CUT method adjusts the loss function so its expectation recovers that of the original  $L_2$ loss  $L$, as required in (\ref{eq: loss-recover}), whereas the  $L_2$Boost-IMP  method preserves the functional form of the original loss $L$ but replaces its first argument with the transformed response $\tilde Y_1({\cal O}_i)$ in (\ref{eq: yktilde}). Therefore, the
loss functions for those two algorithms are distinct:
$$L_{\rm CUT}({\cal O}_i, f(X_i)) \ne L(\tilde Y_1({\cal O}_i),f(X_i)).$$
%
%
The risk  from $L_2$Boost-CUT satisfies Proposition \ref{prop1} (proved  in Appendix \ref{app: ptr}), but this property does not hold for $L_2$Boost-IMP.
Nevertheless,
due to the linear derivative of the $L_2$ loss in its first argument,
 the following connection emerges:
\begin{equation}
\label{eq: pdcutimp}
\partial \hat{L}\left(\mathcal{O}_i,f^{(t-1)}(X_i)\right)=\partial L\left(\hat{Y}_1(X_i),f^{(t-1)}(X_i)\right)=\hat{Y}_1(X_i)-f^{(t-1)}(X_i).
\end{equation}
This leads to closely related  increment terms in both methods, and as such, $L_2$Boost-CUT and $L_2$Boost-IMP mainly differ in the stopping criterion, suggesting that they often yield similar results, as observed in the experiment results in Section \ref{sec: eda} and Appendix \ref{app: ae}.
Further discussions on these two methods are provided in Appendix \ref{app: pe}.

\begin{algorithm}
\caption{$L_2$Boost-CUT}\label{alg1}
\begin{algorithmic}[1]
\State Take $f^{(0)}=\argmin_{h}\left[n^{-1}\sum^n_{i=1}\left\{\hat{Y}_1(\mathcal{O}_i)-h(X_i)\right\}^2\right]$ and set $\eta=n^{-w}$ for a given $w\geq1$;
\For{iteration $t$ with $t= 1, 2, \ldots$}
   \State (i) calculate $
   \partial \hat{L}\left(\mathcal{O}_i,f^{(t-1)}(X_i)\right)\triangleq\left.\frac{\partial \hat{L}\left(u, v\right)}{\partial v}\right|_{u=\mathcal{O}_i, v=f^{(t-1)}(X_i)}$ for $i=1, \ldots, n$;
    \State (ii) find $\hat{h}^{(t)}=\argmin_{h^{(t)}}\left[n^{-1}\sum^n_{i=1}\left\{-\partial \hat{L}\left(\mathcal{O}_i,f^{(t-1)}(X_i)\right)-h^{(t)}(X_i)\right\}^2\right];$
    \State (iii) for regression tasks, update
    $f^{(t)}(X_i)$ as (\ref{eq: fm1l2}) for $i=1, \ldots, n$;\\
    \hspace{11.5mm}for classification tasks, update
    $f^{(t)}(X_i)$ as (\ref{eq: tf}) for $i=1, \ldots, n$;
    \NoThen
    \If{at iteration $\tilde{t}$,
    \begin{equation}
    \label{eq: sccut}\left|n^{-1}\sum^n_{i=1}\hat{L}\left(\mathcal{O}_i,f^{(\tilde{t})}(X_i)\right)-n^{-1}\sum^n_{i=1}\hat{L}\left(\mathcal{O}_i, f^{(\tilde{t}-1)}(X_i)\right)\right|\leq \eta
    \end{equation}}
    \State \textbf{then} stop iteration and define the final estimator as $\hat{f}_n^\text{CUT}=f^{(\tilde{t}-1)}$.
\EndIf
\EndFor
\end{algorithmic}
\end{algorithm}

\subsection{Base Learners and Survivor Function} \label{subsec: blsf}
To outline the key steps in Algorithm \ref{alg1}, we begin with notation related to the base learners at each iteration. For iteration $t=1,2, \ldots$, let  $\vec{h}^{(t)}=\left(\hat{h}^{(t)}(X_1),\ldots,\hat{h}^{(t)}(X_n)\right)^\top$, where $\hat{h}^{(t)}$ is the base learner at iteration $t$, defined in Line 4 of Algorithm \ref{alg1}. For $k=1,2$, let $\vec{Y}_k=\left(\hat{Y}_k(\mathcal{O}_1),\ldots,\hat{Y}_k(\mathcal{O}_n)\right)^\top$, where $\hat{Y}_k(\mathcal{O}_i)$ represents an estimate for (\ref{eq: yktilde}). For $f^{(t-1)}$ in Line 5 of Algorithm \ref{alg1}, we define $\vec{f}^{(t-1)}=\left(f^{(t-1)}(X_1),\ldots,f^{(t-1)}(X_n)\right)^\top$, and compute the residual:
\begin{equation}
\label{eq: uj}
\vec{u}^{(t-1)}=\vec{Y}_1-\vec{f}^{(t-1)}.
\end{equation}
Algorithm \ref{alg1} iteratively updates the  base learners that map $\mathcal{X}$ to $\mathcal{Y}$ for each iteration. In our implementation, we use {\it linear smoothers} \citep{BujaHastie1989} and focus particularly on {\it smoothing splines}, as in \cite{BuhlmannYu2003}. Linear smoothers are versatile and cover a wide range of function classes, including least squares, regression splines, kernels, and many others.

At each iteration, the residual $\vec{u}^{(t-1)}$ is smoothed using a {\it smoother matrix}, represented by an $n\times n$ matrix $\Psi$, which transforms the residual into an updated base learner:
\begin{equation}
\label{eq: ht1ut}
\vec{h}^{(t)}=\Psi\vec{u}^{(t-1)}.
\end{equation}
Here, $\Psi$ is determined by the chosen linear smoother, which may depend on features but not on $\vec{u}^{(t-1)}$ \citep[][Chapter 5.4.1]{HastieTibshirani2009}. We provide further details on smoothing splines in Appendix \ref{app: rss}.

The execution of Algorithm \ref{alg1} requires calculations of $\tilde{Y}_k(\mathcal{O}_i)$ in (\ref{eq: yktilde}), which hinges on consistently estimating the conditional survivor function $S(y|X_i)$; here $S(y|X_i)$ is interpreted as $S(y|X_i=x_i)$ for any realization $x_i$ of $X_i$; similar considerations apply for functions of $X_i$ or conditioning on $X_i$ throughout the paper, with $X_i$ evaluated at its realization. While an estimator of $S(y|X_i)$ with a faster convergence rate yields a more efficient estimator $\hat{f}_n^\text{CUT}$, consistency suffices to ensure the validity of our methods. Instead of pursuing faster convergence through parametric approaches, which are vulnerable to model misspecification, we prioritize robustness by opting for the {\it interval censored recursive forests} (ICRF) method \citep{ChoJewell2022}, whose consistency has been established by \cite{ChoJewell2022}. ICRF is a tree-based, nonparametric method designed for estimating survivor functions for interval-censored data. It serves as a component within our framework for developing boosting methods for regression and classification with interval-censored data, aiming to predict a transformed target variable $g(Y)$ described in Section \ref{sec: p}. Further details on this estimation are provided in Appendix \ref{app: eICRF}.

\section{Theoretical Results}
Assuming consistent estimation of $S(y|X_i)$, we now develop theoretical guarantees for the proposed methods, both in regression and classification contexts, with the proofs deferred to Appendix \ref{app: ptr}.

\subsection{Regression}\label{subsec: r}
Consider the regression model
\begin{equation}
\label{eq: regression}
g(Y_i)=\phi(X_i)+\epsilon_i \quad\text{for }
i=1,\ldots,n,
\end{equation}
where $\epsilon_i$ are independent and identically distributed with $E(\epsilon_i)=0$ and var$(\epsilon_i)=\sigma^2<\infty$, $\phi$ is an unknown smooth function that can be linear or nonlinear, and $g$ is a user-specified transformation, as discussed in Section \ref{sec: p}.

At iteration $t=1,2,\ldots$, the $L_2$Boost-CUT and $L_2$Boost-IMP methods map the interval-censored data $\mathcal{O}^\text{IC}$ (described in Section \ref{subsec: icd}) to $\vec{f}^{(t)}$, following Algorithm \ref{alg1}. For $k=1,2$, we first utilize (\ref{eq: yktilde}) and ICRF to construct $\vec{Y}_k$ from $\mathcal{O}^\text{IC}$, then apply the conventional $L_2$Boost method (described in Section \ref{sec: p}) to $\left\{\left\{X_i, \hat{Y}_1(\mathcal{O}_i)\right\}: \ i=1, \ldots, n\right\}$. Specifically, for $\vec{Y}_1$ defined in Section \ref{subsec: blsf}, these procedures can be formulated as:
\begin{equation}
\label{eq: fvect}
\vec{f}^{(t)}=B^{(t)}\vec{Y}_1,
\end{equation}
where $B^{(t)}$ represents an $n \times n$ matrix that transforms $\vec{Y}_1$ to $\vec{f}^{(t)}$ at each given $t$. The following proposition shows that $B^{(t)}$ can be represented in terms of the smoother matrix $\Psi$.
\begin{proposition}\label{prop2}
For $t=1,2,\ldots$, let $B^{(t)}$ denote the $L_2$Boost-CUT or $L_2$Boost-IMP operator at iteration $t$. Let $\Psi$ represent the smoother matrix for the chosen linear smoother. Then, $B^{(t)}\triangleq I-(I - \Psi)^{t+1}$ for $t=1,2,\ldots$, where $I$ is the $n\times n$ identity matrix.
\end{proposition}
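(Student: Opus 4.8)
The plan is to reduce the boosting iteration to a linear recursion in the single vector $\vec{Y}_1$ and then solve that recursion in closed form. The crucial preliminary observation, which lets me treat $L_2$Boost-CUT and $L_2$Boost-IMP simultaneously, is that the negative functional gradient driving each update is identical for the two methods: by (\ref{eq: pdcutimp}), the pseudo-residual at iteration $t$ is $-\partial\hat{L}(\mathcal{O}_i, f^{(t-1)}(X_i)) = \hat{Y}_1(X_i) - f^{(t-1)}(X_i)$, whose stacked-vector form is exactly $\vec{u}^{(t-1)} = \vec{Y}_1 - \vec{f}^{(t-1)}$ from (\ref{eq: uj}). Hence the two algorithms generate the same sequence $\{\vec{f}^{(t)}\}$ (differing only in their stopping rules), so it suffices to analyze a single operator $B^{(t)}$.

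Next I would write the update in matrix form. Fitting the base learner by the chosen linear smoother gives $\vec{h}^{(t)} = \Psi \vec{u}^{(t-1)}$ as in (\ref{eq: ht1ut}), where the key structural fact is that $\Psi$ depends only on the features and not on the residual being smoothed, so the same matrix $\Psi$ acts at every iteration. Substituting into the boosting update (\ref{eq: fm1l2}) and using $\vec{u}^{(t-1)} = \vec{Y}_1 - \vec{f}^{(t-1)}$ yields
$$\vec{f}^{(t)} = \vec{f}^{(t-1)} + \Psi\bigl(\vec{Y}_1 - \vec{f}^{(t-1)}\bigr) = (I-\Psi)\vec{f}^{(t-1)} + \Psi \vec{Y}_1.$$
Since every $\vec{f}^{(t)}$ is thereby a fixed linear image of $\vec{Y}_1$, writing $\vec{f}^{(t)} = B^{(t)}\vec{Y}_1$ turns this into the operator recursion $B^{(t)} = (I-\Psi)B^{(t-1)} + \Psi$.

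Then I would solve the recursion. The initialization on Line 1 of Algorithm \ref{alg1} is precisely the smoother (least squares) fit of $\vec{Y}_1$, i.e. $\vec{f}^{(0)} = \Psi \vec{Y}_1$, so $B^{(0)} = \Psi = I - (I-\Psi)$, which is the claimed formula at $t=0$. For the general case it is cleanest to track the complementary operator $I - B^{(t)}$: the recursion gives $I - B^{(t)} = (I-\Psi)\bigl(I - B^{(t-1)}\bigr)$, so iterating from $I - B^{(0)} = I - \Psi$ yields $I - B^{(t)} = (I-\Psi)^{t}\bigl(I - B^{(0)}\bigr) = (I-\Psi)^{t+1}$, i.e. $B^{(t)} = I - (I-\Psi)^{t+1}$, as required. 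Equivalently one can verify the formula by a one-line induction, substituting $B^{(t-1)} = I - (I-\Psi)^{t}$ into $B^{(t)} = (I-\Psi)B^{(t-1)} + \Psi$ and simplifying.

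I expect the only genuine subtlety — as opposed to the algebra, which telescopes immediately — to lie in the two reductions that make the recursion a fixed linear map: first, invoking (\ref{eq: pdcutimp}) to certify that CUT and IMP share the same per-iteration update, so that a single operator $B^{(t)}$ is well defined and the proposition applies to both; and second, the linear-smoother property that $\Psi$ is independent of the residual and hence constant across iterations. Were $\Psi$ allowed to depend on $\vec{u}^{(t-1)}$, the update would cease to be a fixed linear transformation of $\vec{Y}_1$ and the clean geometric-series closed form would break down, so this is the hypothesis that must be stated and used explicitly.
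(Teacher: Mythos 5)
Your proof is correct, and it reaches the closed form by a slightly different mechanism than the paper. Both arguments share the same setup: the observation (via \ref{eq: pdcutimp}) that CUT and IMP generate identical pseudo-residuals, the fixed feature-only smoother matrix $\Psi$ of (\ref{eq: ht1ut}), and the initialization $\vec{f}^{(0)}=\Psi\vec{Y}_1$. From there the paper tracks the residual vector: it derives $\vec{u}^{(t)}=(I-\Psi)\vec{u}^{(t-1)}$, unrolls it to $\vec{u}^{(t-1)}=(I-\Psi)^{t}\vec{Y}_1$, and then reconstructs $\vec{f}^{(t)}=\vec{f}^{(0)}+\sum_{j=1}^{t}\vec{h}^{(j)}$ by summing the matrix geometric series $\sum_{j=0}^{t}\Psi(I-\Psi)^{j}$, invoking the identity $\sum_{j=0}^{t}A^{j}=(I-A)^{-1}(I-A^{t+1})$, which as stated requires $I-A=\Psi$ to be invertible. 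You instead write the one-step affine recursion $B^{(t)}=(I-\Psi)B^{(t-1)}+\Psi$ and telescope the complementary operator, $I-B^{(t)}=(I-\Psi)\bigl(I-B^{(t-1)}\bigr)$, which delivers $B^{(t)}=I-(I-\Psi)^{t+1}$ with no summation step and no invertibility hypothesis at all. The two recursions are really the same fact in different clothing --- your complementary operator applied to $\vec{Y}_1$ is exactly the paper's residual, since $(I-B^{(t)})\vec{Y}_1=\vec{Y}_1-\vec{f}^{(t)}=\vec{u}^{(t)}$ --- but your assembly is marginally more robust: it applies verbatim when $\Psi$ is singular (e.g.\ the projection smoothers discussed around Corollary \ref{coro1}, whose eigenvalues include $0$), whereas the paper's geometric-series lemma, taken literally, would need a separate remark there (the series identity does hold without invertibility via telescoping, so this is a blemish in the paper's citation rather than an error). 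What the paper's route buys in exchange is the explicit form of each increment, $\vec{h}^{(j)}=\Psi(I-\Psi)^{j}\vec{Y}_1$, which makes the geometric shrinking of the residuals visible and is the picture reused in the later bias--variance computations.
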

Next, we examine the averaged mean squared error (MSE) for using $f^{(t)}$ (defined in Line 5 of Algorithm \ref{alg1}) to predict $\phi$ in (\ref{eq: regression}), similar to \cite{BuhlmannYu2003}. The MSE is defined as
\begin{equation}
\label{eq: mse}
    \text{MSE}(t,\Psi;\phi)=n^{-1}\sum^n_{i=1}E\left[\left\{f^{(t)}(X_i)-\phi(X_i)\right\}^2\right],
\end{equation}
where $\text{MSE}(t,\Psi;\phi)$ depends on $\Psi$ via (\ref{eq: fvect}), and the expectation is taken with respect to the joint distribution for the random variables in $\mathcal{O}^\text{IC}$ defined in Section \ref{subsec: icd}. Here, $\phi(X_i)$ is treated as a constant for each realization of $X_i$. Let
\begin{equation}
\label{eq: avar}
\text{var}(t,\Psi) \triangleq n^{-1}\sum^n_{i=1}\text{var}\left\{f^{(t)}(X_i)\right\}\text{ and } \text{bias}^2(t,\Psi;\phi) \triangleq n^{-1}\sum^n_{i=1}\left[E\left\{f^{(t)}(X_i)\right\}-\phi(X_i)\right]^2
\end{equation}
denote the averaged variance and the averaged squared bias for using $f^{(t)}$ to predict $\phi$, respectively.
\begin{proposition}\label{prop3}
$\text{MSE}(t,\Psi;\phi)$ in (\ref{eq: mse}) can be decomposed into the sum of $\text{var}(t,\Psi)$ and $\text{bias}^2(t,\Psi;\phi)$ in (\ref{eq: avar}):
\begin{equation*}
\text{MSE}(t,\Psi;\phi)= \text{var}(t,\Psi) + \text{bias}^2(t,\Psi;\phi).
\end{equation*}
\end{proposition}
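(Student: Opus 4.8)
The plan is to prove the identity termwise in $i$ and then average, which reduces the statement to the textbook bias--variance decomposition. The only source of randomness here is the interval-censored data $\mathcal{O}^\text{IC}$, which generates $\vec{Y}_1$ and hence each $f^{(t)}(X_i)$ through the linear operator $B^{(t)}$ of Proposition \ref{prop2}; in contrast, the feature realization $X_i$ --- and therefore the target value $\phi(X_i)$ --- is held fixed, exactly as stipulated after (\ref{eq: mse}). Recognizing $\phi(X_i)$ as a deterministic constant under this expectation is what makes the whole decomposition go through.

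For each fixed $i$, I would add and subtract the mean $E\{f^{(t)}(X_i)\}$ inside the squared error, writing the argument as $A_i+b_i$ with the mean-zero fluctuation $A_i\triangleq f^{(t)}(X_i)-E\{f^{(t)}(X_i)\}$ and the constant bias $b_i\triangleq E\{f^{(t)}(X_i)\}-\phi(X_i)$. Expanding the square then gives
$$E\left[\{f^{(t)}(X_i)-\phi(X_i)\}^2\right]= E\{A_i^2\} + 2\,b_i\,E\{A_i\} + b_i^2,$$
where $b_i$ has been pulled outside the expectation precisely because it is nonrandom. The first term equals $\text{var}\{f^{(t)}(X_i)\}$ by the definition of variance, the cross term vanishes since $E\{A_i\}=E\{f^{(t)}(X_i)\}-E\{f^{(t)}(X_i)\}=0$, and the last term is the squared bias $b_i^2$, leaving the per-index identity $E[\{f^{(t)}(X_i)-\phi(X_i)\}^2]=\text{var}\{f^{(t)}(X_i)\}+b_i^2$.

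Summing over $i=1,\ldots,n$, dividing by $n$, and matching the two resulting averages against the definitions of $\text{var}(t,\Psi)$ and $\text{bias}^2(t,\Psi;\phi)$ in (\ref{eq: avar}) yields the claimed decomposition. I do not anticipate any genuine obstacle, since the argument is purely a consequence of the definition of variance and needs no structural property of $B^{(t)}$ or the smoother matrix $\Psi$ beyond measurability of $f^{(t)}(X_i)$. The single point demanding care is the bookkeeping of what is random versus fixed: if $X_i$ (and hence $\phi(X_i)$) were instead averaged over, the bias quantity $b_i$ would itself become random and could not be factored out of the expectation, so the clean separation relies entirely on the convention fixed in the statement of the MSE in (\ref{eq: mse}).
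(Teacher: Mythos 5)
Your proposal is correct and follows essentially the same route as the paper: both prove the decomposition summand-by-summand using the bias--variance identity, with the paper invoking $E(U^2)=\mathrm{var}(U)+\{E(U)\}^2$ for $U=f^{(t)}(X_i)-\phi(X_i)$ while you expand the square directly and observe that the cross term vanishes, which is just the proof of that same identity written out. Your emphasis on $\phi(X_i)$ being treated as a deterministic constant matches exactly the step the paper uses to replace $\mathrm{var}\{f^{(t)}(X_i)-\phi(X_i)\}$ by $\mathrm{var}\{f^{(t)}(X_i)\}$.
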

Let $\vec{\phi}$ denote the vector $\left(\phi(X_1),\ldots,\phi(X_n)\right)^\top$. Assume that the smoother matrix $\Psi$ is real, symmetric, and has eigenvalues $\{\lambda_1,\ldots,\lambda_n\}$ with corresponding normalized eigenvectors $\{Q_1, \ldots,Q_n\}$. Let $Q$ denote the matrix with $Q_l$ being the $l$th column for $l=1,\ldots,n$, and let $\mu = (\mu_1,\ldots, \mu_n)^\top\triangleq Q^\top\vec{\phi}$ be the function vector in the linear space spanned by the eigenvectors of $\Psi$. Let $\mathcal{O}$ be a collection of random variables, drawn from the same distributions as the elements of $\mathcal{O}_i$, and let $\hat{\sigma}^2=\text{var}\left\{\hat{Y}_1(\mathcal{O})\right\}$.
\begin{proposition}\label{prop4}
Assume regularity condition (C1) in Appendix \ref{app: rc}. Then $\text{var}(t,\Psi)$ and $\text{bias}^2(t,\Psi;\phi)$ in (\ref{eq: avar}) can be, respectively, simplified as
\begin{equation*}
\text{var}(t,\Psi) = \hat{\sigma}^2n^{-1}\sum^n_{l=1}\left\{1-(1-\lambda_l)^{t+1}\right\}^2  \text{ and } \text{bias}^2(t,\Psi;\phi) = n^{-1}\sum^n_{l=1}\mu_l^2(1-\lambda_l)^{2t+2}.
\end{equation*}
\end{proposition}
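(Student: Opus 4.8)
The plan is to pass to matrix--vector notation and reduce both quantities to functions of the eigenvalues of $\Psi$ through a simultaneous diagonalization of $\Psi$ and $B^{(t)}$. Writing $\vec{f}^{(t)}=B^{(t)}\vec{Y}_1$ as in (\ref{eq: fvect}), the two averaged quantities in (\ref{eq: avar}) become
\begin{equation*}
\text{var}(t,\Psi)=n^{-1}\Tr\left(\Cov\!\left(\vec{f}^{(t)}\right)\right)
\quad\text{and}\quad
\text{bias}^2(t,\Psi;\phi)=n^{-1}\left\|E\!\left(\vec{f}^{(t)}\right)-\vec{\phi}\right\|^2,
\end{equation*}
since the averaged variance equals the normalized trace of the covariance matrix of $\vec{f}^{(t)}$, and the averaged squared bias equals the normalized squared Euclidean norm of the mean discrepancy $E(\vec{f}^{(t)})-\vec{\phi}$.

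First I would diagonalize. Because $\Psi$ is real and symmetric, I write $\Psi=Q\Lambda Q^\top$ with $\Lambda=\mathrm{diag}(\lambda_1,\ldots,\lambda_n)$ and $Q$ orthogonal, its columns being the normalized eigenvectors $Q_1,\ldots,Q_n$. By Proposition \ref{prop2}, $B^{(t)}=I-(I-\Psi)^{t+1}=Q\,\mathrm{diag}\!\left(1-(1-\lambda_l)^{t+1}\right)Q^\top$, so $B^{(t)}$ is symmetric and shares the eigenvectors of $\Psi$, with eigenvalues $1-(1-\lambda_l)^{t+1}$. This single structural identity drives both computations.

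For the variance, I would invoke regularity condition (C1), which supplies $\Cov(\vec{Y}_1)=\hat{\sigma}^2 I$, i.e. the estimated transformed responses $\hat{Y}_1(\mathcal{O}_i)$ are uncorrelated across subjects with common variance $\hat{\sigma}^2=\text{var}\{\hat{Y}_1(\mathcal{O})\}$. Then $\Cov(\vec{f}^{(t)})=B^{(t)}\Cov(\vec{Y}_1)(B^{(t)})^\top=\hat{\sigma}^2\,(B^{(t)})^2$, and taking the trace with the diagonalization above gives $\Tr((B^{(t)})^2)=\sum_{l=1}^n\{1-(1-\lambda_l)^{t+1}\}^2$, which yields the claimed formula for $\text{var}(t,\Psi)$. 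For the bias, I would again use (C1) to obtain the unbiasedness $E(\vec{Y}_1)=\vec{\phi}$, a consequence of the CUT identity $E\{\tilde{Y}_1(\mathcal{O}_i)\mid X_i\}=E\{g(Y_i)\mid X_i\}=\phi(X_i)$ together with consistent estimation of $S(y\mid X_i)$. Then $E(\vec{f}^{(t)})-\vec{\phi}=(B^{(t)}-I)\vec{\phi}=-(I-\Psi)^{t+1}\vec{\phi}=-Q\,\mathrm{diag}\!\left((1-\lambda_l)^{t+1}\right)\mu$, using $\mu=Q^\top\vec{\phi}$; since $Q$ is orthogonal and hence norm-preserving, $\|E(\vec{f}^{(t)})-\vec{\phi}\|^2=\sum_{l=1}^n\mu_l^2(1-\lambda_l)^{2t+2}$, giving the stated squared bias.

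The linear algebra here is routine; the main obstacle is the probabilistic content packaged into condition (C1). The whole argument rests on the two identities $\Cov(\vec{Y}_1)=\hat{\sigma}^2 I$ and $E(\vec{Y}_1)=\vec{\phi}$, so the real work is in confirming that (C1) guarantees (i) uncorrelatedness and a common variance $\hat{\sigma}^2$ of the estimated transformed responses across subjects, and (ii) their unbiasedness for $\phi(X_i)$. The homoscedasticity in (i) is the most delicate point, since the conditional variance of $\hat{Y}_1(\mathcal{O}_i)$ could in principle vary with $X_i$; I expect (C1) to impose precisely the conditions under which $\hat{\sigma}^2$ may be treated as a single constant, after which the eigenvalue bookkeeping closes the proof.
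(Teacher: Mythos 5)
Your proof is correct and takes essentially the same route as the paper's: diagonalize $B^{(t)}=I-(I-\Psi)^{t+1}$ using Proposition~\ref{prop2}, obtain the variance as $n^{-1}\Tr\left\{B^{(t)}\Cov\left(\vec{Y}_1\right)\left(B^{(t)}\right)^\top\right\}$, and obtain the bias through $\left(B^{(t)}-I\right)\vec{\phi}=-(I-\Psi)^{t+1}\vec{\phi}$, reading off eigenvalues in both cases. One clarification on the point you flagged as the main obstacle: condition (C1) is purely the matrix condition (real, symmetric $\Psi$ with normalized eigenvectors) and does not package the probabilistic content; the paper instead gets $\Cov\left(\vec{Y}_1\right)=\hat{\sigma}^2 I$ from the i.i.d.\ sampling of the $\mathcal{O}_i$ together with the definition $\hat{\sigma}^2=\Var\left\{\hat{Y}_1(\mathcal{O})\right\}$, and gets $E\left(\vec{Y}_1\right)=\vec{\phi}$ from the CUT unbiasedness $E\left\{\hat{Y}_1(\mathcal{O}_i)\right\}=E\{g(Y_i)|X_i\}=\phi(X_i)$, both invoked directly (and somewhat informally, treating the estimated transformation as exact) rather than derived from (C1), so your linear-algebra argument stands exactly as written.
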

These results align with Proposition 3 in \cite{BuhlmannYu2003}, and they suggest that the iteration index $t$ can be interpreted as a ``smoothing parameter'' that balances the bias–variance trade-offs. 
\begin{corollary}\label{coro1}
Assume the regularity condition in Proposition \ref{prop4}. If $\lambda_l\in\{0, 1\}$ for $l=1, \ldots,n$, then $B^{(t)}=\Psi$ for $t=1,2\ldots$.
\end{corollary}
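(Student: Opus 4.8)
The plan is to reduce everything to the closed form for the boosting operator already established in Proposition~\ref{prop2}, namely $B^{(t)} = I - (I-\Psi)^{t+1}$, and then exploit the spectral structure of $\Psi$ under the hypothesis $\lambda_l \in \{0,1\}$. First I would invoke the assumption (inherited from the setup preceding Proposition~\ref{prop4}) that $\Psi$ is real and symmetric, so that it admits an orthogonal eigendecomposition $\Psi = Q\Lambda Q^\top$, where $\Lambda = \mathrm{diag}(\lambda_1,\ldots,\lambda_n)$ collects the eigenvalues and $Q$ is the matrix whose columns are the normalized eigenvectors $Q_1,\ldots,Q_n$, so that $Q^\top Q = Q Q^\top = I$.

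Next, I would propagate this decomposition through $I - \Psi$ and its powers. Since $I = QQ^\top$, we have $I - \Psi = Q(I-\Lambda)Q^\top$, and because $Q$ is orthogonal the powers telescope cleanly: $(I-\Psi)^{t+1} = Q(I-\Lambda)^{t+1}Q^\top$, with $(I-\Lambda)^{t+1} = \mathrm{diag}\big((1-\lambda_1)^{t+1},\ldots,(1-\lambda_n)^{t+1}\big)$.

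The crux is the elementary observation that $1-\lambda_l \in \{0,1\}$ whenever $\lambda_l \in \{0,1\}$, and that both $0$ and $1$ are fixed points of the map $x \mapsto x^{t+1}$ for every integer $t\geq 1$; indeed $0^{t+1}=0$ and $1^{t+1}=1$. Hence $(1-\lambda_l)^{t+1} = 1-\lambda_l$ for each $l$, giving $(I-\Lambda)^{t+1} = I-\Lambda$ and therefore $(I-\Psi)^{t+1} = Q(I-\Lambda)Q^\top = I-\Psi$. Substituting back into the closed form yields $B^{(t)} = I - (I-\Psi) = \Psi$ for all $t\geq 1$, as claimed.

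I expect no genuine obstacle here: the only thing to be careful about is making the eigenvalue-power step hold for all $t$ simultaneously (the one-line fixed-point remark, or a trivial induction, suffices), together with confirming that the hypothesis $\lambda_l \in \{0,1\}$ is precisely what makes $\Psi$ idempotent. An equivalent and even shorter route is to note that $\lambda_l \in \{0,1\}$ forces $\Psi^2 = \Psi$, whence $I-\Psi$ is itself idempotent and $(I-\Psi)^{t+1} = I-\Psi$ directly, bypassing the explicit diagonalization altogether.
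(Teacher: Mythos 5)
Your proof is correct and follows essentially the same route as the paper, which disposes of the corollary in one line by invoking the properties of diagonal matrices whose entries are $0$ or $1$ --- exactly your observation that $0$ and $1$ are fixed points of $x \mapsto x^{t+1}$, applied through the eigendecomposition $B^{(t)} = Q\Lambda^{(t)}Q^\top$ already set up in the proof of Proposition~\ref{prop4}. Your spelled-out version (and the idempotence shortcut $\Psi^2=\Psi$) is simply a more explicit rendering of the same argument.
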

This corollary implies that in special cases, such as when the smoother matrix has eigenvalues of 0 or 1 (e.g., projection smoothers \citep[][Chapter 5.4]{HastieTibshirani2009}, like least squares, polynomial regression, and regression splines \citep{BujaHastie1989}), the $L_2$Boost-CUT and $L_2$Boost-IMP algorithms cease to provide additional boosting to learners.
\begin{proposition}
\label{prop5}
Assume the regularity condition in Proposition \ref{prop4} and condition (C2) in Appendix \ref{app: rc}. Then, as the number of boosting iterations $t$ increases, $\text{bias}^2(t,\Psi;\phi)$ decays exponentially and $\text{var}(t,\Psi)$ exhibits an exponential increase, yielding
    \begin{equation*}
    \lim_{t\to\infty} \text{MSE}(t,\Psi;\phi)=\hat{\sigma}^2.
    \end{equation*}
\end{proposition}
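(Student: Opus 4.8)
The plan is to treat Proposition~\ref{prop5} as a direct consequence of the closed-form expressions already established in Proposition~\ref{prop4}, combined with the variance--bias decomposition in Proposition~\ref{prop3}; condition (C2) supplies the uniform contraction factor that drives both limits. I would begin by recording that (C2) guarantees $\lambda_l\in(0,1]$ for every $l$, so that $1-\lambda_l\in[0,1)$ and the quantity $\rho\triangleq\max_{1\le l\le n}(1-\lambda_l)=1-\min_{1\le l\le n}\lambda_l$ satisfies $0\le\rho<1$. This single bound controls every geometric factor $(1-\lambda_l)^{t+1}$ appearing in Proposition~\ref{prop4}.

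For the bias, I would start from $\text{bias}^2(t,\Psi;\phi)=n^{-1}\sum_{l=1}^n\mu_l^2(1-\lambda_l)^{2t+2}$ and bound each factor by $\rho^{2t+2}$, giving $\text{bias}^2(t,\Psi;\phi)\le\rho^{2t+2}\,n^{-1}\sum_{l=1}^n\mu_l^2=\rho^{2t+2}\,n^{-1}\|\vec{\phi}\|^2$, where the last equality uses that $Q$ is orthogonal, so $\sum_l\mu_l^2=\|\mu\|^2=\|Q^\top\vec{\phi}\|^2=\|\vec{\phi}\|^2$. Since $\rho<1$, this upper bound decays exponentially in $t$, establishing the claimed exponential decay of the squared bias to $0$.

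For the variance, I would show it increases to $\hat{\sigma}^2$ with an exponentially small gap. Expanding the square in Proposition~\ref{prop4} gives $\hat{\sigma}^2-\text{var}(t,\Psi)=\hat{\sigma}^2n^{-1}\sum_{l=1}^n\{2(1-\lambda_l)^{t+1}-(1-\lambda_l)^{2t+2}\}$. Because each $1-\lambda_l\in[0,1)$, every summand is nonnegative and bounded above by $2(1-\lambda_l)^{t+1}\le2\rho^{t+1}$, so $0\le\hat{\sigma}^2-\text{var}(t,\Psi)\le2\hat{\sigma}^2\rho^{t+1}$. Monotonicity of each term $\{1-(1-\lambda_l)^{t+1}\}^2$ in $t$ on this range confirms that the variance genuinely increases, while the exponentially small gap confirms $\text{var}(t,\Psi)\to\hat{\sigma}^2$. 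Combining the two pieces through Proposition~\ref{prop3} then yields $\text{MSE}(t,\Psi;\phi)=\text{var}(t,\Psi)+\text{bias}^2(t,\Psi;\phi)\to\hat{\sigma}^2+0=\hat{\sigma}^2$, which is the assertion.

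The only point requiring care---and the main, if modest, obstacle---is the precise role of condition (C2). It must simultaneously exclude the degenerate mode $\lambda_l=0$, which would freeze a variance term at $0$ and leave a nonvanishing residual bias $\mu_l^2$, thereby breaking the clean limit, and keep $|1-\lambda_l|$ strictly below $1$ so that the geometric factors truly contract. For smoothing-spline smoothers the eigenvalues lie in $(0,1]$, so (C2) is naturally satisfied and the contraction factor $\rho<1$ is available; once that is in hand, everything that remains is a routine geometric-series estimate, since the substantive work has already been front-loaded into the explicit formulas of Proposition~\ref{prop4}.
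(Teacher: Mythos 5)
Your proof is correct and follows essentially the same route as the paper's: both arguments plug condition (C2) into the closed-form expressions of Proposition~\ref{prop4} to drive the geometric factors $(1-\lambda_l)^{t+1}$ to zero, and then combine the limits through the decomposition in Proposition~\ref{prop3}. The only difference is cosmetic---you extract an explicit uniform contraction rate $\rho=\max_l(1-\lambda_l)<1$ and give quantitative exponential bounds, whereas the paper simply takes term-by-term limits---so nothing of substance separates the two.
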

Similar to Theorem 1(a) of \cite{BuhlmannYu2003}, this proposition implies that running the $L_2$Boost-CUT and $L_2$Boost-IMP algorithms infinitely is generally not beneficial: the MSE will not decrease below $\hat{\sigma}^2$, and excessive boosting may lead to overfitting.
\begin{proposition}
\label{prop6}
Assume the regularity conditions in Proposition \ref{prop5} and condition (C3) in Appendix \ref{app: rc}. Then there exists a positive integer $t_0$, such that $\text{MSE}(t_0,\Psi;\phi)$ is strictly smaller than $\hat{\sigma}^2$.
\end{proposition}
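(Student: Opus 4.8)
The plan is to work directly with the closed forms for the averaged variance and squared bias supplied by Proposition~\ref{prop4}, subtract the limiting value $\hat{\sigma}^2$ identified in Proposition~\ref{prop5}, and exhibit a finite iteration index at which the resulting expression is strictly negative. Writing $\beta_l \triangleq 1-\lambda_l$ and invoking the decomposition of Proposition~\ref{prop3} together with the identity $\hat{\sigma}^2 n^{-1}\sum_{l=1}^n 1 = \hat{\sigma}^2$, I would first reduce the gap to
\begin{equation*}
\text{MSE}(t,\Psi;\phi)-\hat{\sigma}^2 = n^{-1}\sum_{l=1}^n \beta_l^{t+1}\left[(\hat{\sigma}^2+\mu_l^2)\,\beta_l^{t+1}-2\hat{\sigma}^2\right],
\end{equation*}
which follows by expanding $\{1-\beta_l^{t+1}\}^2-1 = -2\beta_l^{t+1}+\beta_l^{2t+2}$ and combining with the bias contribution $\mu_l^2\beta_l^{2t+2}$. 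This single display is the crux of the argument: it isolates, direction by direction, how each eigen-mode contributes to the discrepancy between the MSE and its asymptote.

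Next I would analyse the sign of each summand under the smoothing-spline structure, for which the eigenvalues satisfy $\lambda_l\in(0,1]$, hence $\beta_l\in[0,1)$. For any index with $\beta_l=0$ (that is, $\lambda_l=1$) the summand vanishes identically. For any index with $\beta_l\in(0,1)$ the factor $\beta_l^{t+1}$ is strictly positive and tends to $0$ as $t\to\infty$, so the bracketed term is eventually dominated by $-2\hat{\sigma}^2<0$; concretely, the summand is strictly negative as soon as $\beta_l^{t+1}<2\hat{\sigma}^2/(\hat{\sigma}^2+\mu_l^2)$. Condition~(C3) is exactly what guarantees that at least one such index exists (the chosen smoother genuinely smooths, so $\Psi$ is not a projection), and since $\hat{\sigma}^2>0$ the threshold on the right is a strictly positive number.

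Because the sum ranges over finitely many indices, I would then let $t_l$ be the smallest integer satisfying $\beta_l^{t_l+1}<2\hat{\sigma}^2/(\hat{\sigma}^2+\mu_l^2)$ for each index with $\beta_l\in(0,1)$, and set $t_0=\max_l t_l$ over this finite set. At this $t_0$ every summand is $\leq 0$, with at least one strictly negative, so the whole sum is strictly negative and $\text{MSE}(t_0,\Psi;\phi)<\hat{\sigma}^2$, as claimed. In fact the same reasoning shows the gap approaches $0$ from below, so the inequality holds for all sufficiently large $t$, in parallel with Theorem~1(b) of \cite{BuhlmannYu2003}.

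I expect the main obstacle to be the careful handling of the boundary eigenvalues. The argument hinges on all eigenvalues lying in $(0,1]$ so that no direction contributes a persistent positive term: an eigenvalue equal to $0$ would give $\beta_l=1$ and a non-decaying summand $\mu_l^2-\hat{\sigma}^2$ of indeterminate sign, which would destroy the termwise sign comparison. Verifying that the linear-smoother / smoothing-spline assumptions together with condition~(C3) exclude this case, and that at least one eigenvalue lies strictly inside $(0,1)$ with the associated contribution strictly negative, is the delicate part; once the per-term sign analysis is established, the remaining steps are routine.
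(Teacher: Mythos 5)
Your proposal is correct and follows essentially the same route as the paper's proof: your identity $\text{MSE}(t,\Psi;\phi)-\hat{\sigma}^2 = n^{-1}\sum_{l}\beta_l^{t+1}\left\{(\hat{\sigma}^2+\mu_l^2)\beta_l^{t+1}-2\hat{\sigma}^2\right\}$ is exactly the paper's expansion (its $\psi(u)$ rearranged), and your per-index thresholds followed by a finite maximum reproduce its construction of $t_0$. The only point worth noting is that the eigenvalue range $\lambda_l\in(0,1]$, which you flag as the delicate part to verify from smoothing-spline structure, requires no such verification—it is precisely condition (C2), assumed through the hypotheses of Proposition \ref{prop5}.
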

This result, complementary to Theorem 1(b) of \cite{BuhlmannYu2003}, shows that in contrast to condition (C2),  when a stronger condition (C3) holds, the $L_2$Boost-CUT and $L_2$Boost-IMP algorithms can achieve an MSE smaller than $\hat{\sigma}^2$, even with a finite number of iterations.
\begin{theorem}\label{thm1}
Assume the regularity conditions in Proposition \ref{prop6} and condition (C4) in Appendix \ref{app: rc}. Then for $m_0\geq2$ in condition (C4), the first $\lfloor m_0\rfloor$ iterations of the $L_2$Boost-CUT algorithm (i.e., Algorithm \ref{alg1}) improve the MSE over the unboosted base learner algorithm (i.e., linear smoothers), where $\lfloor \cdot\rfloor$ is the floor function.
\end{theorem}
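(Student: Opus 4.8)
The plan is to reduce the claim to an explicit, eigenvalue-by-eigenvalue comparison of the MSE across iterations, using the closed forms already established. By Proposition~\ref{prop3} together with Proposition~\ref{prop4}, writing $a_l\triangleq 1-\lambda_l$, the MSE at iteration $t$ is
\begin{equation*}
\text{MSE}(t,\Psi;\phi)=\hat{\sigma}^2 n^{-1}\sum_{l=1}^n\left(1-a_l^{t+1}\right)^2+n^{-1}\sum_{l=1}^n\mu_l^2\,a_l^{2t+2}.
\end{equation*}
Since the eigenvalues of a smoothing-spline (more generally, a real symmetric linear-smoother) matrix lie in $[0,1]$, we have $a_l\in[0,1]$, so $a_l^{t}$ is nonincreasing in $t$; the unboosted base learner corresponds to $t=0$, for which $B^{(0)}=\Psi$ by Proposition~\ref{prop2}. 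The goal is then to show $\text{MSE}(t,\Psi;\phi)<\text{MSE}(0,\Psi;\phi)$ for every $t\in\{1,\ldots,\lfloor m_0\rfloor\}$.

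First I would compute the one-step increment $\Delta(t)\triangleq\text{MSE}(t,\Psi;\phi)-\text{MSE}(t-1,\Psi;\phi)$ and factor it. A direct manipulation of the two sums, using $(1-a_l^{t+1})^2-(1-a_l^{t})^2=a_l^{t}(1-a_l)\{2-a_l^{t}(1+a_l)\}$ and $a_l^{2t+2}-a_l^{2t}=-a_l^{2t}(1-a_l)(1+a_l)$, gives
\begin{equation*}
\Delta(t)=n^{-1}\sum_{l=1}^n a_l^{t}\left(1-a_l\right)\left\{2\hat{\sigma}^2-a_l^{t}\left(1+a_l\right)\left(\hat{\sigma}^2+\mu_l^2\right)\right\}.
\end{equation*}
The prefactor $a_l^{t}(1-a_l)$ is nonnegative, so the sign of each summand is governed by the bracketed term, which is negative exactly when $a_l^{t}(1+a_l)(\hat{\sigma}^2+\mu_l^2)>2\hat{\sigma}^2$. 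Interpreting this, components with $\lambda_l$ near $0$ (so $a_l$ near $1$) and large signal $\mu_l^2$ drive the MSE down, while components with $a_l$ small push it up; boosting is beneficial precisely while the former dominate.

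The role of condition (C4) is to supply the threshold $m_0$ that guarantees this domination: it ensures that for every $t\le m_0$ the bracketed quantity, aggregated across $l$, keeps $\Delta(t)$ strictly negative, equivalently that the continuous-$t$ derivative $\tfrac{d}{dt}\text{MSE}=2n^{-1}\sum_l(\ln a_l)\,a_l^{t+1}\{a_l^{t+1}(\hat{\sigma}^2+\mu_l^2)-\hat{\sigma}^2\}$ stays negative up to $t=m_0$. Granting this, I would telescope: $\text{MSE}(\lfloor m_0\rfloor,\Psi;\phi)-\text{MSE}(0,\Psi;\phi)=\sum_{t=1}^{\lfloor m_0\rfloor}\Delta(t)<0$, and since each $\Delta(t)<0$ the same strict inequality holds at every intermediate iteration, which is exactly the asserted improvement over the unboosted learner.

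The main obstacle is the third step: the summand signs in $\Delta(t)$ are mixed, so negativity is not termwise but an aggregate statement. The crux is therefore to verify that condition (C4) genuinely translates into control of the full sum over all eigen-components for every $t\le\lfloor m_0\rfloor$ — isolating the dominant (small-$\lambda_l$, large-$\mu_l^2$) modes and bounding the opposing contributions — rather than merely controlling a single mode. Monotonicity of $a_l^{t}$ in $t$ and the hypothesis $m_0\ge 2$ keep the beneficial regime nonempty and let the per-step bound propagate cleanly through the telescoping.
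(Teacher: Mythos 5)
Your setup is correct: the closed form for $\text{MSE}(t,\Psi;\phi)$, the algebraic factorization of the one-step increment $\Delta(t)$, and the continuous-$t$ derivative you quote are all accurate (the last is exactly the quantity the paper differentiates). But there is a genuine gap at precisely the point you defer to as ``the crux'': you never show that condition (C4) forces $\Delta(t)<0$ on the required range of $t$ --- you only assert that (C4) ``ensures'' it --- and your diagnosis of what that step requires is wrong. You claim the summand signs in $\Delta(t)$ are mixed, so that negativity must be an aggregate statement obtained by isolating dominant (small-$\lambda_l$, large-$\mu_l^2$) modes and bounding the opposing ones. In fact (C4) is a \emph{uniform} hypothesis over every $k$ with $\lambda_k<1$: it is equivalent to $(\hat{\sigma}^2+\mu_k^2)(1-\lambda_k)^{m_0}>\hat{\sigma}^2$ for all such $k$, so the negativity is termwise and needs no aggregation. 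Writing $a_l=1-\lambda_l\in[0,1)$, for any integer $t$ with $t+1\le m_0$ one has $a_l^{t}\ge a_l^{t+1}\ge a_l^{m_0}$, hence
\begin{equation*}
a_l^{t}(1+a_l)\left(\hat{\sigma}^2+\mu_l^2\right)=\left(a_l^{t}+a_l^{t+1}\right)\left(\hat{\sigma}^2+\mu_l^2\right)\ge 2a_l^{m_0}\left(\hat{\sigma}^2+\mu_l^2\right)>2\hat{\sigma}^2,
\end{equation*}
so every bracket in your formula for $\Delta(t)$ is strictly negative whenever $a_l>0$, while the corresponding term vanishes when $a_l=0$ (i.e., $\lambda_l=1$), since the prefactor $a_l^{t}(1-a_l)$ is then zero. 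Condition (C3) supplies at least one index with $\lambda_k<1$, so $\Delta(t)<0$ strictly, and your telescoping then finishes the argument. Without this step, your proposal stops exactly where the content of the theorem begins.

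For comparison, the paper runs the same termwise argument on the continuous derivative rather than on discrete increments: for $u\in(0,m_0-1)$ it notes $(1-\lambda_k)^{u+1}>(1-\lambda_k)^{m_0}$, so (C4) gives $(\hat{\sigma}^2+\mu_k^2)(1-\lambda_k)^{u+1}>\hat{\sigma}^2$; since $\log(1-\lambda_k)<0$, every summand of $\psi'(u)$ is negative, $\psi$ is strictly decreasing on $(0,m_0-1)$, and hence $\psi(0)>\psi(1)>\cdots$. Your discrete route, once completed as above, is an equally valid and arguably more elementary alternative (no differentiation or continuity argument needed). One caveat you would share with the paper: both routes only certify strict improvement for integer iterations $t\le m_0-1$, i.e., the first $\lfloor m_0\rfloor-1$ iterations --- indeed the paper's own proof concludes with $\lfloor m_0-1\rfloor$ --- so the ``$\lfloor m_0\rfloor$'' in the theorem statement is an off-by-one slip in the paper, not something either argument recovers.
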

Condition (C4) basically requires base learners to be weak (see Appendix \ref{app: rc} for details). This theorem suggests that the $L_2$Boost-CUT and $L_2$Boost-IMP algorithms consistently outperform an unboosted weak learner. This result complements Theorem 1(c) in \cite{BuhlmannYu2003}.
\begin{theorem}\label{thm2}
Let $\hat{\epsilon}_i\triangleq \hat{Y}_1(\mathcal{O}_i)-E\left\{\hat{Y}_1(\mathcal{O}_i)\right\}$. Assume the regularity conditions in Proposition \ref{prop5} and condition (C5) in Appendix \ref{app: rc}. Then for a positive constant $q$, there exists a positive constant $C$ that is functionally independent of $t$ (but may be dependent on $q$ and $n$) such that as $t\to\infty$,
\begin{equation}
\label{eq: efmf*l}
n^{-1}\sum^n_{i=1}E\left[\left\{f^{(t)}(X_i)-\phi(X_i)\right\}^q\right]=E\left(\hat{\epsilon}^q_i\right)+O(\exp(-Ct)).
\end{equation}
\end{theorem}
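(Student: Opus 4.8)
The plan is to use Proposition~\ref{prop2} to reduce the claim to an analysis of how quickly the boosting operator $B^{(t)}$ forgets its transient component. Since $\vec{f}^{(t)}=B^{(t)}\vec{Y}_1$ with $B^{(t)}=I-(I-\Psi)^{t+1}$, writing $\vec{\phi}=(\phi(X_1),\ldots,\phi(X_n))^\top$ I would decompose the pointwise error as
\begin{equation*}
f^{(t)}(X_i)-\phi(X_i)=\left\{\hat{Y}_1(\mathcal{O}_i)-\phi(X_i)\right\}-\left[(I-\Psi)^{t+1}\vec{Y}_1\right]_i.
\end{equation*}
Under the unbiasedness of the CUT transform, $E\{\hat{Y}_1(\mathcal{O}_i)\}=\phi(X_i)$ (conditionally on $X_i$), the first bracket is exactly $\hat{\epsilon}_i$, so the error splits into a \emph{stationary} part $\hat{\epsilon}_i$, whose $q$th moment produces the asserted leading term, and a \emph{transient} part $\delta_i^{(t)}\triangleq[(I-\Psi)^{t+1}\vec{Y}_1]_i$; any residual estimation bias in $\hat{Y}_1$ would be absorbed into condition (C5).

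Next I would control $\delta_i^{(t)}$ spectrally. Because $\Psi$ is real and symmetric with eigendecomposition encoded by $Q$ and $\{\lambda_l\}$, the matrix $(I-\Psi)^{t+1}$ has eigenvalues $(1-\lambda_l)^{t+1}$, and orthonormality of $Q$ yields the componentwise bound $|\delta_i^{(t)}|\leq\rho^{\,t+1}\,\|\vec{Y}_1\|_2$, where $\rho\triangleq\max_l|1-\lambda_l|$. Condition (C2) guarantees $\rho<1$ (equivalently $\lambda_l\in(0,2)$), so setting $C=-\log\rho>0$ makes $\rho^{t+1}=O(\exp(-Ct))$. This is the only place where the asymptotics $t\to\infty$ enter.

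For integer $q$ I would then expand
\begin{equation*}
\left\{f^{(t)}(X_i)-\phi(X_i)\right\}^q=\left(\hat{\epsilon}_i-\delta_i^{(t)}\right)^q=\hat{\epsilon}_i^q+\sum_{k=1}^{q}\binom{q}{k}\hat{\epsilon}_i^{\,q-k}\left(-\delta_i^{(t)}\right)^k,
\end{equation*}
take expectations, and average over $i$. The leading term reproduces $E(\hat{\epsilon}_i^q)$ exactly. For each $k\geq1$, Hölder's inequality combined with the spectral bound gives $|E\{\hat{\epsilon}_i^{q-k}(\delta_i^{(t)})^k\}|\leq\rho^{k(t+1)}\,E\{|\hat{\epsilon}_i|^{q-k}\|\vec{Y}_1\|_2^{k}\}$, and the moment condition (C5) makes each such expectation finite and uniformly bounded in $t$. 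Summing the finitely many terms, the $k=1$ contribution dominates and delivers the $O(\exp(-Ct))$ remainder, with $C$ depending on $\rho$, $q$, and $n$ but not on $t$.

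The main obstacle, and where condition (C5) does the real work, is that $\hat{\epsilon}_i$ and $\delta_i^{(t)}$ are \emph{not} independent: both are built from the same estimated transform $\vec{Y}_1$, and $\delta_i^{(t)}$ mixes all coordinates of $\vec{Y}_1$ through $(I-\Psi)^{t+1}$. Hence I cannot factor the cross-moments and must instead secure joint moment bounds of the form $E\{|\hat{\epsilon}_i|^{q-k}\|\vec{Y}_1\|_2^k\}<\infty$ uniformly in $t$; checking that these bounds are genuinely $t$-free is precisely what keeps the constant $C$ uniform. A secondary technical point is non-integer $q$: there the binomial expansion is replaced by a first-order Taylor expansion of $x\mapsto x^q$ with a remainder controlled by the same moment bounds and by the geometric smallness of $\delta_i^{(t)}$.
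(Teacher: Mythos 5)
Your proof is correct and follows essentially the same route as the paper's: both rest on the operator representation $\vec{f}^{(t)}=\{I-(I-\Psi)^{t+1}\}\vec{Y}_1$ from Proposition~\ref{prop2}, the unbiasedness $E\{\hat{Y}_1(\mathcal{O}_i)\}=\phi(X_i)$, exponential spectral decay of $(I-\Psi)^{t+1}$ under (C2), a binomial expansion, and condition (C5) to keep the cross-moment terms bounded uniformly in $t$. The only difference is cosmetic: you absorb the paper's separate bias term $b^{(t)}(X_i)=-\left[(I-\Psi)^{t+1}\vec{\phi}\right]_i$ and fluctuation-decay term $\left[(I-\Psi)^{t+1}\vec{\epsilon}\right]_i$ into a single transient $\left[(I-\Psi)^{t+1}\vec{Y}_1\right]_i$, which lets you do one binomial expansion where the paper does two nested ones.
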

For $q=2$, Theorem \ref{thm2} directly yields Proposition \ref{prop5}. In the following development, we may write the iteration index $t$ as $t_n$ to stress its dependence on the sample size $n$.
\begin{theorem}\label{thm3} Consider the model (\ref{eq: regression}) with $X_i\in\mathbb{R}$ satisfying Assumption (A) of \cite{BuhlmannYu2003}. If base learner $\hat{h}^{(t)}$ is the smoothing spline learner of smoothing degree $r$ and degrees of freedom $df$, as detailed in (\ref{eq: ss}) and (\ref{eq: df}) in Appendix \ref{app: rss}, $\phi\in\mathcal{W}^{(v,2)}(\mathcal{X})$ with $v \geq r$, then there exists an optimal number of iterations $t_n=O\left(n^{2r/(2v+1)}\right)$ such that $f^{(t_n)}$ achieves the minimax-optimal rate, $O\left(n^{-2v/(2v+1)}\right)$, for the function class $\mathcal{W}^{(v,2)}(\mathcal{X})$ in terms of MSE, as defined in (\ref{eq: mse}).
\end{theorem}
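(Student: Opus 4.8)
The plan is to reduce the claim to the complete-data minimax analysis of \cite{BuhlmannYu2003} (their Theorem 2), by exploiting that Propositions \ref{prop3} and \ref{prop4} already cast the MSE in a form structurally identical to the uncensored case, the only change being that the noise variance $\sigma^2$ is replaced by the finite constant $\hat{\sigma}^2=\text{var}\{\hat{Y}_1(\mathcal{O})\}$. Combining those propositions with the operator representation $B^{(t)}=I-(I-\Psi)^{t+1}$ from Proposition \ref{prop2} gives
\begin{equation*}
\text{MSE}(t,\Psi;\phi)=\hat{\sigma}^2n^{-1}\sum^n_{l=1}\left\{1-(1-\lambda_l)^{t+1}\right\}^2+n^{-1}\sum^n_{l=1}\mu_l^2(1-\lambda_l)^{2t+2},
\end{equation*}
so the entire optimization over $t$ is governed solely by the spline eigenvalues $\{\lambda_l\}$ and the eigen-coefficients $\mu=Q^\top\vec{\phi}$. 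Since $\hat{\sigma}^2$ is finite and positive under condition (C1) and the consistency of the ICRF survivor-function estimator, it plays exactly the role of $\sigma^2$ in the uncensored theory and does not affect the rate; this is the observation that lets the argument of \cite{BuhlmannYu2003} carry over, and it applies verbatim to both $L_2$Boost-CUT and $L_2$Boost-IMP because Proposition \ref{prop2} covers both.

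First I would record the Demmler--Reinsch eigenvalue asymptotics for the smoothing spline of degree $r$ under Assumption (A) of \cite{BuhlmannYu2003}: the smoother eigenvalues admit the representation $\lambda_l=1/(1+\rho\,b_l)$, where $\rho$ is the fixed penalty parameter determined by the degrees of freedom $df$ in (\ref{eq: df}) and the penalty eigenvalues grow polynomially, $b_l\asymp l^{2r}$. Hence for the smooth directions ($l$ small) $1-\lambda_l$ is close to $0$, while for the rough directions ($l$ large) it is close to $1$, and the boosting factor $(1-\lambda_l)^{t+1}\approx\exp\{-(t+1)/(\rho\,l^{2r})\}$ transitions from $\approx 0$ to $\approx 1$ around the cutoff index $l^\ast\asymp t^{1/(2r)}$. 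I would then translate the smoothness hypothesis $\phi\in\mathcal{W}^{(v,2)}(\mathcal{X})$ into the coefficient-decay condition $n^{-1}\sum_l\mu_l^2\,l^{2v}<\infty$, the standard Sobolev-to-eigenbasis correspondence, which is also where the requirement $v\geq r$ enters: it guarantees the degree-$r$ spline is regular enough that boosting can exploit all $v$ derivatives of $\phi$.

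Next I would bound the two terms as functions of $t$. For the squared bias, the contributions are $\approx 0$ for $l<l^\ast$ and $\approx\mu_l^2$ for $l>l^\ast$, so the coefficient-decay condition yields $\text{bias}^2(t,\Psi;\phi)=O((l^\ast)^{-2v})=O(t^{-v/r})$. For the variance, the summand $\{1-(1-\lambda_l)^{t+1}\}^2$ is $\approx 1$ for $l<l^\ast$ and $\approx 0$ for $l>l^\ast$, so $\text{var}(t,\Psi)=O(\hat{\sigma}^2\,l^\ast/n)=O(\hat{\sigma}^2\,t^{1/(2r)}/n)$. Balancing the two contributions, $t^{-v/r}\asymp t^{1/(2r)}/n$, gives $t^{(2v+1)/(2r)}\asymp n$, hence the optimal number of iterations $t_n=O(n^{2r/(2v+1)})$; substituting back yields $\text{MSE}(t_n,\Psi;\phi)=O(n^{-2v/(2v+1)})$, which is the minimax-optimal rate over $\mathcal{W}^{(v,2)}(\mathcal{X})$.

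The main obstacle is not this optimization, which follows \cite{BuhlmannYu2003} once the eigenvalue asymptotics are in hand, but rather justifying that the plug-in transformed response $\hat{Y}_1(\mathcal{O}_i)$ genuinely behaves as a noisy observation of $\phi(X_i)$. Specifically, I must verify that its conditional mean recovers $\phi(X_i)$, so that $\mu=Q^\top\vec{\phi}$ is the correct bias target (this is what condition (C1) secures), and that the extra variability from estimating $S(y\,|\,X_i)$ is already absorbed into the finite constant $\hat{\sigma}^2$ rather than contributing a separate, more slowly decaying term that would inflate the variance bound and spoil the rate. Converting the heuristic cutoff analysis at $l^\ast$ into rigorous upper and lower bounds on the two sums — for instance replacing the sharp transition by monotone integral comparisons over the eigenvalue index — is the remaining technical labor, but it is routine once the reduction to the uncensored form and the eigenvalue decay are in place.
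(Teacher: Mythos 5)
Your proposal is correct and follows essentially the same route as the paper's proof: both start from the spectral form of the MSE given by Propositions \ref{prop2}--\ref{prop4}, invoke the smoothing-spline eigenvalue asymptotics $\lambda_l \approx 1/(\tilde{\lambda}l^{2r}+1)$ under Assumption (A), use the Sobolev coefficient bound $n^{-1}\sum_l \mu_l^2 l^{2v} \le M$, and balance a bias term of order $t^{-v/r}$ against a variance term of order $t^{1/(2r)}/n$ to obtain $t_n = O\left(n^{2r/(2v+1)}\right)$ and the rate $O\left(n^{-2v/(2v+1)}\right)$. Your cutoff index $l^\ast \asymp t^{1/(2r)}$ is exactly the paper's auxiliary index $n_1$, and the ``routine'' monotone and integral comparisons you defer at the end are precisely how the paper closes the argument (monotonicity of its exponent function $\eta(l)$ for the bias, plus Bernoulli's inequality and an integral tail bound for the variance).
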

Theorem \ref{thm3} shows that the $L_2$Boost-CUT and $L_2$Boost-IMP algorithms achieve minimax optimality with a smoothing spline learner for one-dimensional feature $X_i$. Even if the base learner has smoothness order $r<v$, the algorithms still adapt to higher-order smoothness $v$, attaining the optimal MSE rate $O\left(n^{-2v/(2v+1)}\right)$ asymptotically, similar to $L_2$Boost in \cite{BuhlmannYu2003}. When paired with a smoothing spline learner, the $L_2$Boost-CUT and $L_2$Boost-IMP algorithms can adapt to any $v$th-order smoothness of $\mathcal{W}^{(v,2)}(\mathcal{X})$. For example, with a cubic smoothing spline ($r=2$) and $v=2$, $f^{(t_n)}$ can achieve the optimal MSE rate of $n^{-4/5}$ by selecting $t_n=O\left(n^{4/5}\right)$. For higher-order smoothness, such as $v$ exceeding $r$ with $v=3$, $f^{(t_n)}$ can attain an optimal MSE rate of $n^{-6/7}$ with $t_n=O\left(n^{4/7}\right)$. While the $L_2$Boost-CUT and $L_2$Boost-IMP algorithms can also adapt to functions with lower-order smoothness ($v<r$), this adaptability may not provide additional gains in such scenarios, as noted by \cite{BuhlmannYu2003} for the scenario with complete data.

\subsection{Classification}
In classification tasks, the goal is to estimate the probability $P(Y_i>s)$ for given time $s$ in order to determine a predicted value for new features. In this instance, we define $g(Y_i)=2I(Y_i>s)-1$, and let $g_s(Y_i)$ denote it to stress the dependence on $s$. At iteration $t$, $L_2$Boost-CUT provides an estimate, denoted $f_s^{(t)}$, for $E\{g_s(Y_i)|X_i\}=2p_s(X_i)-1$, where $p_s(X_i)\triangleq E\{I(Y_i>s)|X_i\}$. Here,  $f_s^{(t)}$ represents $f^{(t)}$ in Algorithm \ref{alg1} with the dependence on $s$ explicitly spelled out.

In line with \cite{BuhlmannYu2003}, estimating $2p_s(X_i)-1$ can be loosely regarded as analogous to (\ref{eq: regression}):
\begin{equation*}
g_s(Y_i)=2p_s(X_i)-1+\epsilon_i \quad\text{for }i=1,\ldots,n,
\end{equation*}
where the noise term $\epsilon_i$ has $E(\epsilon_i)=0$ and var$(\epsilon_i)=4p_s(X_i)\{1-p_s(X_i)\}$. Because the variances var$(\epsilon_i)$ for $i=1,\ldots,n$ are upper bounded by 1, Theorem \ref{thm3} can be modified to give the optimal MSE rates for using the $L_2$Boost-CUT and $L_2$Boost-IMP methods to estimate $p_s$.
\begin{theorem}\label{thm4} Consider the setup in Theorem \ref{thm3}. If $p_s$ belongs to $\mathcal{W}^{(v,2)}(\mathcal{X})$, then there exists $t_n=O\left(n^{2r/(2v+1)}\right)$ such that $f^{(t_n)}$ achieves the minimax-optimal rate, $O\left(n^{-2v/(2v+1)}\right)$, which minimizes MSE as defined in (\ref{eq: mse}).
\end{theorem}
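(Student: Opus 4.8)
\textbf{Proof proposal for Theorem \ref{thm4}.}
The plan is to reduce Theorem \ref{thm4} to Theorem \ref{thm3} by recasting the classification task as a pseudo-regression problem with bounded, heteroscedastic noise, and then to account for the range constraint in (\ref{eq: tf}). First I would adopt the working model $g_s(Y_i)=2p_s(X_i)-1+\epsilon_i$, identifying the regression target as $\phi_s(X_i)\triangleq 2p_s(X_i)-1$, which inherits the smoothness $p_s\in\mathcal{W}^{(v,2)}(\mathcal{X})$ and hence lies in $\mathcal{W}^{(v,2)}(\mathcal{X})$. The noise obeys $E(\epsilon_i)=0$ and $\text{var}(\epsilon_i)=4p_s(X_i)\{1-p_s(X_i)\}\leq1$, so the key departure from Theorem \ref{thm3} is that the per-observation variances vary with $i$ yet remain uniformly bounded.

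Next I would revisit the decomposition of Proposition \ref{prop3} together with the spectral formulas of Proposition \ref{prop4}, applied to the unconstrained linear boosting iterate $B^{(t)}\vec{Y}_1$, that is, Algorithm \ref{alg1} run with update (\ref{eq: fm1l2}) in place of (\ref{eq: tf}). The squared-bias term is structurally identical, equal to $n^{-1}\sum^n_{l=1}\mu_l^2(1-\lambda_l)^{2t+2}$ with $\mu=Q^\top\vec{\phi}_s$, so it decays exactly as in Theorem \ref{thm3}. The variance term, however, can no longer be pulled through a single $\hat{\sigma}^2$: writing $\Sigma$ for the diagonal covariance of $\vec{Y}_1$ and $c$ for a uniform bound on its entries (so $\Sigma\preceq cI$), and using that $B^{(t)}=I-(I-\Psi)^{t+1}$ from Proposition \ref{prop2} is symmetric, I would bound $\text{var}(t,\Psi)=n^{-1}\Tr\{B^{(t)}\Sigma (B^{(t)})^\top\}\leq c\,n^{-1}\Tr\{(B^{(t)})^2\}=c\,n^{-1}\sum^n_{l=1}\{1-(1-\lambda_l)^{t+1}\}^2$, where the inequality uses monotonicity of the trace on positive semidefinite matrices. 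Thus the variance is governed by the same spectral sum as in the regression case, up to the constant $c\leq1$.

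Since both terms are now controlled by exactly the quantities appearing in Theorem \ref{thm3}, I would reuse its smoothing-spline eigenvalue analysis: the eigenvalues $\lambda_l$ of the degree-$r$ smoother decay at the known rate, and balancing the exponentially decaying bias against the growing variance yields the same optimal $t_n=O(n^{2r/(2v+1)})$ and the minimax rate $O(n^{-2v/(2v+1)})$ for $B^{(t_n)}\vec{Y}_1$. It then remains to pass to the actual constrained estimator $f_s^{(t)}$ of (\ref{eq: tf}). Because the target $\phi_s(X_i)=2p_s(X_i)-1$ lies in $[-1,1]$ and the map $u\mapsto\text{sign}(u)\min(1,|u|)$ is the coordinatewise projection onto $[-1,1]$, each such projection moves the iterate no farther from $\phi_s(X_i)$; this per-step contraction indicates that the range constraint does not inflate the MSE relative to the unconstrained analysis, which I make rigorous below.

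I expect the variance step and the constraint step to be the two main obstacles. The first replaces the clean factorization behind Proposition \ref{prop4}, which relied on homoscedastic noise, by the operator-norm/trace bound above; this is where uniform boundedness of $4p_s(X_i)\{1-p_s(X_i)\}$ by $1$ is essential. The second is more delicate: the recursion in (\ref{eq: tf}) feeds the clipped iterate $f_s^{(t-1)}$ back into $\tilde{f}_s^{(t)}=f_s^{(t-1)}+\hat{h}^{(t)}$, so the constrained sequence is not simply the projection of $B^{(t)}\vec{Y}_1$. To close this gap rigorously I would argue by induction on $t$, combining the per-step $1$-Lipschitz contraction of the clip onto $[-1,1]$ with the residual-fitting structure of the base learner, thereby showing the constrained sequence inherits the rate established for the unconstrained one.
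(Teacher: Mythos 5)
Your core reduction is essentially the route the paper takes --- in fact the paper never writes a separate proof of Theorem \ref{thm4} at all: the appendix jumps from the proof of Theorem \ref{thm3} directly to that of Theorem \ref{thm5}, and the entire justification is the remark preceding the theorem that the pseudo-noise variances $4p_s(X_i)\{1-p_s(X_i)\}$ are bounded by $1$, so that ``Theorem \ref{thm3} can be modified.'' On one real point you are more careful than the paper: the proof of Proposition \ref{prop4} uses $\mathrm{cov}(\vec{Y}_1)=\hat{\sigma}^2 I$, i.e.\ homoscedasticity, which fails in the classification setting, and your bound $n^{-1}\Tr\{B^{(t)}\Sigma B^{(t)}\}\leq c\,n^{-1}\Tr\{(B^{(t)})^2\}$ for diagonal $\Sigma\preceq cI$ with $c\leq 1$ is exactly the fix needed; the bias term is indeed unchanged, so the balancing in Theorem \ref{thm3} goes through. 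Up to that point your proposal is correct and matches (indeed sharpens) the paper's intended argument.

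The genuine gap is your final step, the constrained update (\ref{eq: tf}). You correctly observe that because the clipped iterate is fed back into the next residual computation, $f_s^{(t)}$ is \emph{not} the coordinatewise projection of $B^{(t)}\vec{Y}_1$; but the induction you promise to ``make rigorous'' cannot close this as described. Writing $\vec{e}^{(t)}$ for the constrained error and $\vec{\xi}=\Psi(\vec{Y}_1-\vec{\phi}_s)$, per-step non-expansiveness of the clip toward any point of $[-1,1]^n$ gives only $\|\vec{e}^{(t)}\|\leq\|(I-\Psi)\vec{e}^{(t-1)}+\vec{\xi}\|$, while the unconstrained error satisfies the same recursion with equality. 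Knowing $\|\vec{e}^{(t-1)}\|\leq\|\vec{d}^{(t-1)}\|$ does \emph{not} imply $\|(I-\Psi)\vec{e}^{(t-1)}+\vec{\xi}\|\leq\|(I-\Psi)\vec{d}^{(t-1)}+\vec{\xi}\|$: the affine driving term makes the update non-monotone in the norm (alignment with $\vec{\xi}$ matters), and clipping mixes eigendirections, destroying the exact spectral decoupling on which the whole analysis rests. Falling back on operator-norm bounds accumulates an $O(\|\vec{\xi}\|)$ term per iteration, which ruins the rate as $t_n\to\infty$. The two honest resolutions are: (i) prove the rate for the unconstrained iterate $B^{(t)}\vec{Y}_1$ and apply the clip only to the terminal output (not fed back), where $\phi_s\in[-1,1]$ makes the projection MSE-decreasing --- this is what the paper, following B\"uhlmann and Yu, implicitly does, at the cost of analyzing a slightly different algorithm than (\ref{eq: tf}) literally specifies; or (ii) genuinely analyze the projected dynamics, which amounts to early-stopping theory for projected preconditioned gradient descent and is beyond both your sketch and the paper.
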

Next, similar to \cite{BuhlmannYu2003}, we define the averaged Bayes risk (BR) for any $s>0$:
\begin{equation*}
    \text{BR}_s=n^{-1}\sum^n_{i=1}\Pr\left\{\text{sign}\left(2p_s(X_i)-1\right)\neq g_s(Y_i)\right\}.
\end{equation*}
\begin{theorem}\label{thm5} Assume the regularity conditions in Theorem \ref{thm4} hold. Then there exists $t_n=O\left(n^{2r/(2v+1)}\right)$ such that
\begin{equation*}
    n^{-1}\sum^n_{i=1}\Pr\left(f_s^{(t_n)}(X_i)\neq g_s(Y_i)\right)-\text{BR}_s=O\left(n^{-v/(2v+1)}\right).
\end{equation*}
\end{theorem}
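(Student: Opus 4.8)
The plan is to reduce the excess misclassification risk to the $L_2$ estimation error that Theorem~\ref{thm4} already controls, via the classical plug-in comparison inequality. Write $\eta_i\triangleq 2p_s(X_i)-1$ for the target regression function, so that the Bayes rule at design point $X_i$ is $\text{sign}(\eta_i)$, the boosting classifier is $\text{sign}\left(f_s^{(t_n)}(X_i)\right)$ (the event in the statement is read through the sign, since $f_s^{(t_n)}(X_i)\in[-1,1]$ while $g_s(Y_i)\in\{-1,1\}$), and $\text{BR}_s=n^{-1}\sum_{i=1}^n \Pr\{\text{sign}(\eta_i)\neq g_s(Y_i)\}$. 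First I would condition on the (censored) training sample, fixing $f_s^{(t_n)}(X_i)$, and average over the label $g_s(Y_i)$, whose conditional success probability is $p_s(X_i)$. A direct computation shows the conditional excess risk at $X_i$ equals $|\eta_i|$ when $\text{sign}\left(f_s^{(t_n)}(X_i)\right)\neq\text{sign}(\eta_i)$ and vanishes otherwise; taking expectation over the sample yields the pointwise identity
\begin{equation*}
\Pr\left\{\text{sign}\left(f_s^{(t_n)}(X_i)\right)\neq g_s(Y_i)\right\}-\Pr\{\text{sign}(\eta_i)\neq g_s(Y_i)\}=E\left[|\eta_i|\,I\left\{\text{sign}\left(f_s^{(t_n)}(X_i)\right)\neq\text{sign}(\eta_i)\right\}\right].
\end{equation*}

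The second step is the sign-mismatch bound: whenever $\text{sign}\left(f_s^{(t_n)}(X_i)\right)\neq\text{sign}(\eta_i)$, the two scores lie on opposite sides of $0$, so $|\eta_i|\leq\left|f_s^{(t_n)}(X_i)-\eta_i\right|$. Substituting into the identity and discarding the indicator gives the pointwise bound $E\left[|\eta_i|\,I\{\cdots\}\right]\leq E\left|f_s^{(t_n)}(X_i)-\eta_i\right|$. Averaging over $i$ and applying Jensen's inequality twice (concavity of the square root, both within each term and across the average) to pass from the $L_1$ to the $L_2$ error yields
\begin{equation*}
n^{-1}\sum_{i=1}^n\left[\Pr\left\{\text{sign}\left(f_s^{(t_n)}(X_i)\right)\neq g_s(Y_i)\right\}-\Pr\{\text{sign}(\eta_i)\neq g_s(Y_i)\}\right]\leq\sqrt{\text{MSE}(t_n,\Psi;\eta)}.
\end{equation*}

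The final step invokes Theorem~\ref{thm4}: since $p_s\in\mathcal{W}^{(v,2)}(\mathcal{X})$ and $\eta=2p_s-1$ is its affine image (hence in the same smoothness class and exactly the target $\phi$ of the classification regression model), choosing $t_n=O\left(n^{2r/(2v+1)}\right)$ gives $\text{MSE}(t_n,\Psi;\eta)=O\left(n^{-2v/(2v+1)}\right)$ in the sense of (\ref{eq: mse}). Taking the square root delivers the claimed rate $O\left(n^{-v/(2v+1)}\right)$, and the same argument applies verbatim to $L_2$Boost-IMP.

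I expect the main obstacle to be the careful bookkeeping of randomness in the first step rather than any deep estimate. Establishing the conditional excess-risk identity requires the evaluation label $g_s(Y_i)$ to follow its conditional Bernoulli law with success probability $p_s(X_i)$ given the fitted score $f_s^{(t_n)}(X_i)$, while the boosted score is held fixed; this is where the conditionally independent censoring assumption of Section~\ref{subsec: icd} and the fixed-design averaging convention underlying (\ref{eq: mse}) must be combined cleanly. Once that identity is secured, the sign-mismatch inequality and the two Jensen steps are elementary, and the rate follows immediately from Theorem~\ref{thm4}.
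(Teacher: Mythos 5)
Your proposal is correct and takes essentially the same route as the paper: the comparison inequality you derive by hand (conditional excess-risk identity, sign-mismatch bound, then Jensen to pass from $L_1$ to $L_2$) is precisely the content of Theorem 2.3 of \cite{DevroyeGyorfi1996}, which the paper invokes as a black box to obtain the bound $2\sqrt{\text{MSE}(t_n,\Psi;\cdot)}$, after which both arguments conclude identically by applying Theorem~\ref{thm4} and taking the square root. The only differences are cosmetic: you re-prove the plug-in comparison lemma rather than cite it (your constant is $1$ instead of $2$ because you work in the $\pm1$ parametrization), and you explicitly flag the fixed-design/fresh-label convention that the paper's one-line proof leaves implicit.
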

Theorem \ref{thm5} shows that, for $L_2$Boost-CUT and $L_2$Boost-IMP, the difference between the empirical misclassification rate and BR$_s$ is of order $O\left(n^{-v/(2v+1)}\right)$, which approaches 0 as $n\to\infty$.

\section{Experiments and Data Analyses}\label{sec: eda}
{\bf Experimental setup.} Each experimental setup involves conducting 300 experiments with a sample size $n$. For $i=1, \ldots, n$, let $X_i=(X_{1, i},\ldots, X_{p,i})^\top$, where the $X_{l,i}$ are independently drawn from the uniform distribution over $[0, 1]$ for $l=1,\ldots,p$ and $i=1, \ldots, n$. The responses $Y_i$ are then independently generated from an accelerated failure time (AFT) model \citep{Sun2006}, given by (\ref{eq: regression}), where $g(u)=\log u$, and the error terms $\epsilon_i$ are independently generated from either a normal distribution $N(0,\sigma^2)$ with variance $\sigma^2$ or the logistic distribution with location and scale parameters set as 0 and $1/8$, respectively. For $i=1, \ldots, n$, we generate $m$ monitoring times independently from a uniform distribution over $[0,\tau]$, and then order them as $u_{i,1}<u_{i,2}<\ldots<u_{i,m}$. We set $n=500$, $\sigma=0.25$, $p=1$, $\tau=6$, $m=3$, and $\phi(X_i)=\beta_0|X_i-0.5|+\beta_1X_i^3+\beta_{2}\sin(\pi X_i)$, with  $\beta_0=1$, $\beta_1=0.8$, and $\beta_2=0.8$.

{\bf Learning methods and evaluation metrics.}
We analyze synthetic data using the proposed $L_2$Boost-CUT (CUT) and $L_2$Boost-IMP (IMP) methods, as opposed to three other methods: the oracle (O) method uses the oracle dataset $\mathcal{O}^\text{TR}_\text{O}\triangleq \{\{\phi(X_i), X_i\}: i=1, \ldots,n_1\}$ with true values of $\phi(X_i)$, the reference (R) method uses the complete dataset $\mathcal{O}^\text{TR}_\text{C}\triangleq \{\{Y_i, X_i\}: i=1, \ldots,n_1\}$, and the naive (N) method employs a surrogate response $\tilde{Y}_i\triangleq\frac{1}{2}(L_i+R_i)$ if $R_i<\infty$ and $\tilde{Y}_i\triangleq L_i$ otherwise, together with $X_i$. While the O and R methods require full data availability, which is unrealistic in real-world applications, they provide upper performance bounds under ideal, fully informed conditions. This, in turn, benchmarks how our methods perform in realistic settings.

Synthetic data are split into training and test datasets in a $4:1$ ratio. We assess the performance of each method using sample-based maximum absolute error (SMaxAE), sample-based mean squared error (SMSqE), and sample-based Kendall's $\tau$ (SKDT), for regression tasks, along with {\it sensitivity} and {\it specificity} for classification tasks. Details are provided in Appendix \ref{app: DEM}.

\textbf{Experiment results.} Figure \ref{fig: es1st} summarizes the SMaxAE, SMSqE, and SKDT values using boxplots for predicting survival times. The N method produces the largest SMaxAE and SMSqE values yet the smallest SKDT values, whereas the proposed CUT and IMP methods outperform the N method, yielding values fairly comparable to those of the R method. Figure \ref{fig: es1ss} displays the sensitivity and specificity metrics for predicting survival status, where sensitivity plots for $s=4$ and specificity plots for $s=1$ are omitted because no corresponding positive and negative cases exist; and the CUT and IMP methods produce identical lines. The N method produces similar specificity values but significantly lower sensitivity values compared to the proposed CUT and IMP methods. To evaluate how the performance of the proposed methods is affected by various factors, including sample sizes, data generation models, noise levels, and different implementation ways of ICRF, we conduct additional experiments in Appendix \ref{app: ae}.

\begin{figure}[h!]
\includegraphics[width=\linewidth]{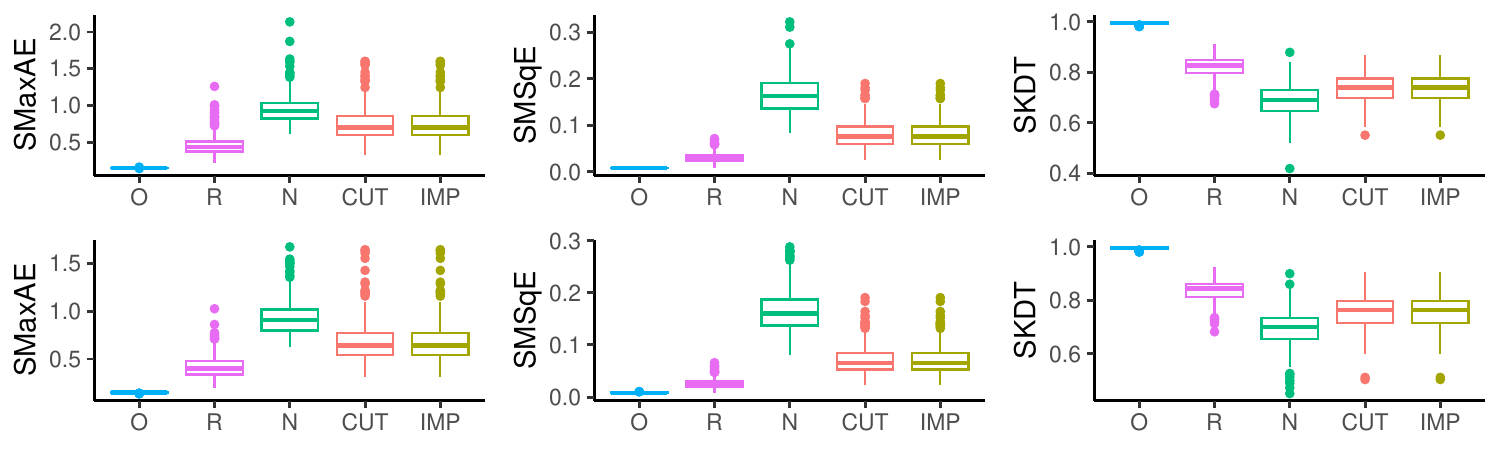}
    \vspace{-5mm}
\caption{\it Experiment results of predicting survival times. The top and bottom rows correspond to the lognormal AFT and loglogistic AFT models, respectively.}
    \label{fig: es1st}
\end{figure}

\begin{figure}[h!]
    \vspace{-3mm}
    \includegraphics[width=\linewidth]{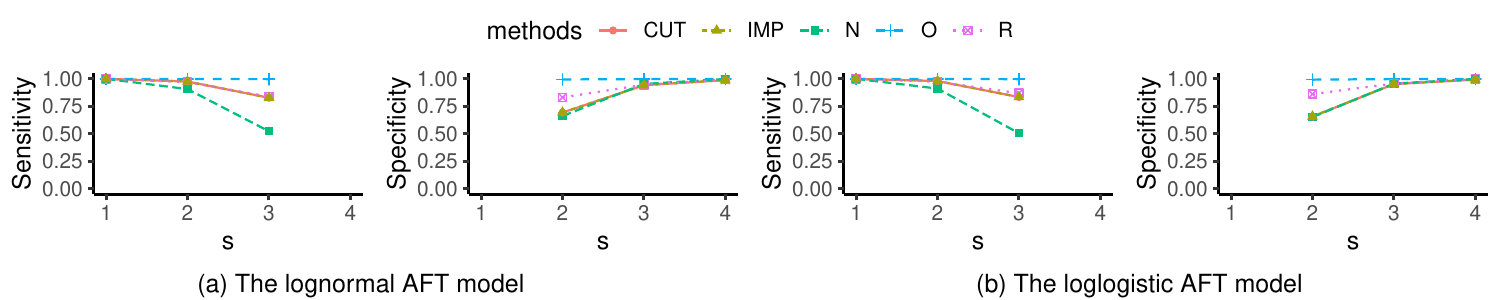}
    \vspace{-5mm}
    \caption{\it Experiment results of predicting survival status.}
    \label{fig: es1ss}
\end{figure}

{\bf Data analyses.} We apply the proposed CUT and IMP methods as well as the N method to analyze two datasets, Signal Tandmobiel\textsuperscript{\textregistered} data and Bangkok HIV data, whose details are included in Appendix \ref{app: DA}. In addition, we implement a procedure (denoted COX) based on the Cox model, though its results are not directly comparable to those three methods. Details are provided in Appendix \ref{app: nlccm}. Figure \ref{fig: da} reports boxplots for the values of  $\exp\left\{\hat{f}^*(X_i)\right\}$, where $\hat{f}^*(X_i)$ represents an estimate from a method. Clearly, both the CUT and IMP methods yield comparable estimates, while the N method produces smaller estimates. The results from COX appear to be closer to those from the two proposed methods than those from the N method.

\begin{figure}[h]
\vspace{-3mm}
   \begin{subfigure}{0.5\textwidth}
   \includegraphics[width=0.9\textwidth]{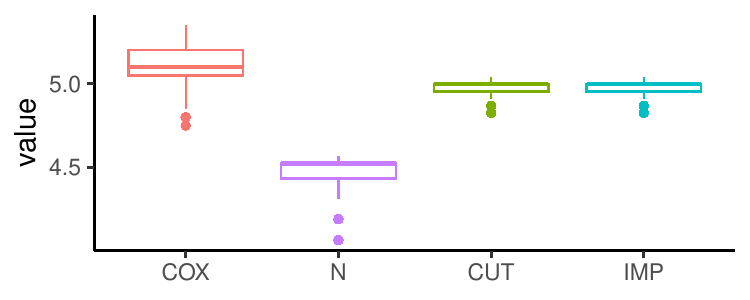}
   \vspace{-2mm}
   \caption{Signal Tandmobiel\textsuperscript{\textregistered} data.}
   \end{subfigure}
    \begin{subfigure}{0.5\textwidth}
    \includegraphics[width=0.9\textwidth]{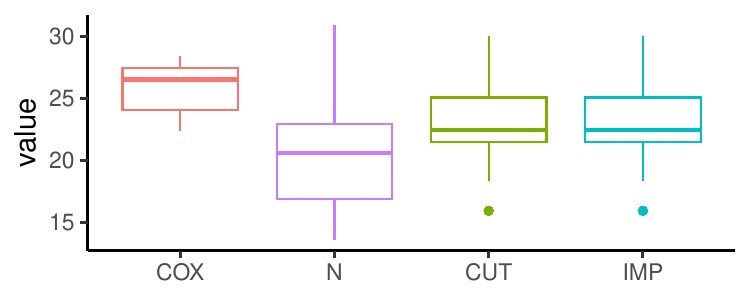}
    \vspace{-2mm}
   \caption{Bangkok HIV data.}
   \end{subfigure}
   \vspace{-5mm}
   \caption{\it Boxplots of data analysis results.}
    \label{fig: da}
\end{figure}

\section{Discussion}
In this paper, we introduce the boosting algorithms tailored for interval-censored data. These methods offer the flexibility in predicting various survival outcomes, including survival times, survival probabilities, and survival status at specified time points. Further discussions, including computational complexity and extensions, are included in Appendix \ref{app: de}. Like all methods, the validity of our approaches depends on certain conditions, as outlined in Appendix \ref{app: rc}. For example, using smoother matrices that fail to meet these conditions may compromise their effectiveness. Condition (C4) states the importance of employing weak learners as base learners. Using overly strong base learners could violate this assumption and negatively impact the performance. Our methods inherently depend on  estimation of the survivor function, which typically utilizes ICRF. While this ensures robust and consistent estimation, it requires additional computational cost as shown in Appendix \ref{app: cc} and Table \ref{tab: ct} in Appendix \ref{app: ctc}. This computational cost reflects the price paid to achieve the methodological robustness that our approaches offer.



\section*{Acknowledgments}
This research was supported by the Natural Sciences and Engineering Research Council of Canada (NSERC). Yi is a Canada Research Chair in Data Science (Tier 1). Her research was also supported by the Canada Research Chairs Program.

\bibliographystyle{apalike}
\bibliography{references}

\appendix
\numberwithin{equation}{section}
\numberwithin{figure}{section}
\numberwithin{table}{section}
\section*{Appendices: Technical Details and Additional Experiment Results}
\section{Regularity Conditions}\label{app: rc}
For features, we consider the same assumptions as in \cite{BuhlmannYu2003}. Furthermore, we assume the following conditions.
\begin{itemize}
    \item[(C1)] The smoother matrix $\Psi$ is real and symmetric having eigenvalues $\{\lambda_1, \ldots,\lambda_n\}$ and corresponding normalized eigenvectors $\{Q_1, \ldots,Q_n\}$, with $Q_i^\top Q_i=1$ for $i=1,\ldots,n$.
    \item[(C2)] The eigenvalues $\lambda_k$ satisfy $0<\lambda_k\leq 1$ for $k=1, \ldots,n$.
    \item[(C3)] There exists at least one $k$ in $\{1,\ldots,n\}$ such that $\lambda_k < 1$.
    \item[(C4)] There exists $m_0\geq2$ such that for all $k$ with $\lambda_k<1$, $$\mu_k^2/\hat{\sigma}^2>1/(1-\lambda_k)^{m_0}-1.$$
    \item[(C5)] For $\hat{\epsilon}$ defined in Theorem \ref{thm2}, $E\left(\hat{\epsilon}^q\right)<\infty$ for $q=1, 2, \ldots$.
\end{itemize}

The smoother matrix $\Psi$, which satisfies conditions (C1) and (C2), includes projection smoothers and shrinking smoothers \citep[][Chapter 5.4]{HastieTibshirani2009}, such as smoothing splines introduced in Appendix \ref{app: rss}. To clarify further, shrinking smoothers also satisfy condition (C3), whereas projection smoothers do not. Condition (C4) encompasses the condition in Theorem 1(c) of \cite{BuhlmannYu2003} as a special case. It can be interpreted as follows: a large value on the left-hand side suggests that $\phi$ is relatively complex compared to the estimated noise level $\hat{\sigma}^2$, while a small value on the right-hand side implies that $\lambda_k$ is small, which indicates that the learner either applies strong shrinkage or smoothing in the $k$th eigenvector direction, or is inherently weak in that direction.

\section{Smoothing Splines}\label{app: rss}

To introduce smoothing splines, we start with considering the simple case where the features $X_i$ is one-dimensional. For $\mathcal{X}\subseteq \mathbb{R}$ and a positive $v$, let
\begin{equation*}
\mathcal{W}^{(v,2)}(\mathcal{X})=\left\{g: \mathcal{X}\to\mathbb{R}\left|\ g\text{ is differentiable up to order } v \text{ and }\int_{x\in\mathcal{X}}\left\{g^{(v)}(x)\right.\right\}^2dx<\infty\right\}
\end{equation*}
denote the {\it Sobolev space} of the $v$th-order smoothed functions defined over $\mathcal{X}$.

Let $r$ be a positive integer. At iteration $t$ in Algorithm \ref{alg1}, we find a smoothing spline learner of degree $r$, denoted $\hat{h}^{(t)}$, by solving the penalized least squares problem:
\begin{equation}
\label{eq: ss}
\hat{h}^{(t)}=\argmin_{h^{(t)}\in\mathcal{W}^{(v,2)}(\mathcal{X})}\left[n^{-1}\sum^n_{i=1}\left\{-\partial \hat{L}\left(\mathcal{O}_i,f^{(t-1)}(X_i)\right)-h^{(t)}(X_i)\right\}^2+\lambda\int_{x\in\mathcal{X}}\left\{h^{(t)(r)}(x)\right\}^2dx\right],
\end{equation}
where $h^{(t)(r)}$ represents the $r$th order derivative of $h^{(t)}$, and $\lambda$ is a tuning parameter. Here, the dependence of $\hat{h}^{(t)}$ on the tuning parameter $\lambda$, smoothness degree $r$, and $v$ is suppressed in the notation.

Taking $v=r=2$ often offers a viable way to handle practical problems, yielding cubic smoothing splines \citep{HastieTibshirani2009}. Varying $\lambda$ from 0 to $\infty$ accommodates different forms of $\hat{h}^{(t)}$. Setting $\lambda=0$ imposes no penalty in (\ref{eq: ss}), and $\hat{h}^{(t)}$ is then a natural spline that interpolates $\left(X_i, -\partial \hat{L}\left(\mathcal{O}_i,f^{(t-1)}(X_i)\right)\right)$ for $i=1,\ldots,n$; taking $\lambda=\infty$ leads $\hat{h}^{(t)}$ in (\ref{eq: ss}) to be the $r$th order polynomials if $v\geq r$ \citep{Wang2011}. The larger $\lambda$ is, the weaker a base learner is. Though the value of $\lambda$ is crucial to the success of the learning process, it is difficult to decide an optimal or reasonable value for $\lambda$ when using smoothing splines. Alternatively, noting that a more interpretable parameter, {\it degrees of freedom}, defined as 
\begin{equation}
\label{eq: df}
df\triangleq \text{Trace}(\Psi),   
\end{equation}
is monotone in $\lambda$, \citeauthor{HastieTibshirani2009} (\citeyear{HastieTibshirani2009}, p.158) suggested to invert the relationship and specify $\lambda$ by fixing $df$.

Though we start with the infinite dimensional space $\mathcal{W}^{(v,2)}(\mathcal{X})$, $\hat{h}^{(t)}$ in (\ref{eq: ss}) is showed to be a natural polynomial spline with knots at all distinct $X_i$ for $i=1,\ldots,n$ \citep{Eubank1988}, which belongs to a finite dimensional space (\citeauthor{HastieTibshirani2009}, \citeyear{HastieTibshirani2009}, Chapter 5.4; \citeauthor{Wang2011}, \citeyear{Wang2011}). Let $\left\{N_l:l=1,\ldots,n\right\}$ denote a set of $n$ second-order differentiable basis functions for the family of natural splines, and let $N$ and $\Omega$ denote matrices with the $(i,l)$ entry equaling $N_l(X_i)$ and $\int_{x\in\mathcal{X}} N_i''(x)N_l''(x)dx$, respectively. Let $\hat{\theta}_l$ denote the $l$th element of $\left(N^\top N+\lambda \Omega\right)^{-1}N^\top\vec{u}^{(t-1)}$. Further, assuming the $X_i$ are all distinct for $i=1,\ldots,n$, \citeauthor{HastieTibshirani2009} (\citeyear{HastieTibshirani2009}, Chapter 5.4) showed that $\hat{h}^{(t)}$ in (\ref{eq: ss}) with $v=r=2$ can be written as
$$\hat{h}^{(t)}=\sum^n_{l=1}N_l\hat{\theta}_l.$$
That is, the cubic smoothing spline with a pre-specified $\lambda$ is a linear smoother with $\Psi$ in (\ref{eq: ht1ut}) equaling $N\left(N^\top N+\lambda \Omega\right)^{-1}N^\top$ \citep
{HastieTibshirani2009}.

Next, we consider the general case where $\mathcal{X}\subseteq\mathbb{R}^p$, for which we may employ (\ref{eq: ss}) elementwisely to update a base learner in a manner similar to \cite{BuhlmannYu2003}. Specifically, at iteration $t$, consider each component $X_{l,i}$ of $X_i\triangleq \left(X_{1,i},\ldots,X_{p,i}\right)^\top$, we employ the smoothing spline with the selected feature $X_{\hat{l}_t,i}$, where $\hat{l}_t\in\{1,\ldots,p\}$ is determined by
$$\hat{l}_t=\argmin_{1\leq l\leq p}\sum^n_{i=1}\left\{-\partial \hat{L}\left(\mathcal{O}_i,f^{(t-1)}(X_i)\right)-\hat{h}_l^{(t)}(X_{l,i})\right\}^2.$$
Here, $\hat{h}_l^{(t)}(X_{l,i})$ is the smoothing spline as in (\ref{eq: ss}) obtained from replacing $X_i$ in (\ref{eq: ss}) with the feature $X_{l,i}$.

\section{Estimation with Interval-Censored Recursive Forests}\label{app: eICRF}
Here, we describe our estimation detail with the interval-censored recursive forests algorithm. Let $T$ and $D$ denote the total number of iteration and the number of bootstrap samples, respectively. We describe the estimation procedure as follows.

\begin{description}
    \item Step 1.  We set an initial estimate for $S(y|X_i)$, denoted $\hat{S}^{(0)}(y|X_i)$. A simple way is to set $\hat{S}^{(0)}(y|X_i)$ to be the nonparametric maximum likelihood estimate (NPMLE) of unconditional survivor function of $Y_i$, denoted $\hat{S}(y)$ \citep{Turnbull1976}. Then for $i=1,\ldots,n$, we employ the kernel smoothing technique to obtain a smoothed estimate of $S(y|X_i)$, denoted by $\tilde{\lambda}^{(0)}(y|X_i)$. That is, $$\tilde{\lambda}^{(0)}(y|X_i)=1+\int^y_{0}\int_{\mathbb{R}^+}\frac{1}{h}K_h(s-v)d\hat{S}^{(0)}(v|X_i)ds,$$
    where $K_h$ is a kernel function with bandwidth $h>0$.
    \item Step 2. At iteration $t$, we draw $D$ independent bootstrap samples with size $\lceil 0.95n\rceil$ from $\mathcal{O}^\text{IC}$, denoted as $\mathcal{O}_1^{(t)},\ldots, \mathcal{O}_D^{(t)}$, where $\lceil\cdot\rceil$ is the ceiling function; and keep $\mathcal{O}^\text{IC}$\textbackslash $\mathcal{O}_d^{(t)}$ as the out-of-bag sample for $d=1,\ldots,D$, denoting them as $\mathcal{O}^{\text{OOB},(t)}_1,\ldots, \mathcal{O}^{\text{OOB},(t)}_D$. For each bootstrap sample $\mathcal{O}_d^{(t)}$ with $d = 1,\ldots,D$, we build a tree using two-sample testing rules for interval-censored data based on the conditional survivor function $\hat{S}^{(t-1)}_d(y|X_i)$, say the generalized Wilcoxon’s rank sum (GWRS) test or the generalized logrank (GLR) test \citep{ChoJewell2022}.

    Specifically, at each node, we randomly pick $\lceil \sqrt{p}\rceil$ features, with $p$ denoting the dimension of features, and then we find the optimal cutoff suggested by GWRS or GLR. Let $L^{(t)}_d$ denote the total number of terminal nodes of the resulting tree for the $d$th bootstrap sample at iteration $t$. For $l=1,\ldots, L^{(t)}_d$, let $A^{(t)}_{d,l}$ denote the $l$th terminal node in the $d$th tree. At the $l$th terminal node of the tree, we estimate the survival probabilities for each node, denoted $\hat{S}_{d,l}^{(t)}\left(y|A^{(t)}_{d,l}\right)$, using the {\it quasi-honest} or {\it exploitative} approaches. The quasi-honesty approach employs the NPMLE based on raw interval-censored data, whereas the exploitative approach averages the estimates of the conditional survivor function from iteration $t-1$ \citep{ChoJewell2022}. The exploitative approach is computationally efficient, while the estimator obtained from the quasi-honesty approach exhibits uniform consistency, provided regularity conditions \citep{ChoJewell2022}. However, the finite sample performance of these two approaches varies, with neither consistently outperforming the other \citep{ChoJewell2022}.

    To presume some degree of smoothness in the true survivor function, $\hat{S}^{(t)}_{d,l}$ is further smoothed as $\tilde{\lambda}^{(t)}_{d,l}$ using the kernel-smoothing technique, yielding a smoothed estimate of the conditional survivor function $\tilde{\lambda}(y|X_i)$ \citep{ChoJewell2022}.
    \item Step 3. Calculate the conditional survivor function for the $d$th tree and its smoothed version as $$\hat{S}^{(t)}_d(y|X_i)=\sum^{L^{(t)}_d}_{l=1}\hat{S}^{(t)}_{d,l}\left(y|A^{(t)}_{d,l}\right)I\left(X_i\in A^{(t)}_{d,l}\right)$$
    and
    $$\tilde{\lambda}^{(t)}_d(y|X_i)=\sum^{L^{(t)}_d}_{l=1}\tilde{\lambda}^{(t)}_{d,l}\left(y|A^{(t)}_{d,l}\right)I\left(X_i\in A^{(t)}_{d,l}\right).$$

    Calculate the out-of-bag error as the integrated mean squared error (IMSE)
    \begin{align*}
    \epsilon^{(t)}_d \triangleq \ & \frac{1}{n^\text{OOB}}\sum^{n^\text{OOB}}_{i=1}\frac{1}{\tau-(R_i\wedge\tau)+(L_i\wedge\tau)}\\
    &\times\left\{\int^{L_i\wedge\tau}_0\left(1-\tilde{\lambda}^{(t)}_d(s|X_i)\right)^2ds+\int^{R_i\wedge \tau}_{R_i}\left(\tilde{\lambda}^{(t)}_d(s|X_i)\right)^2ds\right\},
    \end{align*}
    where $n^\text{OOB}\triangleq n-\lceil 0.95n\rceil$ denotes the sample size of $\mathcal{O}^{\text{OOB},(t)}_d$, and $a\wedge b \triangleq\min(a,b)$.

    \item Step 4. Averaging the corresponding quantities over $D$ trees, we obtain that
    $$\hat{S}^{(t)}(y|X_i)=\frac{1}{D}\sum^D_{d=1}\hat{S}^{(t)}_d(y|X_i),\ \tilde{\lambda}^{(t)}(y|X_i)=\frac{1}{D}\sum^D_{d=1}\tilde{\lambda}^{(k)}_d(y|X_i),\text{ and } \epsilon^{(t)}=\frac{1}{D}\sum^D_{d=1}\epsilon^{(t)}_d.$$

    Then the final estimate of $S(y|X_i)$ is determined as $\tilde{\lambda}(y|X_i)=\tilde{\lambda}^{(t_\text{opt})}_d(y|X_i)$, with $k_\text{opt}=\argmin_{1\leq t\leq T}\epsilon^{(t)}$.

    \item Step 5. We approximate $\int^{u_j}_{u_{j-1}}\{g(y)\}^kdS(y|X_i)$ in (\ref{eq: cutk}) as
    \begin{equation*}
    \sum_{l=1}^{m_v-1} \left\{g(v_l)\right\}^k\left\{\tilde{\lambda}(v_{l+1}|X_i)-\tilde{\lambda}(v_l|X_i)\right\}I(u_{j-1}\leq v_{l-1}< v_{l} \leq u_j).
    \end{equation*}
\end{description}

\section{Proofs of Theoretical Results}\label{app: ptr}
\begin{proof}[Proof of Proposition \ref{prop1}]
\begin{equation*}
\begin{split}
E\{L_\text{CUT}(\mathcal{O}_i, f(X_i))\} =  & \ E\left[\frac{1}{2}\tilde{Y}_2(\mathcal{O}_i)-\tilde{Y}_1(\mathcal{O}_i)f(X_i)+\frac{1}{2}\{f(X_i)\}^2\right]\\
=  & \ \frac{1}{2}E\left\{\tilde{Y}_2(\mathcal{O}_i)\right\}-E\left\{\tilde{Y}_1(\mathcal{O}_i)f(X_i)\right\}+\frac{1}{2}E\left[\{f(X_i)\}^2\right]\\
=  & \ \frac{1}{2}E\left(\sum^{m+1}_{j=1}\Delta_{i,j}E\left[\{g(Y_i)\}^2|\Delta_{i,j}=1,X_i\right]\right)\\
& \ -E\left[f(X_i)\sum^{m+1}_{j=1}\Delta_{i,j}E\left\{g(Y_i)|\Delta_{i,j}=1,X_i\right\}\right]+\frac{1}{2}E\left[\{f(X_i)\}^2\right]\\
=  & \ \frac{1}{2}E\left(E\left[\{g(Y_i)\}^2|X_i\right]\right)-E\left[f(X_i)E\left\{g(Y_i)|X_i\right\}\right]+E\left[\frac{1}{2}\{f(X_i)\}^2\right]\\
=  & \ E\left(\frac{1}{2}E\left[\{g(Y_i)\}^2|X_i\right]\right)-E\left[E\left\{f(X_i)g(Y_i)|X_i\right\}\right]+E\left[\frac{1}{2}\{f(X_i)\}^2\right]\\
=  & \ E\left[\frac{1}{2}\{g(Y_i)\}^2\right]-E\left\{f(X_i)g(Y_i)\right\}+E\left[\frac{1}{2}\{f(X_i)\}^2\right]\\
= & \  E\{L(Y_i, f(X_i))\},
\end{split}
\end{equation*}
where the first step uses (\ref{eq: cut}), the third step is due to (\ref{eq: yktilde}), the fourth and six steps come from the law of total expectation, the fifth step is from the the property of conditional expectation, and the last step uses (\ref{eq: l2gexpanded}).
\end{proof}

To prove Proposition \ref{prop2} - \ref{prop6}, Corollary \ref{coro1}, and Theorems \ref{thm1} -  \ref{thm5}, we adapt the techniques of \cite{BuhlmannYu2003} with modifications tailored to our specific setup.

\begin{proof}[Proof of Proposition \ref{prop2}]
For $\vec{f}^{(0)}$ in Line 1 of Algorithm \ref{alg1}, we choose $\Psi$ such that
\begin{equation}
\label{eq: f0sy}
\vec{f}^{(0)}=\Psi\vec{Y}_1.
\end{equation}

By (\ref{eq: uj}), we obtain that for $t=1,2,\ldots$,
\begin{align*}
\vec{u}^{(t-1)}& =\vec{Y}_1-\vec{f}^{(t-1)}\\
& = \vec{Y}_1-\left(\vec{f}^{(t)}-\vec{h}^{(t)}\right)\\
& = \vec{Y}_1-\vec{f}^{(t)}+\Psi\vec{u}^{(t-1)}\\
& = \vec{u}^{(t)} + \Psi\vec{u}^{(t-1)},
\end{align*}
where the second step is due to Line 5 of Algorithm \ref{alg1}, the third step comes from (\ref{eq: ht1ut}), and the last step is due to (\ref{eq: uj}). Therefore,
\begin{equation}
\label{eq: ujuj-1}
\vec{u}^{(t)} = (I - \Psi)\vec{u}^{(t-1)}.
\end{equation}

Recursively applying (\ref{eq: ujuj-1}), we have that for $t=1,2 \ldots$,
\begin{align}
\vec{u}^{(t-1)}&=(I - \Psi)^{t-1}\vec{u}^{(0)}\notag\\
&=(I - \Psi)^{t-1}\left(\vec{Y}_1-\vec{f}^{(0)}\right)\notag\\
&=(I - \Psi)^{t-1}\left(\vec{Y}_1-\Psi\vec{Y}_1\right)\notag\\
& = (I - \Psi)^t\vec{Y}_1, \label{eq: ut}
\end{align}
where the second step uses (\ref{eq: uj}) and the third step is due to (\ref{eq: f0sy}).

Recursively applying Line 5 of Algorithm \ref{alg1}, we have that for $t=1,2 \ldots$,
\begin{align*}
\vec{f}^{(t)}&=\vec{f}^{(0)}+\sum^t_{j=1}\vec{h}^{(j)}\\
&=\Psi\vec{Y}_1+\sum^t_{j=1}\Psi(I -\Psi)^j\vec{Y}_1\\
&=\sum^t_{j=0}\Psi (I -\Psi)^j\vec{Y}_1\\
&=\left\{I-(I -\Psi)^{t+1}\right\}\vec{Y}_1,
\end{align*}
where the second step uses (\ref{eq: ht1ut}), (\ref{eq: f0sy}), and (\ref{eq: ut}); and the last step comes from the fact that for a symmetric matrix $A$ with $(I-A)$ being invertible, $\sum^t_{j=0}A^j=(I-A)^{-1}\left(I-A^{t+1}\right)$, which may be derived using the same reasoning for geometric series.

Therefore, by (\ref{eq: fvect}), we may set that $B^{(t)}=I-(I - \Psi)^{t+1}$.
\end{proof}

\begin{proof}[Proof of Proposition \ref{prop3}]
We examine the MSE in (\ref{eq: mse}):
\begin{align*}
\text{MSE}(t,\Psi;\phi)=& \ n^{-1}\sum^n_{i=1}E\left[\left\{f^{(t)}(X_i)-\phi(X_i)\right\}^2\right]\\
=& \ n^{-1}\sum^n_{i=1}\left(\text{var}\left\{f^{(t)}(X_i)-\phi(X_i)\right\} +  \left[E\left\{f^{(t)}(X_i)-\phi(X_i)\right\}\right]^2\right)\\
=& \ n^{-1}\sum^n_{i=1}\left[\text{var}\left(f^{(t)}(X_i)\right) +  \left\{E\left(f^{(t)}(X_i)\right)-\phi(X_i)\right\}^2\right]\\
=& \ n^{-1}\sum^n_{i=1}\text{var}\left(f^{(t)}(X_i)\right) +  n^{-1}\sum^n_{i=1}\left\{E\left(f^{(t)}(X_i)\right)-\phi(X_i)\right\}^2,
\end{align*}
where the second step comes from the property that $E(U^2)=\text{var}(U)+\{E(U)\}^2$
for any random variable $U$, and the third step dues to the fact $\phi(X_i)$ is taken as a constant.
\end{proof}

To prove Proposition \ref{prop4}, we use the following basic properties of matrices.

\begin{lemma}\label{lemma1} Let $A$ and $B$ be two symmetric matrices of the same dimension. Let $I$ be the identity matrix of the same dimension as $A$. Then the following results hold:
\begin{itemize}
\item[(a).] $A+B$ and $A-B$ are symmetric matrices;
\item[(b).] $A^k$ is symmetric for any integer $k$;
\item[(c).] If $A$ has eigenvalues $\{\lambda_1, \ldots, \lambda_n\}$ and corresponding normalized eigenvectors $\{Q_1, \ldots, Q_n\}$. Then
\begin{itemize}
\item[(i).] for any positive integer $k$, the eigenvalues of $A^k$ are $\{\lambda_1^k, \ldots, \lambda_n^k\}$ with $\{Q_1, \ldots, Q_n\}$ being the corresponding eigenvectors;
\item[(ii).] the eigenvalues of $A + I$ are $\{\lambda_1 + 1, \ldots, \lambda_n + 1\}$ with $\{Q_1, \ldots, Q_n\}$ being the corresponding eigenvectors;
\item[(iii).] the eigenvalues of $-A$ are $\{-\lambda_1, \ldots, -\lambda_n\}$ with $\{Q_1, \ldots, Q_n\}$ being the corresponding eigenvectors.
\end{itemize}
\end{itemize}
\end{lemma}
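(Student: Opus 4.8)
The plan is to verify each assertion directly from the defining identity of symmetry, $M^\top = M$, together with the eigenvalue equation $AQ_l = \lambda_l Q_l$. Every part is elementary, so the work reduces to a handful of transpose manipulations and one short induction.

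For part (a), I would use linearity of the transpose and the hypotheses $A^\top = A$, $B^\top = B$ to write $(A+B)^\top = A^\top + B^\top = A + B$, with the same computation (inserting a minus sign) establishing symmetry of $A-B$. For part (b), I would handle nonnegative integers by induction: since $(A^k)^\top = (A^\top)^k = A^k$, each power of a symmetric matrix is symmetric, and the base case $A^0 = I = I^\top$ is immediate. When $k$ is negative (which tacitly requires $A$ invertible), I would first note $(A^{-1})^\top = (A^\top)^{-1} = A^{-1}$, so $A^{-1}$ is symmetric, and then apply the positive-integer result to $A^{-1}$.

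For part (c), each of (i)–(iii) follows by applying the relevant matrix to an eigenvector $Q_l$ and reading off the eigenvalue, the eigenvector itself being unchanged in every case. Concretely, (i) follows by induction from $A^k Q_l = A^{k-1}(\lambda_l Q_l) = \lambda_l A^{k-1} Q_l = \cdots = \lambda_l^k Q_l$; (ii) from $(A+I)Q_l = \lambda_l Q_l + Q_l = (\lambda_l + 1) Q_l$; and (iii) from $(-A)Q_l = -\lambda_l Q_l$. The only point requiring a word of care, hardly an obstacle, is the negative-exponent case of part (b), where invertibility of $A$ must be assumed so that $A^k$ is even defined; since the lemma is later invoked only for exponents of the form $t+1 \ge 1$ on matrices such as $I-\Psi$, it is the nonnegative-integer case that is actually used in proving Proposition \ref{prop2} and Proposition \ref{prop4}.
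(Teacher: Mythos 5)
Your proof is correct. Note that the paper itself states Lemma~\ref{lemma1} without any proof, presenting it as a collection of basic matrix facts to be invoked in the proofs of Propositions~\ref{prop2} and \ref{prop4}, so there is no paper argument to compare against; your transpose manipulations, the induction on powers, and the direct eigenvector computations are exactly the standard justifications the paper leaves implicit. Your additional observation that part (b) with negative exponents tacitly requires $A$ to be invertible, and that only nonnegative powers of matrices such as $I-\Psi$ are ever needed downstream, is a point of care that the paper's bare statement glosses over.
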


\begin{proof}[Proof of Proposition \ref{prop4}] Assume that $\Psi$ is symmetric with eigenvalues $\{\lambda_1, \ldots,\lambda_n\}$ and corresponding normalized eigenvectors $\{Q_1, \ldots,Q_n\}$. By Proposition \ref{prop2} together with Lemma \ref{lemma1}, we have that $B^{(t)}$ is also symmetric, and its eigenvalues are $\left\{1-(1-\lambda_1)^{t+1}, \ldots,1-(1-\lambda_n)^{t+1}\right\}$ with corresponding eigenvectors $\{Q_1, \ldots,Q_n\}$. Consequently, we can decompose $B^{(t)}$ using orthonormal diagonalization as
\begin{equation}
\label{eq: bmd}
B^{(t)}=Q\Lambda^{(t)} Q^{-1},
\end{equation}
where $\Lambda^{(t)}\triangleq$ diag$\left\{1-(1-\lambda_k)^{t+1}:k=1,\ldots,n\right\}$ and the matrix $Q\triangleq(Q_1,\ldots,Q_n)$ is orthonormal, satisfying $QQ^\top=Q^\top Q=I$ and $Q^{-1}=Q^\top$.

Next, we examine the variance in (\ref{eq: avar}):
\begin{align}
    \text{var}(t,\Psi) & = n^{-1}\sum^n_{i=1}\text{var}\left\{f^{(t)}(X_i)\right\} \notag\\
    & = n^{-1}\text{tr}\left\{\text{cov}\left(\vec{f}^{(t)}\right)\right\}\notag\\
    & = n^{-1}\text{tr}\left\{\text{cov}\left(B^{(t)}\vec{Y}_1\right)\right\}\notag\\
    & = n^{-1}\text{tr}\left\{B^{(t)}\text{cov}\left(\vec{Y}_1\right)\left(B^{(t)}\right)^\top\right\}\notag\\
    & = n^{-1}\text{tr}\left\{Q\Lambda^{(t)} Q^{-1}\hat{\sigma}^2I\left(Q\Lambda^{(t)} Q^{-1}\right)^\top\right\}\notag\\
    & = \hat{\sigma}^2n^{-1}\text{tr}\left(Q\text{ diag}\left[\left\{1-(1-\lambda_k)^{t+1}\right\}^2:k=1,\ldots,n\right]Q^\top\right)\notag\\
    & = \hat{\sigma}^2n^{-1}\sum^n_{k=1}\left\{1-(1-\lambda_k)^{t+1}\right\}^2,\label{eq: var}
\end{align}
where the second step follows from the definition of the trace of the covariance matrix, the third step is from (\ref{eq: fvect}), the fourth step applies the property of scaling the covariance matrix when multiplied by a constant matrix, the fifth step uses (\ref{eq: bmd}) and the definition of $\hat{\sigma}^2$, the sixth step is derived from the properties of the trace and the fact that $Q^\top Q = I$, and the final step follows from the matrix product with a diagonal matrix and $Q^\top Q=I$.

Finally, we examine the squared bias, given in (\ref{eq: avar}):
\begin{align}
    \text{bias}^2(t,\Psi;\phi) & = n^{-1}\sum^n_{i=1}\left[E\left\{f^{(t)}(X_i)\right\}-\phi(X_i)\right]^2\notag\\
    & = n^{-1}\left\{E\left(B^{(t)}\vec{Y}_1\right)-\vec{\phi}\right\}^\top\left\{E\left(B^{(t)}\vec{Y}_1\right)-\vec{\phi}\right\}\notag\\
    & = n^{-1}\left\{\left(B^{(t)}-I\right)\vec{\phi}\right\}^\top\left\{\left(B^{(t)}-I\right)-\vec{\phi}\right\}\notag\\
    & = n^{-1}\left[\left\{Q\left(\Lambda^{(t)}-I\right)Q^{-1}\right\}\vec{\phi}\right]^\top\left[\left\{Q\left(\Lambda^{(t)}-I\right)Q^{-1}\right\}\vec{\phi}\right]\notag\\
    & = n^{-1}\vec{\phi}\ ^\top Q\left(\Lambda^{(t)}-I\right)^\top\left(\Lambda^{(t)}-I\right)Q^\top \vec{\phi}\notag\\
    & = n^{-1}\vec{\phi}\ ^\top Q\text{ diag}\left\{(1-\lambda_l)^{2t+2}:l=1,\ldots,n\right\}Q^\top \vec{\phi}\notag\\
    & = n^{-1}\sum^n_{l=1}\mu_l^2(1-\lambda_l)^{2t+2},\label{eq: bias2}
\end{align}
where the second step is due to (\ref{eq: fvect}), the third step is due to $E\left\{\hat{Y}_1(\mathcal{O}_i)\right\}=E\left(Y_i\right)=\phi(X_i)$, the fourth step uses (\ref{eq: bmd}), the fifth step comes from $Q^\top Q = I$ and $Q^{-1}=Q^\top$, and the last step is due to definition of $\mu$, given before Proposition \ref{prop4}.
\end{proof}

\begin{proof}[Proof of Corollary \ref{coro1}] This corollary follows directly from using the properties of diagonal matrices that have entries either 0 or 1.
\end{proof}

\begin{proof}[Proof of Proposition \ref{prop5}] By condition (C2), $0\leq(1-\lambda_l)<1$, and thus, $\text{bias}^2(t,\Psi;\phi)$ in (\ref{eq: bias2}) decays exponentially with increasing $t$ and $\text{var}(t,\Psi)$ in (\ref{eq: var}) exhibits an exponentially small increase as $t$ increases. Further, by (\ref{eq: var}), we have that
\begin{align*}
    \lim_{t\to\infty} \text{var}(t,\Psi)&=\lim_{t\to\infty}\hat{\sigma}^2n^{-1}\sum^n_{l=1}\left\{1-(1-\lambda_l)^{t+1}\right\}^2\\
    &=\hat{\sigma}^2n^{-1}\sum^n_{l=1}\left\{1-\lim_{t\to\infty}(1-\lambda_l)^{t+1}\right\}^2\\    &=\hat{\sigma}^2n^{-1}\sum^n_{l=1}1\\
    &=\hat{\sigma}^2,
\end{align*}
and by (\ref{eq: bias2}), we obtain that
\begin{align*}
    \lim_{t\to\infty} \text{bias}^2(t,\Psi;\phi)&=\lim_{t\to\infty}n^{-1}\sum^n_{l=1}\mu_l^2(1-\lambda_l)^{2t+2}\\
    &=n^{-1}\sum^n_{l=1}\mu_l^2\lim_{t\to\infty}(1-\lambda_l)^{2t+2}\\
    &=n^{-1}\sum^n_{l=1}\mu_l^20\\
    &=0.
\end{align*}
Therefore, by Proposition \ref{prop3},
\begin{align*}
    \lim_{t\to\infty} \text{MSE}(t,\Psi;\phi)&=\lim_{t\to\infty} \text{var}(t,\Psi)+\lim_{t\to\infty} \text{bias}^2(t,\Psi;\phi)\\
    &=\hat{\sigma}^2.
\end{align*}
\end{proof}

\begin{proof}[Proof of Proposition \ref{prop6}]
By propositions \ref{prop3} and \ref{prop4}, we obtain that
\begin{equation*}
    \text{MSE}(t,\Psi;\phi)=\hat{\sigma}^2n^{-1}\sum^n_{l=1}\left\{1-(1-\lambda_l)^{t+1}\right\}^2+n^{-1}\sum^n_{l=1}\mu_l^2(1-\lambda_l)^{2t+2}.
\end{equation*}
Considering this as a function of $t$ only, with other quantities treated as fixed, we consider the function:
\begin{equation*}
\psi(u)\triangleq \hat{\sigma}^2n^{-1}\sum^n_{l=1}\left\{1-(1-\lambda_l)^{u+1}\right\}^2 + n^{-1}\sum^n_{l=1}\mu_l^2(1-\lambda_l)^{2u+2},
\end{equation*}
which equals
\begin{align}
\psi(u)& =\hat{\sigma}^2n^{-1}\sum^n_{l=1}\left\{1-2(1-\lambda_l)^{u+1}+(1-\lambda_l)^{2u+2}\right\} + n^{-1}\sum^n_{l=1}\mu_l^2(1-\lambda_l)^{2u+2}\notag\\
&=n^{-1}\sum^n_{l=1}\left\{\left(\hat{\sigma}^2+\mu_l^2\right)(1-\lambda_l)^{u+1}-2\hat{\sigma}^2\right\}(1-\lambda_l)^{u+1} + \hat{\sigma}^2\notag\\
&=n^{-1}\sum^n_{k:\lambda_k<1}\left\{\left(\hat{\sigma}^2+\mu_k^2\right)(1-\lambda_k)^{u+1}-2\hat{\sigma}^2\right\}(1-\lambda_k)^{u+1} + \hat{\sigma}^2.\label{eq: lu}
\end{align}

By condition (C3), there exists at least one $k$ such that $\lambda_k<1$. Considering all those $k$ such that $\lambda_k<1$, we let $k_1,\ldots,k_{n_0}$ denote them, where $n_0\leq n$. For $j=1,\ldots, n_0$,
$$\lim_{u\to\infty}\left\{\left(\hat{\sigma}^2+\mu_{k_j}^2\right)\left(1-\lambda_{k_j}\right)^{u+1}-2\hat{\sigma}^2\right\}=-2\hat{\sigma}^2,$$
leading to
$$\lim_{u\to\infty}\left\{\left(\hat{\sigma}^2+\mu_{k_j}^2\right)\left(1-\lambda_{k_j}\right)^{u+1}-2\hat{\sigma}^2\right\}<-\hat{\sigma}^2.$$
Therefore, for $j=1,\ldots,n_0$, there exists $u_j$ such that
\begin{equation}
\label{sigmal<1}
\left(\hat{\sigma}^2+\mu_{k_j}^2\right)\left(1-\lambda_{k_j}\right)^{u_j+1}-2\hat{\sigma}^2<-\hat{\sigma}^2.
\end{equation}
Letting $t_0=\max(u_1,\ldots, u_{n_0})$, (\ref{sigmal<1}) yields that for $j=1,\ldots,n_0$,
\begin{equation*}
\left(\hat{\sigma}^2+\mu_{k_j}^2\right)\left(1-\lambda_{k_j}\right)^{t_0+1}-2\hat{\sigma}^2<-\hat{\sigma}^2.
\end{equation*}
Therefore, (\ref{eq: lu}) leads to
$$\psi(u)< -n^{-1}\sum^{n_0}_{j=1}\hat{\sigma}^2(1-\lambda_{k_j})^{t_0+1} + \hat{\sigma}^2<\hat{\sigma}^2,$$
and the conclusion follows.
\end{proof}

\begin{proof}[Proof of Theorem \ref{thm1}]
We calculate the derivative of $\psi(u)$ in (\ref{eq: lu}):
\begin{equation*}
\psi'(u)=2n^{-1}\sum^n_{k:\lambda_k<1}\left\{\left(\hat{\sigma}^2+\mu_k^2\right)(1-\lambda_k)^{u+1}-\hat{\sigma}^2\right\}(1-\lambda_k)^{u+1}\log(1-\lambda_k).
\end{equation*}

Now consider those $k$ with $\lambda_k<1$, condition (C4) leads to $\left(\hat{\sigma}^2+\mu_k^2\right)(1-\lambda_k)^{m_0}>\hat{\sigma}^2$. Since for any $u\in(0, m_0-1)$, $(1-\lambda_k)^{m_0}<(1-\lambda_k)^{u+1}$, which yields that $\left(\hat{\sigma}^2+\mu_k^2\right)(1-\lambda_k)^{u+1}>\hat{\sigma}^2$ for any $u\in(0, m_0-1)$. Therefore, $\psi'(u)<0$ for all $u\in(0,m_0-1)$. $\psi(u)$ is decreasing over $(0,m_0-1)$. By the continuity of $\psi(u)$ over $[0,m_0-1]$, we have that $\psi(0)>\psi(1)>\ldots>\psi(m_0-1)$, suggesting that the first $\lfloor m_0-1\rfloor$ iterations of the $L_2$Boost-CUT algorithm improves the MSE over the unboosted base learner algorithm (i.e., corresponding to $\psi(0)$).
\end{proof}

\begin{proof}[Proof of Theorem \ref{thm2}]
For a vector $u$, we use $(u)_i$, $\{u\}_i$, or $[u]_i$ to denote its $i$th element. For $i=1,\ldots, n$, let $b^{(t)}(X_i)\triangleq E\left\{f^{(t)}(X_i)\right\}-\phi(X_i)$ denote the bias term for subject $i$. Let $\vec{\epsilon}=(\hat{\epsilon}_1,\ldots,\hat{\epsilon}_n)$.

We examine the summands of the left-hand side of (\ref{eq: efmf*l}):
\begin{align}
    & \ E\left[\left\{f^{(t)}(X_i)-\phi(X_i)\right\}^q\right]\notag\\
    =& \ E\left(\left[f^{(t)}(X_i)-E\left\{f^{(t)}(X_i)\right\}+E\left\{f^{(t)}(X_i)\right\}-\phi(X_i)\right]^q\right)\notag\\
    = & \ E\left(\sum_{l=0}^q \binom{q}{l}\left[E\left\{f^{(t)}(X_i)\right\}-\phi(X_i)\right]^l\left[f^{(t)}(X_i)-E\left\{f^{(t)}(X_i)\right\}\right]^{q-l}\right)\notag\\
    = & \ E\left(\sum_{l=0}^q \binom{q}{l}\left\{b^{(t)}(X_i)\right\}^l\left[\left(B^{(t)}\hat{Y}_1\right)_i-E\left\{\left(B^{(t)}\hat{Y}_1\right)_i\right\}\right]^{q-l}\right)\notag\\
    = & \ E\left[\sum_{l=0}^q \binom{q}{l}\left\{b^{(t)}(X_i)\right\}^q\left\{\left(B^{(t)}\vec{\epsilon}\right)_i\right\}^{q-l}\right]\notag\\
    = & \ \sum_{l=0}^q \binom{q}{l}\left\{b^{(t)}(X_i)\right\}^lE\left[\left\{\left(B^{(t)}\vec{\epsilon}\right)_i\right\}^{q-l}\right]\notag\\
    = & \ \sum_{l=0}^q \binom{q}{l}\left\{b^{(t)}(X_i)\right\}^lE\left\{\left(\left[\left\{I-(I-\Psi)^{t+1}\right\}\vec{\epsilon}\right]_i\right)^{q-l}\right\}\notag\\
    = & \ \sum_{l=0}^q \binom{q}{l}\left\{b^{(t)}(X_i)\right\}^lE\left(\left[\left\{\vec{\epsilon}-(I-\Psi)^{t+1}\vec{\epsilon}\right\}_i\right]^{q-l}\right),\label{eq: thm2}
\end{align}
where the third step is due to (\ref{eq: fvect}); the fourth step is due to the definition of $\hat{\epsilon}_i$; the fifth step is derived under assumption that $X_i$ is treated as a constant; and the sixth step is due to Proposition \ref{prop2}.

Then, we examine $b^{(t)}(X_i)$:
\begin{align}
    b^{(t)}(X_i)&=E\left\{f^{(t)}(X_i)\right\}-\phi(X_i)\notag\\
    &=\left\{E\left(B^{(t)}\vec{Y}_1\right)-\vec{\phi}\right\}_i\notag\\
    & = \left\{\left(B^{(t)}-I\right)\vec{\phi}\right\}_i\notag\\
    & = \left[\left\{Q\left(\Lambda^{(t)}-I\right)Q^{-1}\right\}\vec{\phi}\right]_i\notag\\
    & = \left[Q\text{ diag}\left\{-(\lambda_l-1)^{t+1}:l=1,\ldots,n\right\}Q^\top \vec{\phi}\right]_i\notag\\
    & = -\sum^n_{k=1}Q_{ik}(\lambda_k-1)^{t+1}\mu_k\notag\\
    &=O\left(\exp(-C_b t)\right) \text{ as } t\to\infty,\label{eq: cb}
\end{align}
for some positive constant $C_b$, where the second step is due to (\ref{eq: fvect}), the third step is due to $E\left\{\hat{Y}_1(\mathcal{O}_i)\right\}=E\left(Y_i\right)=\phi(X_i)$, the fourth step is from (\ref{eq: bmd}), the fifth step comes from the definition of $\Lambda^{(t)}$, and the six step comes from the definition of $\mu=Q^\top\vec{u}$, with $Q_{ik}$ representing the $(i,k)$th element of $Q$.

Next, we examine $E\left(\left[\left\{\vec{\epsilon}-(I-\Psi)^{t+1}\vec{\epsilon}\right\}_i\right]^{q-l}\right)$:
\begin{align}
    & \ E\left(\left[\left\{\vec{\epsilon}-(I-\Psi)^{t+1}\vec{\epsilon}\right\}_i\right]^{q-l}\right)\notag\\
    = &\ E\left(\sum_{k=0}^{q-l} \binom{q-l}{k}\hat{\epsilon}_i^k\left\{(I-\Psi)^{t+1}\vec{\epsilon}\right\}_i^{q-l-k}\right)\notag\\
    = &\ E\left(\sum_{k=0}^{q-l} \binom{q-l}{k}\hat{\epsilon}_i^k\left[\left\{Q\text{diag}(1-\lambda_j:j=1,\ldots,n)Q^{-1}\right\}^{t+1}\vec{\epsilon}\right]_i^{q-l-k}\right)\notag\\
    = &\ E\left(\sum_{k=0}^{q-l} \binom{q-l}{k}\hat{\epsilon}_i^k\left[Q\text{diag}\left\{(1-\lambda_j)^{t+1}:j=1,\ldots,n\right\}Q^{-1}\vec{\epsilon}\right]_i^{q-l-k}\right)\notag\\
    = &\ E\left[\sum_{k=0}^{q-l} \binom{q-l}{k}\hat{\epsilon}_i^k\left\{\sum^n_{j=1}Q_{ij}(1-\lambda_j)^{t+1}\left(\sum^n_{u=1}Q_{ju}^{-1}\hat{\epsilon}_u\right)\right\}^{q-l-k}\right]\notag\\
    = &\ E(\hat{\epsilon}_i^{q-l})+O\left(\exp(-C_q t)\right)\text{ as } t\to\infty\label{eq: cq}
\end{align}
for some positive constant $C_q$.

Combining (\ref{eq: thm2}) with (\ref{eq: cb}) and (\ref{eq: cq}) yields
\begin{align*}
    &\ n^{-1}\sum^n_{i=1}E\left[\left\{f^{(t)}(X_i)-\phi(X_i)\right\}^q\right]\\
    = &\ n^{-1}\sum^n_{i=1}\sum_{l=0}^q \binom{q}{l}\{O\left(\exp(-C_b t)\right)\}^l\left\{E(\hat{\epsilon}_i^{q-l})+O\left(\exp(-C_q t)\right)\right\}\\
    =& \ E(\hat{\epsilon}_i^q)+O\left(\exp(-C t)\right)
\end{align*}
for some positive constant $C$.
\end{proof}

\begin{proof}[Proof of Theorem \ref{thm3}]
Let $\Psi$ denote the smoother matrix for the smoothing spline of degree $r$ and degrees of freedom $df$ (equivalently expressed in terms of tuning parameter $\lambda$). Given the tuning parameter $\lambda$, the eigenvalues of $\Psi$ are arranged in decreasing order and are written as:
\begin{equation}
  \label{eq: eigen}  \lambda_1=\ldots=\lambda_r=1,\quad\lambda_l=\frac{q_{l,n}}{\lambda+q_{l,n}}\text{ for }l=r+1,\ldots,n,
\end{equation}
where $q_{l,n}$ depends on $\Omega$ defined in Appendix \ref{app: rss} \citep{Utreras1983, BuhlmannYu2003, HastieTibshirani2009}.

By Assumption (A) of \cite{BuhlmannYu2003}, \cite{Utreras1988} showed that for large $n$, there exists a finite positive constant $a_0$ such that
\begin{equation}
\label{eq: qapp}
    q_{l,n}\approx a_0l^{-2r}.
\end{equation}
For $f\in\mathcal{W}^{(v)}_2[a,b]$, there exists a finite positive constant $M$ such that
\begin{equation}
\label{eq: mbdd}
n^{-1}\sum^n_{l=r+1}\mu^2_ll^{2v}\leq M.
\end{equation}
Let $\tilde{\lambda}=\lambda/a_0$. Then by (\ref{eq: qapp}), the $\lambda_l$ for $l=r+1,\ldots,n$ in (\ref{eq: eigen}) are
\begin{equation}
\label{eq: lambdaapprox}
    \lambda_l\approx\frac{l^{-2r}}{\tilde{\lambda}+l^{-2r}}=\frac{1}{\tilde{\lambda}l^{2r}+1}.
\end{equation}
Then (\ref{eq: bias2}) can be bounded by
\begin{align}
\text{bias}^2(t,\Psi;\phi) & = n^{-1}\sum^n_{l=r+1}\mu_l^2(1-\lambda_l)^{2t+2}\notag\\
&\approx n^{-1}\sum^n_{l=r+1}\mu_l^2l^{2v}\left(1-\frac{1}{\tilde{\lambda}l^{2r}+1}\right)^{2t+2}l^{-2v}\notag\\
&\leq \left\{\max_{l=r+1,\ldots,n}\left(1-\frac{1}{\tilde{\lambda}l^{2r}+1}\right)^{2t+2}l^{-2v}\right\} n^{-1}\sum^n_{l=r+1}\mu_l^2l^{2v}\notag\\
&\leq M\left\{\max_{l=r+1,\ldots,n}\left(1-\frac{1}{\tilde{\lambda}l^{2r}+1}\right)^{2t+2}l^{-2v}\right\} \notag\\
& \triangleq M \max_{l=r+1,\ldots,n}\exp{\left\{\eta(l)\right\}},\label{eq: bias2bdd}
\end{align}
with
\begin{equation}
\label{eq: eta}
\eta(l)=(2t+2)\log\left(1-\frac{1}{\tilde{\lambda}l^{2r}+1}\right)-2v\log(l),
\end{equation}
where the second step uses (\ref{eq: lambdaapprox}), and the fourth step uses (\ref{eq: mbdd}).
Taking the derivative of (\ref{eq: eta}) yields
\begin{align*}
\eta'(l)&=\frac{2r(2t+2)}{l(\tilde{\lambda}l^{2r}+1)}-\frac{2v}{l}\\
&=\frac{2r}{l(\tilde{\lambda}l^{2r}+1)}\left\{2t+2-\frac{v(\tilde{\lambda}l^{2r}+1)}{r}\right\}.
\end{align*}

Now, consider any positive integer $n_1$ with $r<n_1\leq n$, and
\begin{equation}
\label{eq: tineq}
t\geq \{v(\tilde{\lambda}n_1^{2r}+1)\}/(2r)-1.
\end{equation}
Then $\eta'(l)\geq0$ for any $0<l\leq n_1$, therefore, $\eta(l)$ is increasing and so is $\exp\{\eta(l)\}$ for $0<l\leq n_1$. Therefore, for any $r<l\leq n_1$, we have that
\begin{align}
    \exp\{\eta(l)\}& \leq \exp\{\eta(n_1)\}\notag\\
    & =\left(1-\frac{1}{\tilde{\lambda}n_1^{2r}+1}\right)^{2t+2}n_1^{-2v}\notag\\
    &\leq \left(1-\frac{1}{\tilde{\lambda}n^{2r}+1}\right)^{2t+2}n_1^{-2v}.\label{eq: expetabdd}
\end{align}
Applying (\ref{eq: expetabdd}) to (\ref{eq: bias2bdd}) gives that for $n_1$ and $t$ in (\ref{eq: tineq}),
and $t\geq \{v(\tilde{\lambda}n_1^{2r}+1)\}/(2r)-1$, we have that
\begin{align}
\text{bias}^2(t,\Psi;\phi) &\leq M \left(1-\frac{1}{\tilde{\lambda}n^{2r}+1}\right)^{2t+2}n_1^{-2v}\notag\\
&\leq M n_1^{-2v} \text{ as } n_1\to\infty,\label{eq: biasleq}
\end{align}
and hence, $\text{bias}^2(t,\Psi;\phi)$ is of order $O\left(n_1^{-2v}\right)$ as $n_1\to\infty$.

Now we examine (\ref{eq: var}). For any $n_1$ in (\ref{eq: tineq}), by (\ref{eq: eigen}),
\begin{align}
    \text{var}(t,\Psi)
    & = \frac{\hat{\sigma}^2}{n}\left[r+\sum^n_{l=r+1}\left\{1-(1-\lambda_l)^{t+1}\right\}^2\right]\notag\\
    & \leq \frac{\hat{\sigma}^2n_1}{n}+\frac{\hat{\sigma}^2}{n}\sum^n_{l=n_1+1}\left\{1-(1-\lambda_l)^{t+1}\right\}^2\notag\\
    &= O\left(\frac{n_1}{n}\right)+\frac{\hat{\sigma}^2}{n}\sum^n_{l=n_1+1}\left\{1-(1-\lambda_l)^{t+1}\right\}^2.\label{eq: varbdd}
\end{align}
By Bernoulli's inequality that $(1-a)^b\geq1-ab$ for $a\leq1$ and $b\geq0$, we obtain that
\begin{equation*}
    1-(1-\lambda_l)^{t+1}\leq 1-\{1-\lambda_l(t+1)\}=\lambda_l(t+1).
\end{equation*}
Therefore, for $t$ in (\ref{eq: tineq}), by (\ref{eq: lambdaapprox}), we obtain that
\begin{align}
& \ \frac{\hat{\sigma}^2}{n}\sum^n_{l=n_1+1}\left\{1-(1-\lambda_l)^{t+1}\right\}^2\notag\\
\leq &\  \frac{\hat{\sigma}^2}{n}\sum^n_{l=n_1+1}\lambda_l^2(t+1)^2\notag\\
\approx& \ \frac{\hat{\sigma}^2(t+1)^2}{n}\sum^n_{l=n_1+1}\frac{1}{\left(\tilde{\lambda}l^{2r}+1\right)^2}\notag\\
\leq &\ \frac{\hat{\sigma}^2(t+1)^2}{n}\sum^n_{l=n_1+1}\frac{1}{\left(\tilde{\lambda}l^{2r}\right)^2}\notag\\
\leq &\ \frac{\hat{\sigma}^2(t+1)^2}{n}\int^\infty_{n_1}\frac{1}{\left(\tilde{\lambda}u^{2r}\right)^2}du\notag\\
= &\ \frac{\hat{\sigma}^2(t+1)^2}{\tilde{\lambda}^2(4r-1)}\left(\frac{n_1^{1-4r}}{n}\right)\notag\\
\leq & \ O\left(\frac{n_1}{n}\right) \quad\text{as }n_1\to\infty.\label{eq: bdd2}
\end{align}

Applying (\ref{eq: biasleq}), (\ref{eq: varbdd}), and (\ref{eq: bdd2}) to Proposition \ref{prop3}, we obtain that
\begin{equation*}
\text{MSE}(t,\Psi;\phi)\leq O\left(\frac{n_1}{n}\right) + O\left(n_1^{-2v}\right) \quad\text{as }n_1\to\infty.
\end{equation*}
Treating the order as a function of $n_1$, it is minimized as $O\left(n^{-2v/(2v+1)}\right)$ by taking $n_1=O\left(n^{1/(2v+1)}\right)$. Therefore, for this $n_1$, $t$ in (\ref{eq: tineq}) can be taken as $t_n\triangleq O\left(n^{2r/(2v+1)}\right)$.
\end{proof}

\begin{proof}[Proof of Theorem \ref{thm5}] By Theorem 2.3 and the discussion on Page 102 of \cite{DevroyeGyorfi1996}, we have that
\begin{align*}
    n^{-1}\sum^n_{i=1}\Pr\left(f_s^{(t_n)}\neq Y_i\right)-\text{BR}&\leq2\sqrt{\text{MSE}(t,\Psi;f_s)}\\
    &=O\left(n^{-v/(2v+1)}\right)\quad\text{as }n\to\infty,
\end{align*}
where the last step is due to Theorem \ref{thm4}.
\end{proof}

\section{Discussions and Extensions}\label{app: de}
\subsection{Learning Rate for \texorpdfstring{$L_2$Boost}{l2b} and Our Proposed Algorithms}
In traditional boosting methods,
a learning rate, denoted as $\hat{\alpha}^{(t)}$, is introduced to control the contribution of $h^{(t)}$ at each iteration $t$ for $t=1,2,\ldots$, scaling how much it corrects the prediction error of $f^{(t)}$:
$$
f^{(t)}=f^{(t-1)}+\hat{\alpha}^{(t)}\hat{h}^{(t)},
$$
where
$$\hat{\alpha}^{(t)}=\argmin_{\alpha^{(t)}\in\mathbb{R}}\left\{n^{-1}\sum^n_{i=1}L\left(Y_i, f^{(t-1)}(X_i)+\alpha^{(t)}\hat{h}^{(t)}(X_i)\right)\right\}.
$$

However, in our algorithms, which are based on the $L_2$ loss function, the learning rate $\hat{\alpha}^{(t)}$ is inherently incorporated within the optimization process for $\hat{h}^{(t)}$. Specifically, when using the $L_2$ loss, the minimization problem for $\hat{\alpha}^{(t)}$ simplifies to
$$
\hat{\alpha}^{(t)}=\argmin_{\alpha^{(t)}\in\mathbb{R}}\left[ n^{-1}\sum^n_{i=1}\left\{Y_i- f^{(t-1)}(X_i)-\alpha^{(t)}\hat{h}^{(t)}(X_i)\right\}^2\right ],
$$
which is integrated naturally into the computation of $\hat{h}^{(t)}$ because of (\ref{eq: pdcutimp}):
$$
\hat{h}^{(t)}=\argmin_{h^{(t)}}\left[n^{-1}
\sum^n_{i=1}\left\{Y_i-f^{(t-1)}(X_i)-h^{(t)}(X_i)\right\}^2\right].
$$
As a result, Algorithm 1 does not
require an explicit learning rate parameter, as
$\hat{\alpha}^{(t)}$ is effectively determined as part of the optimization of
$\hat{h}^{(t)}$.

\subsection{Computational Complexity}\label{app: cc}
Our proposed $L_2$Boost-CUT method in Algorithm \ref{alg1} basically comprises two components: ICRF and boosting. The computational complexity of ICRF is $O(n^\gamma)$, with $1<\gamma\leq2$ \citep{ChoJewell2022}. For smoothing splines, when implemented efficiently, the complexity can be $O(n)$ \citep[][Chapter 5]{HastieTibshirani2009}. With $\tilde{t}$ boosting  iterations and smoothing splines as base learners, the total computational complexity is $O(\tilde{t}n)$. Therefore, the overall computational complexity of the $L_2$Boost-CUT method is $O(n^\gamma+\tilde{t}n)$.

\subsection{Possible Extensions}\label{app: pe}
The $L_2$Boost-CUT framework can be extended to $L_q$ loss functions for handling  interval-censored data, where  $q>2$ is  an integer, and the $L_q$ loss function is given by
\begin{align*}
L\left(g(Y_i), f(X_i)\right)&\propto\{g(Y_i)-f(X_i)\}^q \\
&= \sum_{k=0}^{q} \binom{q}{k}\{g(Y_i)\}^k\{-f(X_i)\}^{q-k},
\end{align*}
with $\{g(Y_i)\}^k$  replaced by its transformed form (\ref{eq: yktilde}), together with (\ref{eq: cutk}). Here, $k$ is extended to take any value in $\{1,\ldots,q\}$.

As shown in (\ref{eq: pdcutimp}),
the linear derivative of the $L_2$ loss with respect to its first argument suggests closely related increment terms in both $L_2$Boost-CUT and $L_2$Boost-IMP, thus often leading to similar results. However, this connection does not hold for the loss function $L_q$ when $q \geq 3$.  Consequently, the $L_q$Boost-CUT and $L_q$Boost-IMP methods likely yield more different results,
 where the $L_q$Boost-IMP method
is obtained by replacing the $L_2$ loss in the $L_2$Boost-IMP method with the $L_q$ loss.

While extending the $L_2$Boost-CUT method to accommodate the $L_q$ loss for $q \geq 3$ is straightforward, adapting it to any general loss function for constructing an adjusted loss function like $L_{\rm CUT}$ in (\ref{eq: cut}) to ensure Proposition \ref{prop1} presents significant challenges. In contrast, generalizing the $L_2$Boost-IMP method to  any loss function is straightforward by using imputed values determined by the transformed response in (\ref{eq: yktilde}).

For example, considering widely used loss functions, such as exponential loss function $L(u,v)=\exp(-uv)$ \citep{SchapireSinger1998} and the binomial deviance loss $L(u,v)=\log\{1 + \exp(-2uv)\}$ \citep{FriedmanHastie2000}, one may apply the censoring-unbiased transformation (\ref{eq: yktilde}) to these loss functions and adapt the proposed $L_2$Boost-IMP method to enable boosting algorithms like AdaBoost \citep{FreundSchapire1996} and LogitBoost \citep{FriedmanHastie2000} to handle interval-censored data. For XGBoost \citep{ChenGuestrin2016}, one may replace $l$ in (2) of \cite{ChenGuestrin2016} with the transformed unbiased loss function (\ref{eq: cut}). This extension would allow XGBoost to handle interval-censored data.

While $L_2$Boost-CUT can be extended to boosting frameworks with $L_q$ losses ($q \ge 3$) and $L_2$Boost-IMP can be extended to accommodate any loss function procedurally, establishing theoretical properties for these extensions is nontrival. Unlike the $L_2$Boost-CUT method, optimal learning rates would need to be estimated iteratively, complicating updates and disrupting the elegant form of the boosting operator in Proposition \ref{prop2}. Developing theoretical guarantees for these extensions presents substantial challenges and remains an open problem.

The principles behind our methods could potentially be adapted to other machine learning frameworks, such as deep learning or ensemble methods. Exploring this adaptation could be an interesting avenue for future research. Furthermore, while Theorem 1 demonstrates that the $L_2$Boost-CUT and $L_2$Boost-IMP algorithms consistently outperform unboosted weak learners in terms of MSE, this result is established under the assumption of weak base learners (as stated in Condition (C4)). Quantifying the extent of improvement provided by boosting over unboosted learners and investigating how this improvement depends on the form of weak learners, particularly in the context of interval-censored data, would be valuable directions for future research. 

\section{Details of Experiments and Data in Section \ref{sec: eda}}
\subsection{Data Splitting and Evaluation Metrics}\label{app: DEM}
The dataset is divided into $\mathcal{O}^\text{TR}\triangleq\{\{Y_i, X_i, \phi(X_i), u_{i,j}\}: i=1, \ldots,n_1; j=1,\ldots,m\}$ and $\mathcal{O}^\text{TE}\triangleq\{\{Y_i, X_i, \phi(X_i), u_{i,j}\}: i=n_1+1, \ldots,n_1+n_2;j=1,\ldots,m\}$ in a $4:1$ ratio, where $n_1=400$ and $n_2=100$. Take $\mathcal{O}^\text{TR}_\text{IC}\triangleq\{\{Y_i,X_i,u_{i,j}\}: i=1, \ldots,n_1;j=1,\ldots,m\}$ as training data and $\mathcal{O}^\text{TE}_\text{IC}\triangleq\{\{X_i, \phi(X_i)\}: i=n_1+1,\ldots,n_1+n_2\}$ as test data. The training data $\mathcal{O}^\text{TR}_\text{IC}$ are used to obtain an estimated $\hat{f}_\text{c}$ in (\ref{eq: erf}), denoted $\hat{f}_{n_1}^*$, using the proposed methods introduced in Section \ref{sec: p}, while the test data $\mathcal{O}^\text{TE}_\text{IC}$ are employed to evaluate the performance of $\hat{f}_{n_1}^*$. For classification tasks, $\hat{f}^*_{n_1}\in[-1,1]$, derived from the $L_2$WCBoost-based algorithm.

For regression tasks, the first metric represents the sample-based maximum absolute error (SMaxAE), defined as the infinity norm of the difference between exponential of the  estimate and exponential of the true function with respect to the sample:
\begin{equation*}
\left\|\hat{f}_{n_1}^*-\phi\right\|_{\infty} = \max_{X_i: i=n_1+1, \ldots, n_1+n_2}\left|\exp\left\{\hat{f}_{n_1}^*(X_i)\right\}-\exp\left\{\phi(X_i)\right\}\right|,
\end{equation*}
and the second metric reports the sample-based mean squared error (SMSqE), defined as:
\begin{equation*}
\left\|\hat{f}_{n_1}^*-\phi\right\|_2 = n_2^{-1}\sum^{n_1+n_2}_{i=n_1+1}\left[\exp\left\{\hat{f}_{n_1}^*(X_i)\right\}-\exp\left\{\phi(X_i)\right\}\right]^2.
\end{equation*}
The smaller these metrics, the better the performance of the estimator $\hat{f}^*_{n_1}$. In addition, we consider the sample-based Kendall's $\tau$ (SKDT), defined as
\begin{equation*}
\left\|\hat{f}_{n_1}^*-\phi\right\|_{\tau}=\frac{n^\text{C}-n^\text{D}}{n_2(n_2-1)/2},
\end{equation*}
where $n^\text{C}$ and $n^\text{D}$ denote the numbers of concordant and discordant pairs, respectively. For $i,i'=n_1+1, \ldots, n_1+n_2$, a pair is called concordant if $\hat{f}_{n_1}^*(X_i)>\hat{f}_{n_1}^*(X_{i'})$ and $\phi(X_i)>\phi(X_{i'})$, and discordant if $\hat{f}_{n_1}^*(X_i)\leq\hat{f}_{n_1}^*(X_{i'})$ and $\phi(X_i)>\phi(X_{i'})$. This metric evaluates the concordance between $\hat{f}^*_{n_1}$ and its target function $\phi$ from a different perspective. The bigger this metric, the better the performance of the estimator $\hat{f}^*_{n_1}$.

For classification tasks, we write $\hat{f}^*_{n_1}$ as $\hat{f}^*_{n_1,s}(X_i)$ explicitly to show the dependence of the estimates on time $s$. We predict the true survival status at time $s$, with $s=1,2,3$, or 4, based on using whether $\hat{f}^*_{n_1,s}(X_i)$ is greater than 0 for $i=n_1+1, \ldots,n_1+n_2$. Specifically, for $i=n_1+1, \ldots,n_1+n_2$, if $\hat{f}^*_{n_1,s}(X_i)>0$, we predict the survival status at time $s$ as $1$; otherwise, we predict it as $-1$. We evaluate classification performance by using the test data $\mathcal{O}^\text{TE}_\text{IC}$ to calculate the {\it sensitivity}, defined as the proportion of correctly identified positive cases among the true positive cases, indicated by $\{i:i=n_1+1, \ldots,n_1+n_2;\exp\{\phi(X_i)\}>s\}$, and the {\it specificity}, defined as the proportion of correctly identified negative cases among the true negative cases, indicated by $\{i:i=n_1+1, \ldots,n_1+n_2;\exp\{\phi(X_i)\}\leq s\}$. Sensitivity and specificity assess classification results from different perspectives. The larger these metrics, the better the performance of the estimator $\hat{f}^*_{n_1,s}$.

\subsection{Learning methods in Experiments}\label{app: lme}
Regardless of the value of $n$, we set $w=5$ for Algorithm \ref{alg1} as a stopping criterion, and take cubic smoothing splines as base learners with $r=v=2$ in Section \ref{subsec: blsf}. Suggested by Theorem \ref{thm1}, we take weak base learners. \cite{BuhlmannYu2003} showed that the shrinkage strategy \citep{Friedman2001} can make base learners weaker by multiplying a small constant $u$ to the smoother matrix $\Psi$. In other words, for a small constant $u$, the linear smoother learner with smoother matrix $\Psi_u=u\times\Psi$ is weaker than the linear smoother learner with smoother matrix $\Psi$. Thus, as in \cite{BuhlmannYu2003}, for (\ref{eq: ss}), we set $df=20$, and replace $\Psi$ in (\ref{eq: ht1ut}) with $\Psi_u$ and $u=0.01$. The shrinkage strategy is equivalent to replacing Line 5 of Algorithm \ref{alg1} with
$$f^{(t+1)}=f^{(t)}+u\hat{h}^{(t+1)}.$$

For ICRF, we specify the splitting rule as GWRS, described in Appendix \ref{app: eICRF}, adopt an exploitative survival prediction approach, use a Gaussian kernel with bandwidth $h=cn^{-1/5}_{\min}$, and take $K = 5$ and $D = 300$. Here, $c$ is the inter-quartile range of the NPMLE, and $n_{\min}$ is the minimum size of terminal nodes, set to 6.

\subsection{Computing Time Comparison}\label{app: ctc}
To access computational complexity, we record the computing time for one experiment by applying the five methods to synthetic data generated from the lognormal AFT model with $\sigma=0.25$ in Section \ref{sec: eda}. Computing times (in second) are reported in Table \ref{tab: ct} for three sample sizes, where we separately display computing time for implementing ICRF from that for implementing both unbiased transformation and boosting (UT + B). The implementation of the proposed methods requires a lot longer time than that for the O, R, and N methods, as expected.

\begin{table}[h!]
    \centering
    \renewcommand{\arraystretch}{1.2}
    \begin{tabular}{c c c c c c c c c}
    \hline
         \multirow{2}{*}{\diagbox{Size}{Method}} & O & R & N & \multicolumn{2}{c}{CUT} & &\multicolumn{2}{c}{IMP} \\
         \cline{5-6} \cline{8-9}
            & & & & ICRF & UT + B & &ICRF & UT + B\\
         \hline
        500 & 1.037 & 0.969 & 1.053 & 493.462 & 3.262 & & 494.319 & 2.611\\
        1000 & 2.200 & 2.190 & 2.148 & 2633.012 & 11.058 & & 2668.756 & 7.935\\
        1500 & 4.671 & 4.756 & 4.564 & 8709.231 & 18.440 & & 8725.549 & 14.509\\
        \hline
    \end{tabular}
    \caption{\it Computing times (in second) using a cluster with 1 node, where UT and B represent the procedure corresponding to unbiased transformation and boosting, respectively.}
    \label{tab: ct}
\end{table}

\subsection{\texorpdfstring{Signal Tandmobiel\textsuperscript{\textregistered} data}{std} and Bangkok HIV Data}\label{app: DA}
The first dataset, called the Signal Tandmobiel\textsuperscript{\textregistered} data, arose from a longitudinal prospective oral health study, conducted in the Flanders region of Belgium from 1996 to 2001. This study initially sampled 4,430 first-year primary school children who underwent annual dental examinations performed by trained dentists. Further details can be found in \cite{VanobbergenMartens2000}. Our analysis focuses on a subset of the data with 3737 subjects, whose features were fully observed, specifically examining the emergence times of the permanent upper left first premolars (tooth 24 in European dental notation). Following \cite{KomarekLesaffre2009}, we set the origin time at age 5, as permanent teeth do not emerge before this age. The response variable $Y_i$ represents the emergence time of tooth 24 since age 5 for child $i$. Because the dental examinations take place annually, the observed $Y_i$ is inherently interval censored by design. Among the participants, 1611 children are right-censored and others are truly interval-censored. The features are defined as follows: $X_{i1}=0$ if the child is a girl and 1 otherwise; $X_{i2}=0$ if the primary predecessor was sound, and 1 if it was decayed, missing due to caries, or filled; and $X_{i3}$ represents the scaled age at which the child started brushing teeth.

The second dataset, called Bankok HIV data, includes the information on the incidence of Human Immunodeficiency Virus (HIV) infection. To identify associated risk factors to guide prevention efforts, the Bangkok Metropolitan Administration conducted a cohort study \citep{VanichseniKitayaporn2001} in Bangkok from 1995 to 1998. The study enrolled 1124 participants who were HIV negative at the time of enrollment. These participants were repeatedly tested for HIV at approximately four-month intervals over the study period. The response variable $Y_i$ represents the time when participant $i$ was first tested positive for HIV. Among those participants, 991 were right-censored, meaning they were never tested positive during the study period, while the remaining were interval-censored, meaning the exact time of seroconversion is only known to occur between two testing intervals. The features are defined as follows: $X_{i1}=0$ if the participant is a female and 1 otherwise; $X_{i2}=0$ if the participant has a history of injecting drug use and 1 otherwise; and $X_{i3}$ represents the scaled age at enrollment.

\section{Additional Experiments}\label{app: ae}
To comprehensively evaluate the performance of the proposed methods, here we conduct additional experiments to examine how their effectiveness may be influenced by various factors, including sample sizes, data generation models, noise levels, and different implementation ways of ICRF. The details are presented as follows.

\subsection{Alternative Sample size and Data Generation Model}\label{app: ers2}
To assess how the sample size may affect the performance of our methods, we conduct additional experiments in the same way as in Section \ref{sec: eda} but replace $n=500$ by $n=1000$. Results of predicting survival times are reported in Figure \ref{fig: 1000}, demonstrating the same patterns observed for Figure \ref{fig: es1st}.

\begin{figure}[h!]
    \centering
    \includegraphics[width=\linewidth]{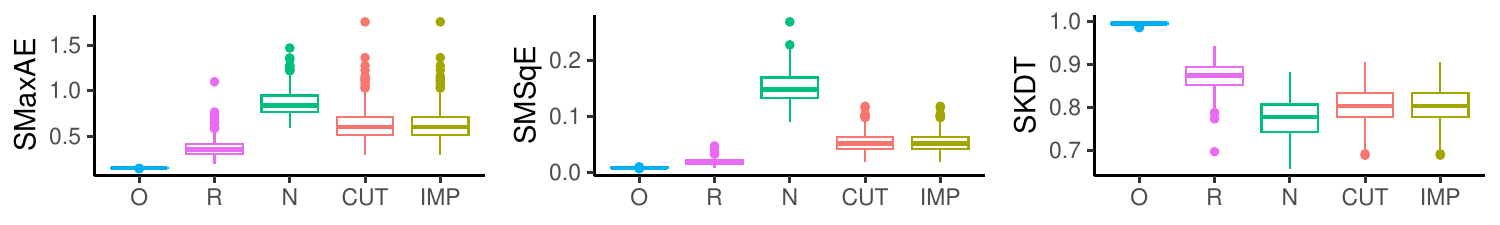}
    \caption{\it Experiment results of SMaxAE (left), SMSqE (middle), and SKDT (right) for predicting survival times with $n=1000$, for the lognormal AFT model with $\sigma=0.25$. O, R, N, CUT, and IMP represent the oracle, reference, naive, CUT, and IMP methods, respectively, as described in Section \ref{sec: eda}.}
    \label{fig: 1000}
\end{figure}

In contrast to the experiment setup in Section \ref{sec: eda}, we take $p=5$, $\tau=12$, $m=5$, $\phi(X_i)=\beta_0|X_{1,i}-0.5|+\beta_1X_{3,i}^3+\beta_2\sin(\pi X_{5,i})$, with $\beta_0=1$, $\beta_1=0.8$, and $\beta_2=0.8$, where $X_{2,i}$ and $X_{4,i}$ are inactive input variables for model (\ref{eq: regression}), but they are still involved in the boosting procedure. Figure \ref{fig: psts2} summarizes the values for the metrics, SMaxAE, SMSqE, and SKDT, across 300 experiments for predicting survival times. The N method results in the largest SMSqE yet the smallest SKDT, though the SMaxAE for the N and proposed methods are similar. Figure \ref{fig: psss2} reports the values for two classification metrics, sensitivity and specificity, across 300 experiments for predicting survival status. The N method produces the worst results at $s=2$ and $s=3$, with the lowest specificity at $s=4$. In contrast, the proposed methods only show reduced sensitivity at $s=4$.

\begin{figure}[h!]
    \centering
    \includegraphics[width=\textwidth]{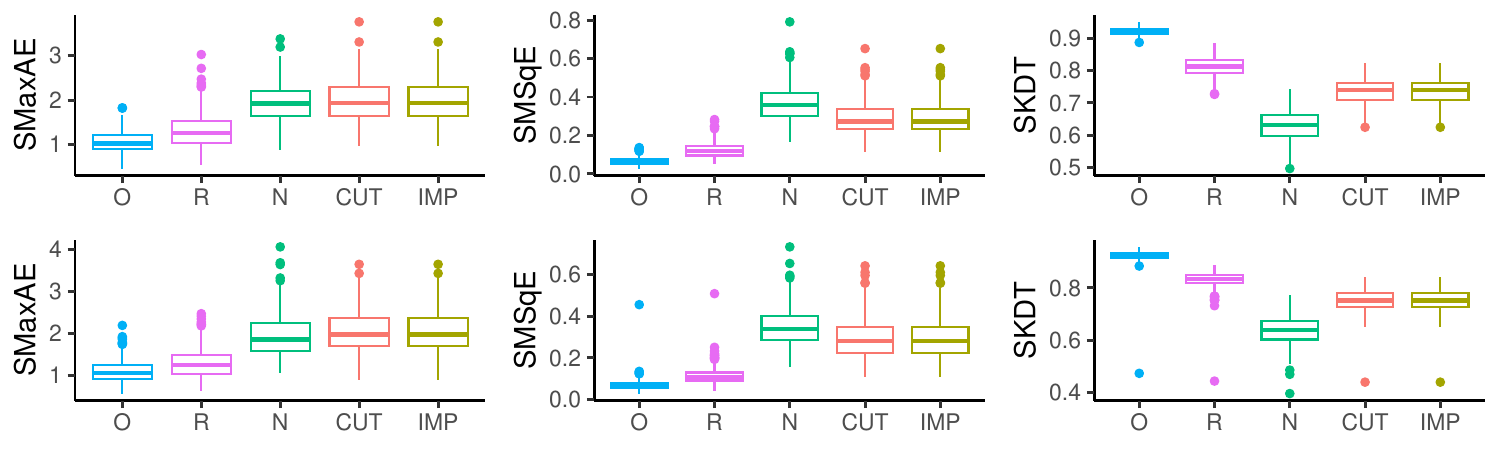}
    \vspace{-6mm}
    \caption{\it Experiment results of SMaxAE (left), SMSqE (middle), and SKDT (right) for predicting survival times with different survival models. The top and bottom rows correspond to the lognormal AFT and loglogistic AFT models, respectively. O, R, N, CUT, and IMP represent the oracle, reference, naive, CUT, and IMP methods, respectively, as described in Section \ref{sec: eda}.}
    \label{fig: psts2}
\end{figure}

\begin{figure}[h!]
    \centering
    \includegraphics[width=\textwidth]{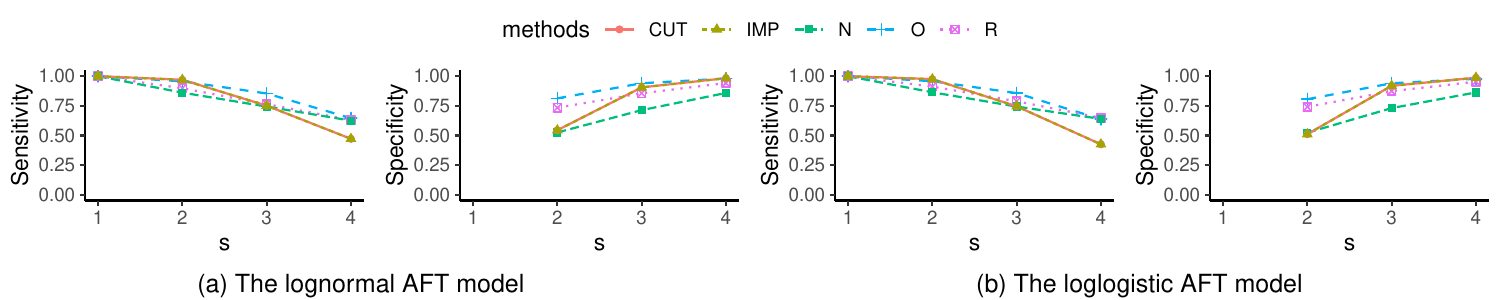}
    \vspace{-6mm}
    \caption{\it Experiment results of predicting survival status with different survival models. O, R, N, CUT, and IMP represent the oracle, reference, naive, CUT, and IMP methods, respectively, as described in Section \ref{sec: eda}. Specificity plots for $s=1$ are omitted because no negative cases exist.}
    \label{fig: psss2}
\end{figure}

\subsection{Noise Level Comparison and Cox Model}\label{app: nlccm}
To access the sensitivity of our methods to the noise level of data, in addition to $\sigma=0.25$ considered in Section \ref{sec: eda} for model (\ref{eq: regression}) with $\epsilon\sim N(0,\sigma^2)$, we further consider $\sigma=0.5, 1$, or 1.5. Increasing $\sigma$ makes survival times more variable, thus spanning over a wider interval. Consequently, $\tau$ is set as 15, 80, or 100 to generate interval-censored survival times. The results are reported in Figure \ref{fig: nl} in the same manner as for Figure \ref{fig: es1st} in Section \ref{sec: eda}. The O, R, N, CUT and IMP methods reveal the same patterns as those observed in Figure \ref{fig: es1st}. The N method performs the worst, the O method performs the best, and our proposed CUT and IMP methods outperform the N method. When the noise level $\sigma$ is more substantial, the differences between our methods and the N method are considerably enlarged, and the performance of our methods becomes very close to, or nearly the same as, that of the O and R methods.

We further consider two additional methods here. The first method, denoted as YAO, employs an existing ensemble approach for interval-censored data: the conditional inference survival forest method proposed by \citep{YaoFrydman2021}, where predicted survival times are provided by the R package {\it ICcforest}. The results from the YAO method are in good agreement with those produced from our proposed CUT and IMP methods. However, the SKDT values from the YAO method appear slightly more variable than those from our methods.

In the second method, denoted as COX, we manipulate synthetic data to create right-censored data $\{\{\tilde{Y_i}, \Delta_i, X_i\}: i=1,\ldots,n\}$, with pseudo-survival time $\tilde{Y_i}$ defined as in Section \ref{sec: eda} and an artificially introduced right-censoring indicator $\Delta_i$. Here, we consider the best-case scenario where no subject is censored, with $\Delta_i$ set to 1 for all $i=1,\ldots,n$. We then fit the data with the Cox model, where predicted survival times are taken as the medians of the estimated survivor functions by extracting the ``median'' column of the survfit.coxph object in the R package {\it survival}. While the results from the COX method are not directly comparable to the other six methods, which are primarily nonparametric-based, it is interesting that the COX method can sometimes outperform the R method, especially when $\sigma$ is small with value 0.25, as shown by the SMaxAE and SKDT values. However, when $\sigma$ is large with value 0.5, 1 or 1.5, the COX method does not outperform the O and R methods or our proposed CUT and IMP methods, as shown by the SMaxAE and SMSqE values. Nevertheless, its SKDT values remain better than other methods, except for the O method; this may be attributed to the absence of censoring in the COX method. Suggested by the SMSqE values, the COX method can even perform worse than the N method when $\sigma$ is not small.

\begin{figure}[h!]
    \centering
    \includegraphics[width=\linewidth]{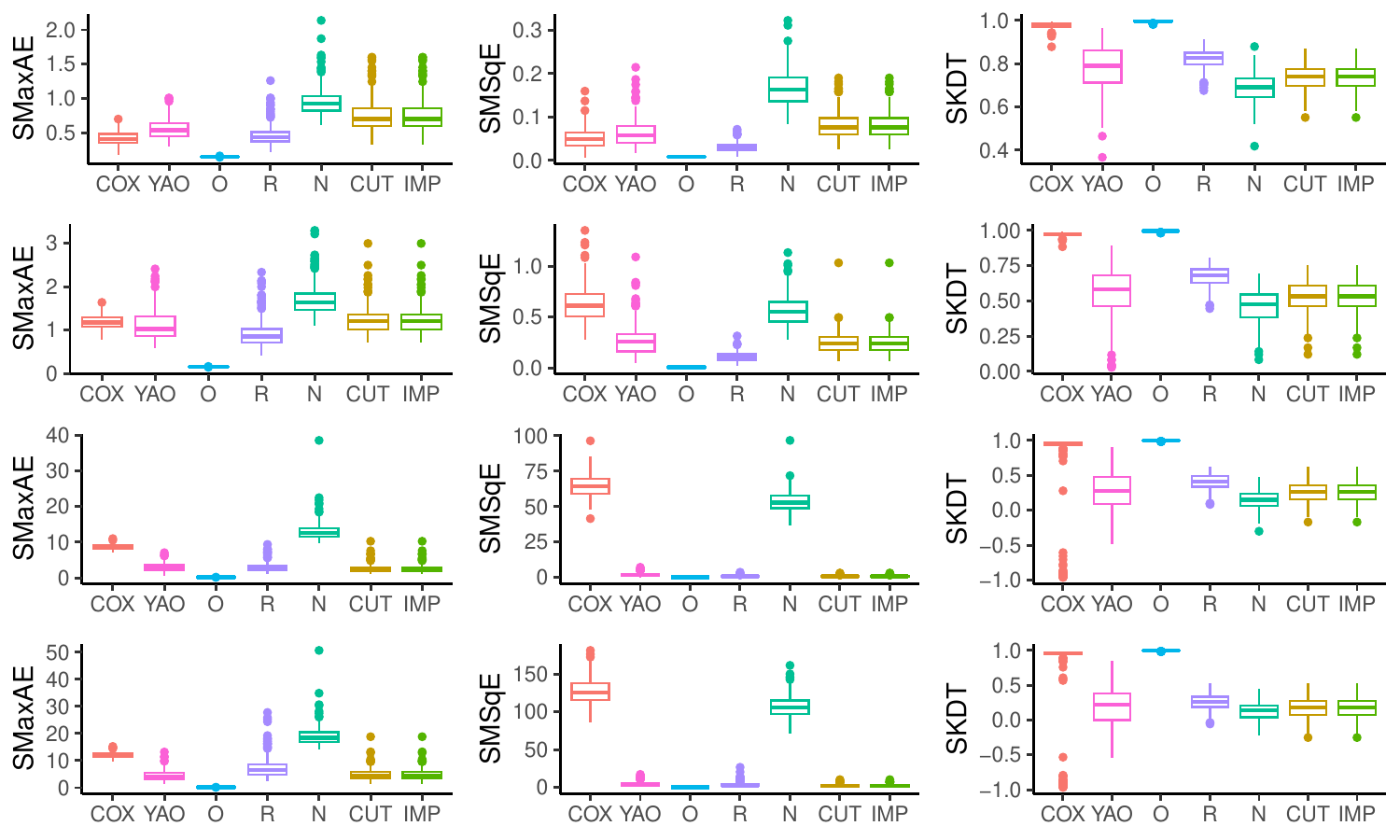}
    \vspace{-6mm}
    \caption{\it Experiment results of SMaxAE (left), SMSqE (middle), and SKDT (right), for predicting survival times with varying noise levels. From the top to bottom, the four rows correspond to the lognormal AFT model with $\sigma=0.25$, 0.5, 1, and 1.5, respectively. COX is the procedure of fitting the Cox model to pseudo-survival times; YAO represent the method of \cite{YaoFrydman2021}; O, R, N, CUT, and IMP represent the oracle, reference, naive, CUT, and IMP methods, respectively, as described in Section \ref{sec: eda}.}
    \label{fig: nl}
\end{figure}

\subsection{Survivor Function Estimator Comparison}
As detailed in Appendix \ref{app: eICRF}, the implementation of our methods employs ICRF to provide consistent estimation of the survivor function, and we take $K=5$ and $D=300$ to run experiments in Section \ref{sec: eda} (as well as those additional experiments in Appendix \ref{app: ae}). To see how different choices of $K$ and $D$ may affect the performance of the proposed methods, here we implement the CUT method to synthetic data generated in Section \ref{sec: eda} using ICRF with different values of $K$ and $D$, where we set $K=1$ and $D=1$; $K=1$ and $D=100$; $K=1$ and $D=300$; and $K=3$ and $D=300$; and we denote the resulting CUT methods as CUT1, CUT2, CUT3, and CUT4, respectively. In addition, we implement ICRF using quasi-honest survival prediction method, as discussed in Appendix \ref{app: eICRF}, and the comprehensive greedy algorithm \citep{Breiman2001}, and we denote them as CUT5 and CUT6, respectively. We report the results in Figure \ref{fig: se} in the same manner as for Figure \ref{fig: es1st}. The results demonstrate that the CUT method with $K=5$ and $D=300$ (the one with heading CUT in Figure \ref{fig: se}) tends to perform the best, although all other methods produce fairly close results.

\begin{figure}[h!]
    \centering
    \includegraphics[width=\linewidth]{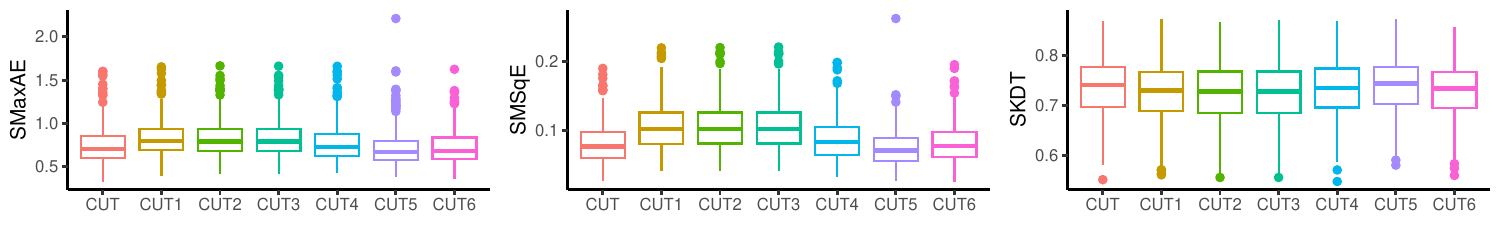}
    \vspace{-6mm}
    \caption{\it Experiment results of SMaxAE (left), SMSqE (middle), and SKDT (right) for predicting survival times with varying ICRF estimators.}
    \label{fig: se}
\end{figure}

\section{Convergence Analysis of Experiments}
This appendix assesses the convergence of the proposed methods. For $f\in\mathcal{F}$, let $\hat{R}(f)$ denote the approximation of the empirical risk function. In Figures \ref{fig: rhatst} - \ref{fig: rhatss}, we plot the values of $\hat{R}\left(f^{(t)}\right)$ and $\hat{R}\left(f_s^{(t)}\right)$ against the number of iterations $t$ for the experiments in Section \ref{sec: eda}. The results clearly show that $\hat{R}\left(f^{(t)}\right)$ and $\hat{R}\left(f_s^{(t)}\right)$ approach zero as $t$ increases, confirming the convergence of the proposed algorithms.

\begin{figure}[h!]
    \centering
    \includegraphics[width=\textwidth]{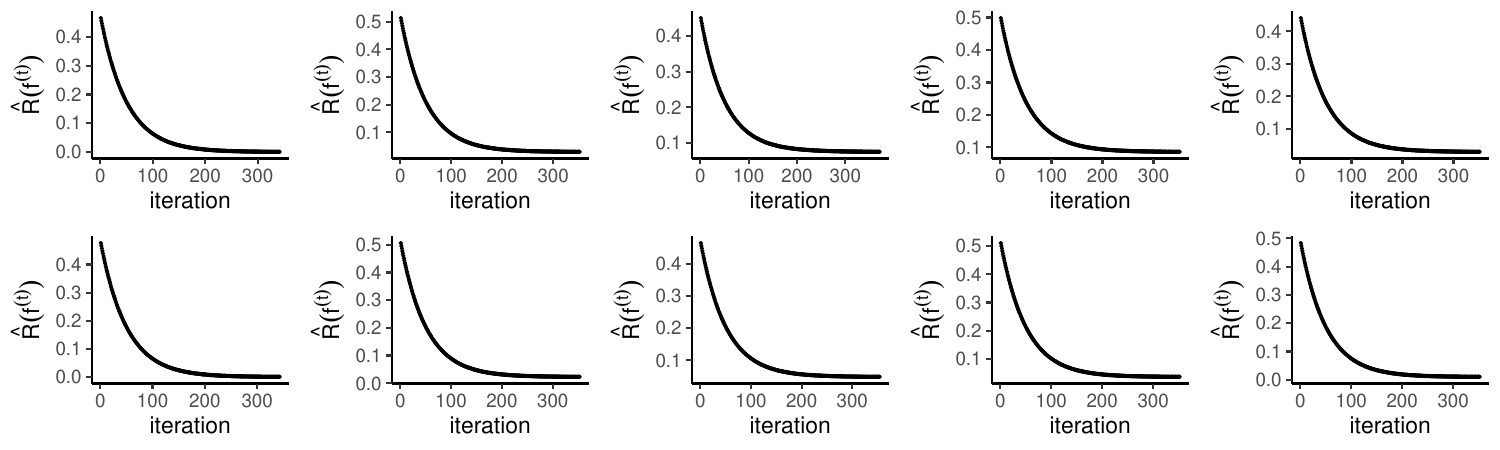}
    \vspace{-6mm}
    \caption{\it Predicting survival times: Plots of $\hat{R}\left(f_s^{(t)}\right)$ versus the number of iterations. The top to bottom rows correspond to the lognormal AFT and loglogistic AFT models in Section \ref{sec: eda}, respectively. From left to right, the columns represent the O, R, N, CUT, and IMP methods, respectively. Here, O, R, N, CUT, and IMP represent the oracle, reference, naive, CUT, and IMP methods, respectively, as described in Section \ref{sec: eda}.}
    \label{fig: rhatst}
\end{figure}

\begin{figure}[h!]
    \centering
    \includegraphics[width=\textwidth]{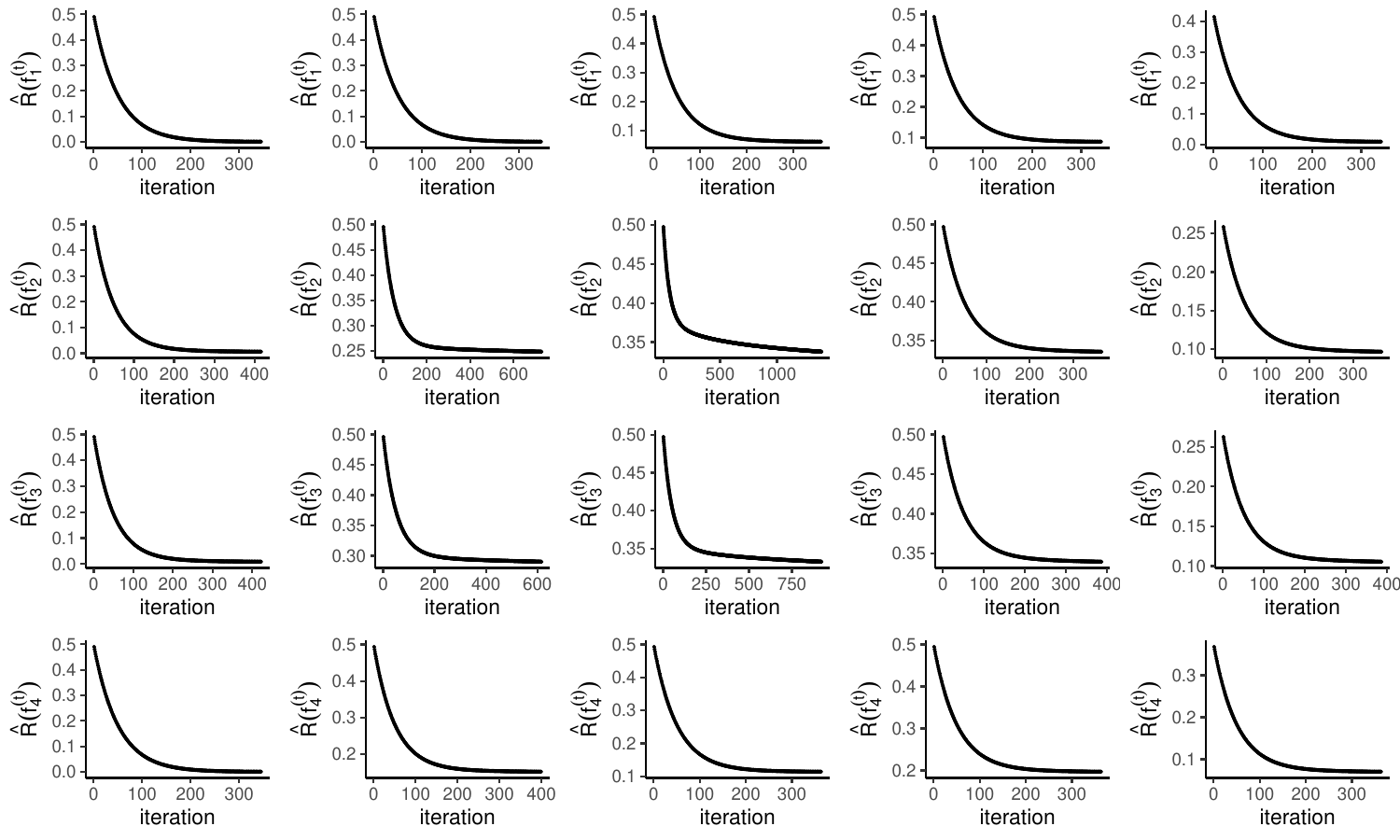}
    \vspace{-6mm}
    \caption{\it Predicting survival status -- the lognormal AFT model in Section \ref{sec: eda}: Plots of $\hat{R}\left(f_s^{(t)}\right)$ versus the number of iterations. From top to bottom, each row corresponds to $s=1,2,3$, and 4, respectively. From left to right, the columns correspond to the O, R, N, CUT, and IMP methods, respectively. Here, O, R, N, CUT, and IMP represent the oracle, reference, naive, CUT, and IMP methods, respectively, as described in Section \ref{sec: eda}.}
\end{figure}

\begin{figure}[h!]
    \centering
    \includegraphics[width=\textwidth]{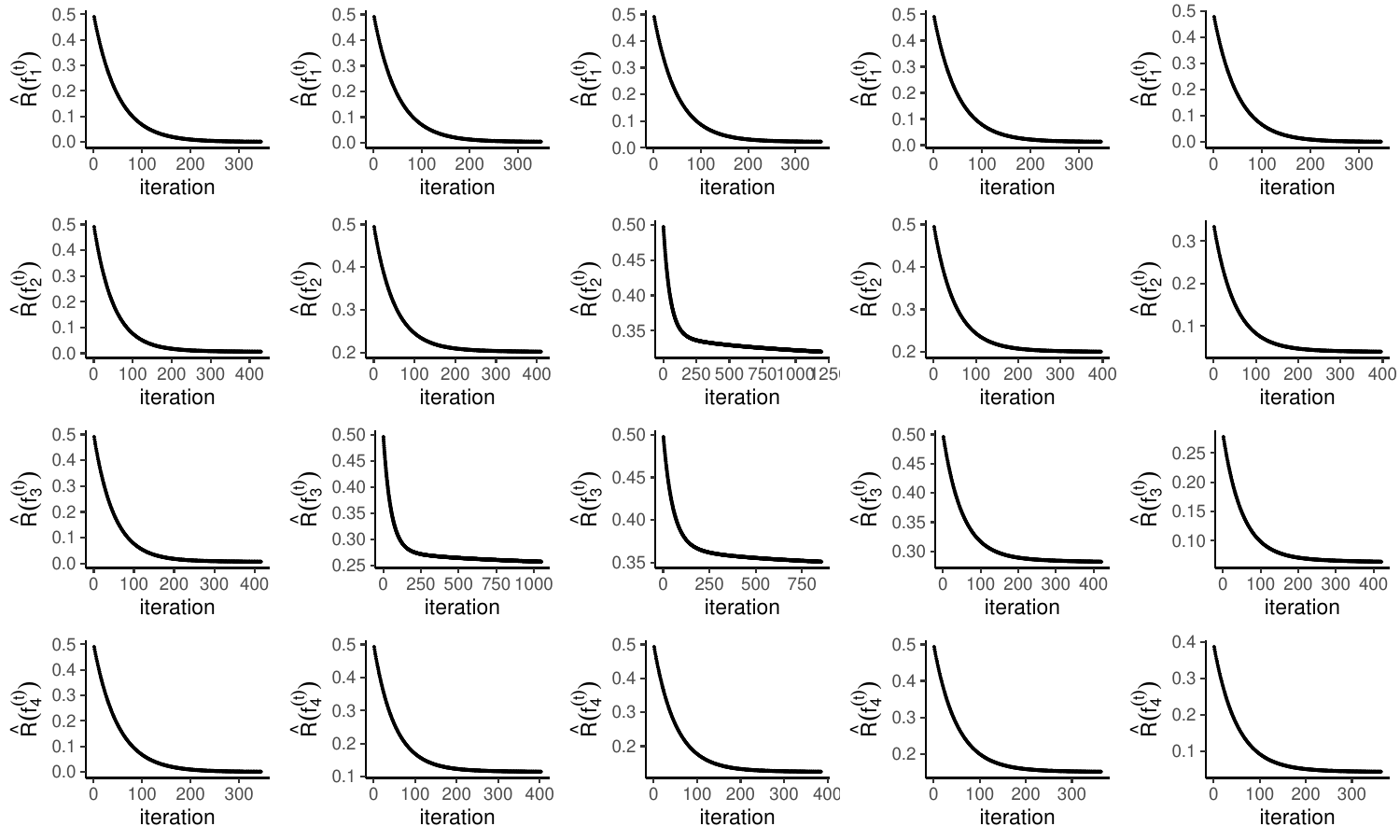}
    \vspace{-6mm}
    \caption{\it Predicting survival status -- the loglogistic AFT model in Section \ref{sec: eda}: Plots of $\hat{R}\left(f_s^{(t)}\right)$ versus the number of iterations. From top to bottom, each row corresponds to $s=1,2,3$, and 4, respectively. From left to right, the columns correspond to the O, R, N, CUT, and IMP methods, respectively. Here, O, R, N, CUT, and IMP represent the oracle, reference, naive, CUT, and IMP methods, respectively, as described in Section \ref{sec: eda}.}
    \label{fig: rhatss}
\end{figure}
\end{document}